\def\eqref#1{equation~\ref{#1}}
\def\1{\bm{1}}
\DeclareMathAlphabet{\mathsfit}{\encodingdefault}{\sfdefault}{m}{sl}
\SetMathAlphabet{\mathsfit}{bold}{\encodingdefault}{\sfdefault}{bx}{n}
\newcommand{\R}{\mathbb{R}}
\theoremstyle{plain}
\newtheorem{theorem}{Theorem}[section]
\newtheorem{proposition}[theorem]{Proposition}
\newtheorem{lemma}[theorem]{Lemma}
\newtheorem{corollary}[theorem]{Corollary}
\theoremstyle{definition}
\newtheorem{definition}[theorem]{Definition}
\theoremstyle{remark}
\newtheorem{remark}[theorem]{Remark}
\title{Graph Fourier Neural ODEs: Modeling \\ Spatial-temporal Multi-scales in Molecular Dynamics}
\author{%
  \name Fang Sun \email fts@cs.ucla.edu \\
        University of California, Los Angeles
  \AND
  \name Zijie Huang \email zijiehuang@cs.ucla.edu \\
        University of California, Los Angeles
  \AND
  \name Haixin Wang \email whx@cs.ucla.edu \\
        University of California, Los Angeles
  \AND
  \name Huacong Tang \email hctang@ucla.edu \\
        University of California, Los Angeles
  \AND
  \name Xiao Luo \email xiaoluo@cs.ucla.edu \\
        University of California, Los Angeles
  \AND
  \name Wei Wang \email weiwang@cs.ucla.edu \\
        University of California, Los Angeles
  \AND
  \name Yizhou Sun \email yzsun@cs.ucla.edu \\
        University of California, Los Angeles
}
\begin{document}

\maketitle

\begin{abstract}
Accurately predicting long-horizon molecular dynamics (MD) trajectories remains a significant challenge, as existing deep learning methods often struggle to retain fidelity over extended simulations. We hypothesize that one key factor limiting accuracy is the difficulty of capturing interactions that span distinct spatial and temporal scales—ranging from high-frequency local vibrations to low-frequency global conformational changes. 
To address these limitations, we propose \textbf{Graph Fourier Neural ODEs (GF-NODE)}, integrating a graph Fourier transform for spatial frequency decomposition with a Neural ODE framework for continuous-time evolution. Specifically, GF-NODE first decomposes molecular configurations into multiple spatial frequency modes using the graph Laplacian, then evolves the frequency components in time via a learnable Neural ODE module that captures both local and global dynamics, and finally reconstructs the updated molecular geometry through an inverse graph Fourier transform. By explicitly modeling high- and low-frequency phenomena in this unified pipeline, GF-NODE more effectively captures long-range correlations and local fluctuations alike. We provide theoretical insight through heat equation analysis on a simplified diffusion model, demonstrating how graph Laplacian eigenvalues can determine temporal dynamics scales, and crucially validate this correspondence through comprehensive empirical analysis on real molecular dynamics trajectories showing quantitative spatial-temporal correlations across diverse molecular systems.
Experimental results on challenging MD benchmarks, including MD17 and alanine dipeptide, demonstrate that GF-NODE achieves state-of-the-art accuracy while preserving essential geometrical features over extended simulations. These findings highlight the promise of bridging spectral decomposition with continuous-time modeling to improve the robustness and predictive power of MD simulations. Our implementation is publicly available at \url{https://github.com/FrancoTSolis/GF-NODE-code}. 
\end{abstract}

\section{Introduction}
\label{sec:introduction}

Molecular dynamics (MD) simulations are indispensable tools for investigating the behavior of molecular systems at the atomic level, offering profound insights into physical~\citep{bear1998atomic}, chemical~\citep{wang2011reactive}, and biological~\citep{salo2020molecular} processes. These simulations must capture interactions occurring across a wide range of spatial and temporal scales---from localized bond vibrations to long-range non-bonded interactions---posing significant computational challenges. Accurately modeling these multiscale interactions is crucial for uncovering the mechanisms underlying complex molecular phenomena but remains prohibitively expensive for large systems and long trajectories~\citep{vakis2018modeling}.

\begin{wrapfigure}{r}{0.64\textwidth}
\centering
\vspace{-2em}
\includegraphics[width=1.00\linewidth]{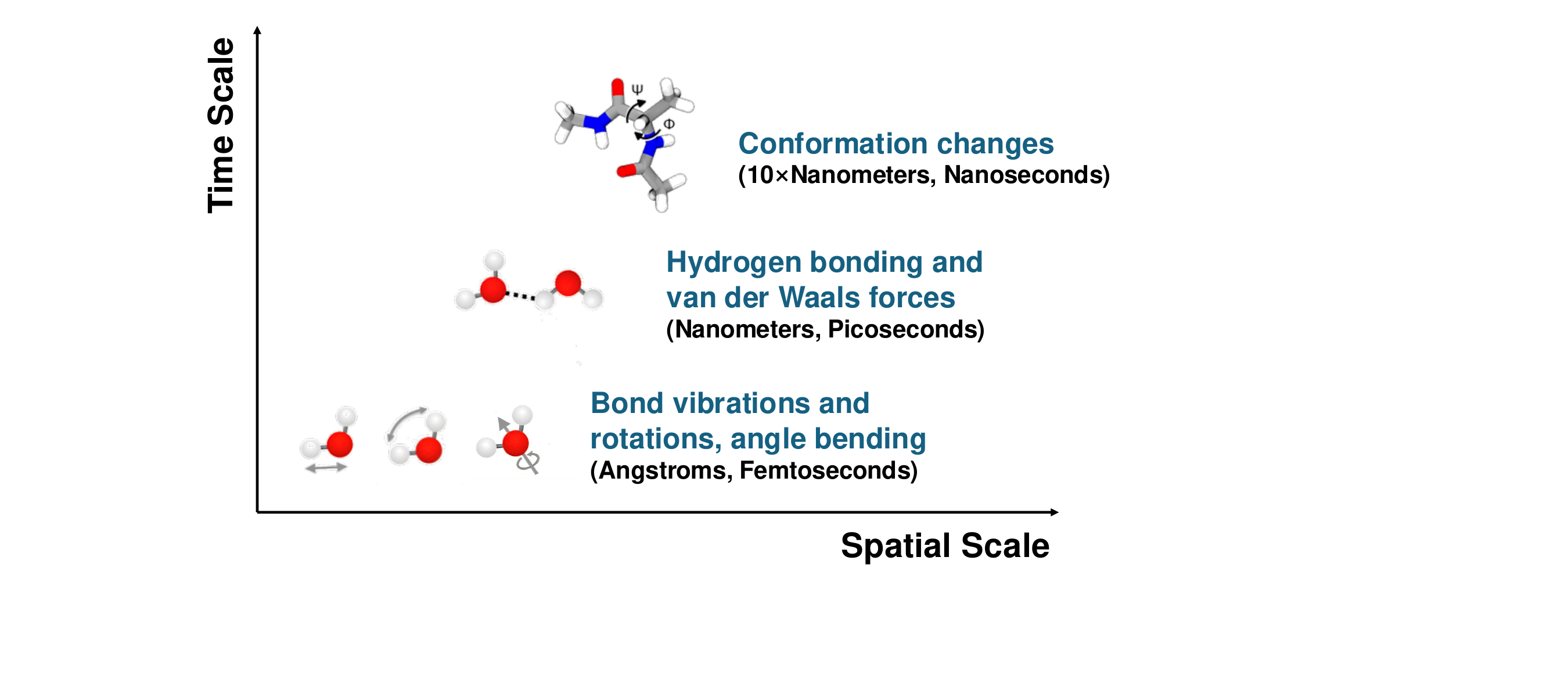}
\caption{Illustration of spatial and temporal multiscale interactions in molecular dynamics. The x-axis indicates spatial scales ranging from local bond vibrations to global conformational changes. The y-axis represents timescales ranging from femtoseconds (bond oscillations) to nanoseconds (conformational rearrangements). Local interactions, such as bond vibrations and angle bending, occur over short spatial scales (angstroms) and fast timescales (femtoseconds). Non-local interactions, including hydrogen bonding and van der Waals forces, span larger spatial scales (nanometers) and intermediate timescales (picoseconds). Conformational changes involve the longest spatial and temporal scales, potentially up to nanoseconds or longer.}
\label{fig:multiscale}
\end{wrapfigure}

In recent years, Graph Neural ODEs~\citep{zang2020neural, huang2023generalizing} have gained traction for modeling continuous-time dynamics in multi-agent systems, including MD. By learning an Ordinary Differential Equation (ODE) function via Graph Neural Networks and solving it numerically, these methods allow flexible sampling at arbitrary time points. Such a continuous-time formulation is well-suited for capturing multiple temporal scales inherent in molecular simulations. However, significant challenges persist in accounting for the rich spatial multiscale effects that span localized bond vibrations to extended nonbonded interactions. On another front, Fourier Neural Operators (FNOs)~\citep{li2020fourier} have demonstrated success in learning operators by decomposing signals into different frequency modes, thereby capturing various spatial scales effectively. Yet, they are not tailored for graph-structured molecular data or continuous-time temporal evolution, limiting their direct applicability to general MD simulations.

\begin{figure*}[!ht]
    \centering
    \includegraphics[width=1.0\linewidth]{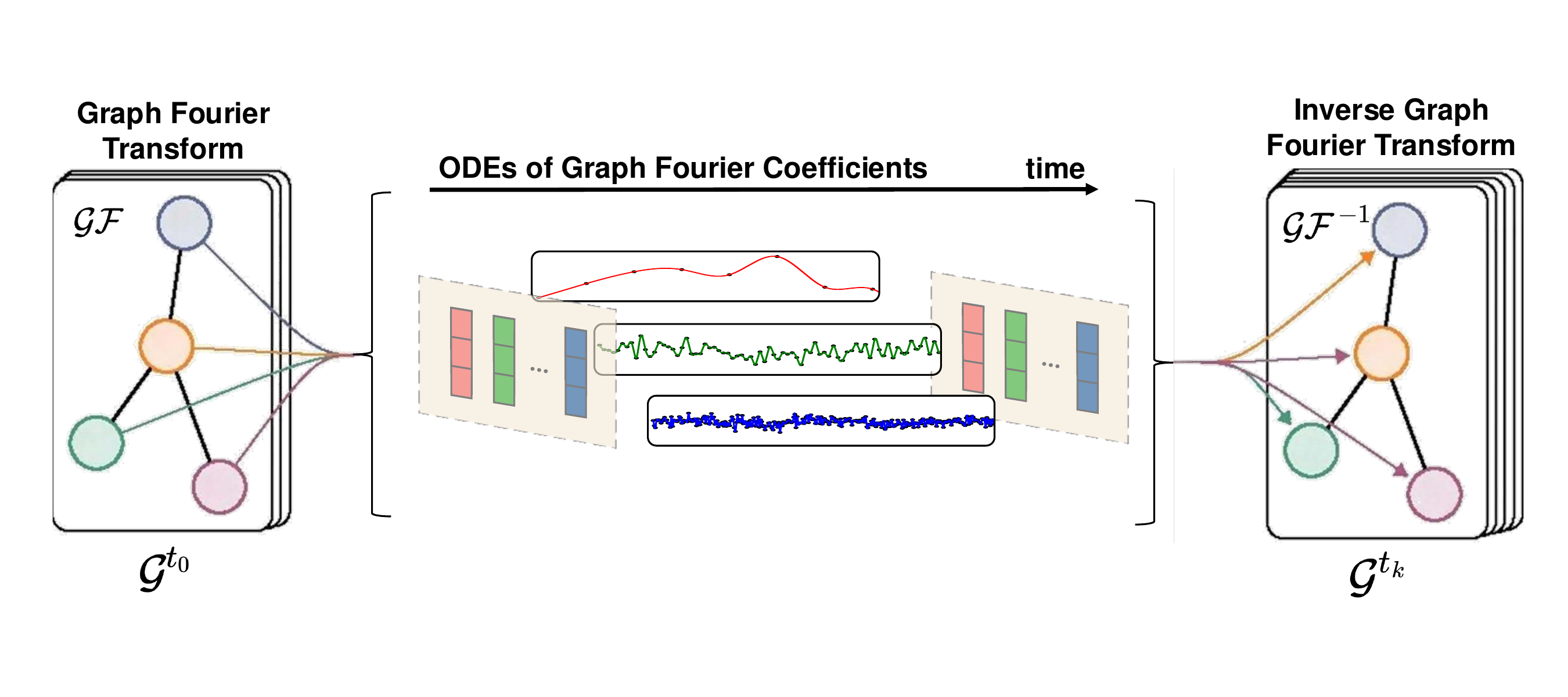}
    \caption{Overview of the Graph Fourier Neural ODE framework. The molecular graph \( \mathcal{G}^{t_0} \) is first transformed into the spectral domain using a Graph Fourier Transform (\(\mathcal{GF}\)), decomposing the spatial structure into frequency components. Neural ODEs are then applied to evolve the Fourier coefficients over time. The evolved coefficients are finally transformed back into the physical domain using an inverse Graph Fourier Transform (\(\mathcal{GF}^{-1}\)), reconstructing the molecular graph at future time \( t_k \). }
    \label{fig:gft}
\end{figure*}

Despite the promise of Graph Neural ODEs for handling multiple temporal scales via continuous time modeling, they alone are inadequate for fully capturing the complex spatial frequency components of molecular systems. Conversely, while approaches based on Fourier transforms can model multiple spatial scales, they do not naturally handle the intricacies of molecular graphs or continuous-time dynamics. To address these limitations, we introduce \emph{Graph Fourier Neural ODEs (GF-NODE)}. As demonstrated in Figure~\ref{fig:gft}, our framework explicitly integrates a graph Fourier transform---for decomposing and encoding spatial multiscale interactions---with a Neural ODE framework for continuous-time modeling of each spatial frequency. By leveraging an inverse graph Fourier transform at the end of the pipeline, GF-NODE reconstructs the molecular state in physical space, thereby enabling a unified approach to spatial and temporal multiscale simulation.

We conduct extensive experiments on benchmark molecular dynamics datasets, including MD17 and alanine dipeptide. Empirical results show that GF-NODE achieves state-of-the-art accuracy in predicting molecular trajectories over long-horizon simulations, preserves essential geometric properties such as bond lengths and angles, and demonstrates stable performance over temporal super-resolution tasks. These findings underscore the importance of explicitly decomposing molecular configurations into spatial frequency modes and evolving them continuously in time. Our analysis suggests that this multiscale perspective is instrumental for capturing both rapid local fluctuations and slow global conformational changes.

Our contribution can be summarized as follows. 

\textit{(a) New perespective.} We provide a new perspective on spatial-temporal multiscale modeling for molecular dynamics by jointly capturing spatial and temporal interactions within a single framework. 

\textit{(b) Novel architecture.} Building on this perspective, we introduce the \emph{Graph Fourier Neural ODEs (GF-NODE)} architecture, which combines a graph Fourier transform for decomposing molecular interactions into distinct spatial frequencies with a Neural ODE formulation to evolve these frequencies continuously in time. 

\textit{(c) Theoretical foundations.} We establish theoretical insight into the correspondence between spatial frequency modes and temporal dynamics scales through heat equation analysis on a simplified diffusion model, showing that graph Laplacian eigenvalues can intrinsically determine temporal evolution rates. Importantly, we provide comprehensive empirical validation through our Theoretical Framework for Joint Spatial-Temporal Analysis, demonstrating quantitative spatial-temporal correlations on real molecular dynamics trajectories across diverse systems.

\textit{(d) Comprehensive evaluation.} We conduct extensive experiments across multiple benchmarks, including the Revised MD17 dataset, larger molecular systems (20-326 heavy atoms), and extended temporal horizons up to 15,000 MD steps, consistently achieving state-of-the-art performance with detailed structural analysis confirming accurate preservation of molecular correlations.

\section{Related Work}
\label{sec:related_work}

We review relevant works on multi-scale modeling in molecular dynamics, focusing on neural operator models and graph neural ODEs.

\subsection{Classical Molecular Simulation Methods} 
Traditional molecular simulation methods, including force-field based MD (e.g., AMBER~\citep{cornell1995second, brooks2009charmm}, GAMD~\citep{li2022graph}) and \textit{ab initio} techniques such as Car–Parrinello MD~\citep{hutter2012car}, have been foundational in exploring molecular behavior. However, these methods face significant limitations: force-field simulations require extremely small time steps to accurately resolve high-frequency bond vibrations, which hampers long-term stability and computational efficiency; \textit{ab initio} MD, though offering first-principles accuracy, is computationally prohibitive for large systems and long trajectories; and while coarse-grained models (e.g., MARTINI~\citep{souza2021martini}) enable more efficient multiscale simulations, they often compromise on molecular detail and accuracy, particularly in reproducing local interactions and maintaining seamless force consistency at multiscale interfaces.

\subsection{Neural Operator Models}

Neural operator models~\citep{kovachki2023neural} have emerged as powerful tools for learning mappings between infinite-dimensional function spaces, demonstrating success in modeling complex dynamical systems. Among these, \emph{Fourier Neural Operators (FNOs)}~\citep{li2020fourier, li2022fourier, liu2023yue, kovachki2021universal, koshizukaunderstanding} are particularly notable for handling spatial multi-scale interactions in partial differential equation (PDE) data by learning representations in the Fourier domain. However, while FNOs efficiently capture spatial hierarchies, they do not inherently model temporal dynamics, making them suboptimal for time-evolving molecular systems.

In contrast, recent operator-based methods such as the \emph{Implicit Transfer Operator (ITO)}~\citep{schreiner2024implicit}, \emph{Timewarp}~\citep{klein2024timewarp}, and \emph{Equivariant Graph Neural Operator (EGNO)}~\citep{xu2024equivariant} focus on temporal multi-scale modeling in molecular dynamics. ITO and Timewarp introduce coarse-graining and adaptive time-stepping mechanisms to accelerate long-horizon simulations. EGNO employs neural operators with SE(3) equivariance to capture rotational and translational symmetries, yet it primarily addresses temporal evolution without explicitly handling spatial multi-scale effects. While these models successfully extend the applicability of neural operators to molecular simulations, none jointly addresses both spatial and temporal multi-scales in molecular dynamics.

\subsection{Graph Neural ODE Models}

Graph Neural ODEs combine Graph Neural Networks with Neural ODE frameworks~\citep{chen2018neural, kidger2022neural, goyal2023neural, holt2022neural, luo2023hope, luo2023pgode} to model continuous-time dynamics on graph-structured data~\citep{zang2020neural, kim2021stiff, huang2020learning, huang2021coupled, huang2023generalizing, huang2023tango}. These methods excel in capturing temporal multi-scale behavior by allowing flexible time integration, making them well-suited for systems with varying temporal resolutions. However, they primarily focus on modeling temporal dependencies, with little emphasis on explicitly handling spatial multi-scales in molecular dynamics. 

A key limitation of existing Graph Neural ODE models is their reliance on local message passing, which inherently constrains their ability to capture long-range spatial dependencies within molecular systems. As a result, they may fail to adequately represent the interplay between localized, high-frequency interactions (e.g., bond vibrations) and global, low-frequency effects (e.g., large-scale conformational changes). Unlike spectral-based approaches that can decompose spatial hierarchies, standard Graph Neural ODEs lack a mechanism to explicitly encode spatial multi-scale structures, limiting their effectiveness in modeling complex molecular dynamics.

\section{The Proposed Approach}

We propose \textbf{GF-NODE}, a framework specifically designed to address the limitations of existing methods in capturing both spatial and temporal multiscale dynamics in molecular systems. As discussed in Sections~\ref{sec:introduction} and~\ref{sec:related_work}, current approaches either handle spatial scales using Fourier-based methods or focus on temporal scales using Graph Neural ODEs, but they do not jointly model these scales in a unified framework. GF-NODE directly addresses this gap by integrating the \emph{Graph Fourier Transform (GFT)} and \emph{Neural Ordinary Differential Equations (Neural ODEs)} to simultaneously decompose spatial interactions and model their continuous-time evolution.

\begin{figure*}[!ht]
\centering
\includegraphics[width=1.0\linewidth]{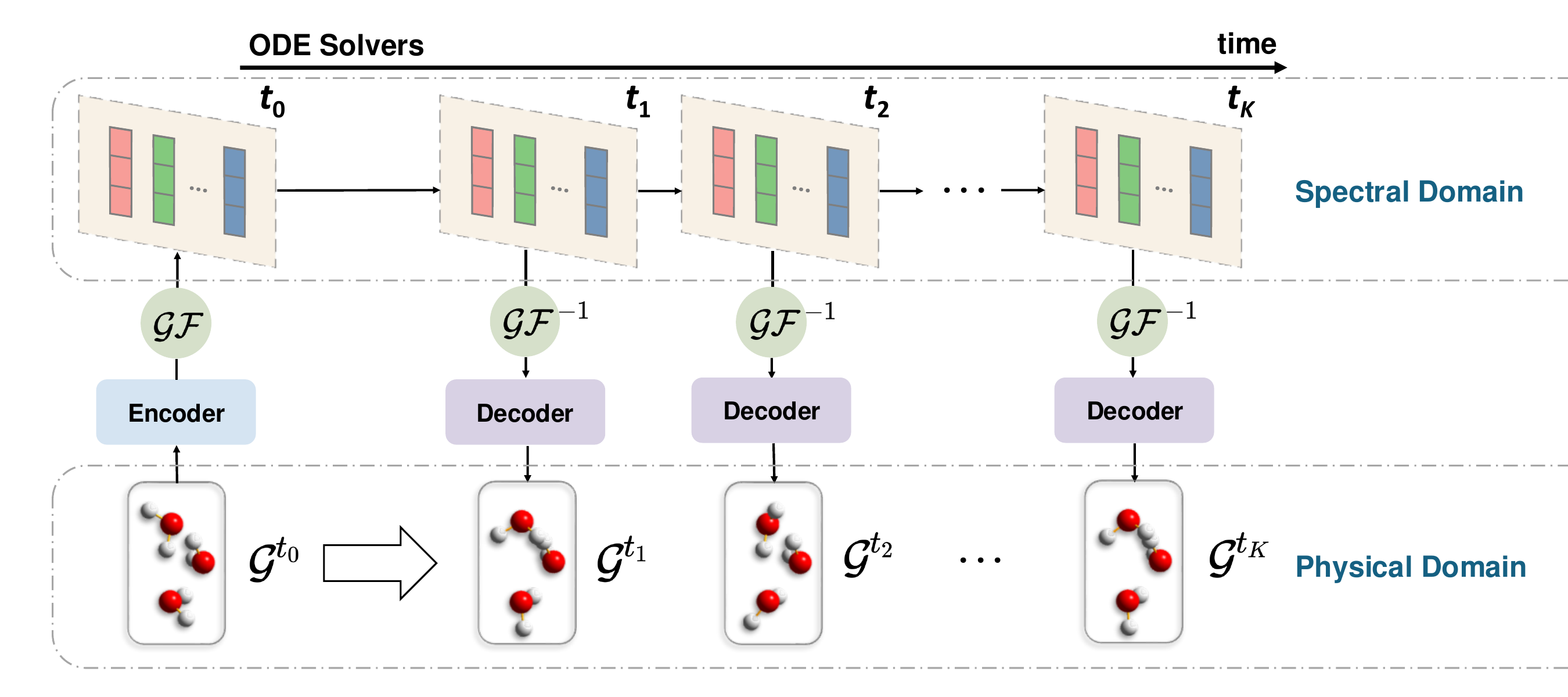}
\vspace{-1em}
\caption{An overview of the proposed GF-NODE architecture. The model first encodes the initial molecular graph $\mathcal{G}^{t_0}$ from the physical domain into the spectral domain via a graph Fourier transform ($\mathcal{GF}$). Neural ODE solvers then propagate the dynamics of the Fourier coefficients continuously across time points $t_0, t_1, \dots, t_K$. The transformed coefficients are subsequently decoded back to the physical domain using an inverse graph Fourier transform ($\mathcal{GF}^{-1}$), reconstructing the molecular graphs $\mathcal{G}^{t_1}, \mathcal{G}^{t_2}, \dots, \mathcal{G}^{t_K}$ at future time points. This design enables efficient modeling of spatial and temporal multiscale dynamics in molecular systems. }
\label{fig:architecture}
\vspace{-1.7em}
\end{figure*}

Specifically, as illustrated in Figure~\ref{fig:architecture}, GF-NODE first applies a Graph Fourier Transform to decompose the molecular graph into different frequency components, effectively separating localized, high-frequency interactions from global, low-frequency patterns. This spectral representation allows the model to process molecular structures in a frequency-adaptive manner, capturing both fine-grained local interactions and large-scale conformational changes. The decomposed spectral coefficients are then evolved continuously over time using Neural ODEs, ensuring flexible and adaptive modeling of multiscale temporal dynamics. Finally, the inverse Graph Fourier Transform reconstructs the molecular graph in the physical domain, preserving both local and global structural information over long-horizon molecular simulations.

The theoretical motivation for this design stems from heat equation analysis on simplified diffusion models, which suggests that spatial frequency modes (characterized by graph Laplacian eigenvalues) naturally correspond to different temporal evolution scales. Importantly, we provide comprehensive empirical validation of this correspondence through detailed spatial-temporal correlation analysis on real molecular dynamics trajectories, confirming that the spectral decomposition effectively separates distinct temporal dynamics scales in practice.

This design enables GF-NODE to overcome key challenges in molecular dynamics modeling: (1) explicitly encoding spatial multi-scale interactions via spectral decomposition, (2) leveraging continuous-time evolution to capture complex temporal dependencies, and (3) integrating these spatial and temporal scales into a single end-to-end framework that maintains $\mathrm{SE}(3)$ equivariance (formal proof provided in Appendix~\ref{appendix:so3}). Crucially, our approach is grounded in rigorous theoretical foundations: graph Laplacian eigenvalues intrinsically determine the characteristic time-scales of dynamics through heat equation analysis, establishing that low-frequency spatial modes naturally correspond to slow temporal evolution while high-frequency modes correspond to rapid dynamics (detailed analysis in Section 4.4 and Appendix). Below, we detail the core components and operations of GF-NODE.

\subsection{Notation and problem setup}

Let $\mathcal{G}=(\mathcal{V}, \mathcal{E})$ represent a molecular graph, where $N$ is the number of nodes $i \in\{1, \ldots, N\}$ (atoms), and $\mathcal{E}$ is the set of edges representing chemical bonds or interactions. Edges are identified by checking whether the distance between atoms falls below a threshold. Each node $i$ has:
\begin{itemize}[noitemsep, topsep=0pt, partopsep=0pt, parsep=0pt, leftmargin=*]
    \item Invariant (scalar) features $\mathbf{h}_i \in \mathbb{R}^F$, which include the velocity magnitude \(\|\mathbf{v}_i\|\) and a normalized version of the atomic number \(Z_i\); 
    \item \textcolor{black}{{Vector features}} $\mathbf{z}_i \in \mathbb{R}^{m \times 3}$, which contain position \(\mathbf{x}_i\) and velocity \(\mathbf{v}_i\) (i.e.\ \(\mathbf{z}_i = (\mathbf{x}_i, \mathbf{v}_i)\) when \(m = 2\)).  
\end{itemize}
At time $t_0$, the molecular system's state is given as $\{\mathbf{h}, \mathbf{z}\}$, and we aim to predict the future configuration $\mathcal{G}^{\left(t_k\right)}$ for any $t_k>t_0$.

\subsection{ \textcolor{black}{\textbf{Graph}} Neural Network encoder}

\textcolor{black}{{We use a Graph Neural Network (GNN) to encode scalar features $\mathbf{h}_i \in \mathbb{R}^F$ and vector features $\mathbf{z}_i \in \mathbb{R}^{m \times 3}$ for each node $i$.}} 
The initial scalar feature \(\mathbf{h}_i^{(0)}\) for each atom \(i\) is formed by concatenating 
$\bigl\|\mathbf{v}_i\bigr\|$ and $\frac{Z_i}{Z_{\max}}$, 
where \(Z_i\) is the atomic number of atom \(i\), and \(Z_{\max}\) is a reference maximum. This concatenated vector is then mapped by a linear embedding layer, producing the hidden dimension used by the GNN. The vector feature \(\mathbf{z}_i^{(0)}\) contains \(\mathbf{x}_i\) and \(\mathbf{v}_i\).
Each \textcolor{black}{\textbf{GNN}} layer performs message passing, where node $i$'s features are updated based on its neighbors $\mathcal{N}(i)$ :
\begin{align}
\mathbf{m}_{i j}^{(l)} &=\phi_e\left(\mathbf{h}_i^{(l)}, \mathbf{h}_j^{(l)}, \mathbf{r}_{i j}^{(l)}\right), \\
\mathbf{h}_i^{(l+1)} &=\phi_h\left(\mathbf{h}_i^{(l)}, \sum_{j \in \mathcal{N}(i)} \mathbf{m}_{i j}^{(l)}\right), \\
\mathbf{x}_i^{(l+1)} &=\mathbf{x}_i^{(l)}+\frac{1}{|\mathcal{N}(i)|} \sum_{j \in \mathcal{N}(i)} \psi\left(\mathbf{m}_{i j}^{(l)}\right),
\end{align}
\textcolor{black}{{where}} $\mathbf{r}_{i j}^{(l)}=\mathbf{x}_i^{(l)}-\mathbf{x}_j^{(l)}$, and $\phi_e, \phi_h$, and $\psi$ are neural networks. \textcolor{black}{{Similarly, we also update the velocity $\mathbf{v}_i^{(l)}$ if present.}}
\textcolor{black}{{After $L$ layers, the GNN produces the encoded features $\mathbf{h}_i^{(L)}$ and $\mathbf{z}_i^{(L)}$, capturing both local and global molecular information.}} These features are then passed to the Graph Fourier Transform (GFT) for spectral decomposition. 

\subsection{Graph Fourier Transform} 

Molecular dynamics exhibit behavior on multiple spatial scales: large-scale ``global'' deformations can be viewed as low-frequency modes, whereas fast ``local'' vibrations correspond to high frequency modes. In classical signal processing, Fourier analysis decomposes signals into sinusoids of different frequencies. Similarly, on graphs, we can decompose node-based signals into eigenmodes of a suitable operator (often the graph Laplacian). By retaining or emphasizing certain frequency bands, one can explicitly model global vs. local patterns.

\textbf{Graph Signal as Combination of Laplacian Eigenvectors. } 
A scalar function $f: \mathcal{V} \rightarrow \mathbb{R}$ on the nodes can be seen as a vector $\mathbf{f} \in \mathbb{R}^N$. The graph Laplacian $\mathbf{L}=\mathbf{D}-\mathbf{A}$ (or a symmetrized variant) admits an eigen-decomposition:
\begin{equation}
\mathbf{L}=\mathbf{U} \boldsymbol{\Lambda} \mathbf{U}^{\top}, 
\end{equation}
where $\mathbf{U}=\left[\mathbf{u}_0, \mathbf{u}_1, \ldots, \mathbf{u}_{N-1}\right]$ is an orthonormal matrix of eigenvectors, i.e. $\mathbf{u}_k \in \mathbb{R}^N$, and 
$\boldsymbol{\Lambda}=\operatorname{diag}\left(\lambda_0, \lambda_1, \ldots, \lambda_{N-1}\right)$ contains ascending eigenvalues $0 \leq \lambda_0 \leq \lambda_1 \leq \cdots \leq \lambda_{N-1}$. 

In graph signal processing, the eigenvector $\mathbf{u}_k$ is viewed as the ``$k$-th frequency basis.'' Smaller eigenvalues ( $\lambda_0, \lambda_1, \ldots$ ) correspond to low-frequency (more global, smooth) variations on the graph, while larger eigenvalues correspond to high-frequency (local, rapidly changing) modes. Hence, any signal $\mathbf{f} \in \mathbb{R}^N$ can be written as a linear combination:
\begin{equation}
\mathbf{f}=\sum_{k=0}^{N-1} \alpha_k \mathbf{u}_k, \quad \text { where } \alpha_k=\mathbf{u}_k^{\top} \mathbf{f}.
\label{eq:gft_decompose}
\end{equation}
This collection of coefficients $\left\{\alpha_k\right\}$ is the Graph Fourier Transform (GFT) of $\mathbf{f}$. We \textbf{now denote} $\alpha_k$ as $\tilde{f}_k$. 

\textbf{Truncation at $M$ Bases. } 
For large $N$, we often use only the first $M$ eigenvectors $\left\{\mathbf{u}_{0}, \ldots, \mathbf{u}_{M-1}\right\}$ to approximate $\mathbf{f}$. This yields a band-limited or multiscale representation:
\begin{equation}
{\mathbf{f}}\approx \sum_{k=0}^{M-1} \alpha_k \mathbf{u}_k, 
\end{equation} 
where $M \leq N$. Sorting $\lambda_k$ in ascending order ensures the lowest-frequency modes (i.e., global scales) appear first, and higher-frequency (local scales) modes appear last. By choosing $M$ appropriately, we focus on the most critical modes for modeling global vs. local behavior. 

\textbf{Applying the GFT to Scalar vs. Vector Features. } 
For \textit{scalar} features $\mathbf{H} \in \mathbb{R}^{N \times F}$, where $F$ is the latent feature size, we substitute $\tilde{f}_k$ for $\tilde{\mathbf{H}}_k$ in the equation $\tilde{f}_k=\mathbf{u}_k^{\top} f$ (from Equation~\ref{eq:gft_decompose}) to get:  
\begin{equation}
\tilde{\mathbf{H}}=\left[\tilde{\mathbf{H}}_k\right]_{k=0}^{M-1}=\mathbf{U}_{(:, 0: M)}^{\top} \mathbf{H},
\end{equation}
where $\mathbf{U}_{(:, 0: M)}$ denotes the first $M$ eigenvectors.
For \textit{vector} features $\mathbf{Z} \in \mathbb{R}^{N \times m \times 3}$, where $m$ is the feature size of the vector feature in the 3D space, we first remove the mean to ensure translational invariance (since the 0-th eigenvector $\mathbf{u}_0$ corresponds to the constant mode):
\begin{equation}
\mathbf{Z}_c=\mathbf{Z}-\overline{\mathbf{Z}}. 
\end{equation}
where $\overline{\mathbf{Z}}$ is the global mean over all nodes. We then apply the same truncated basis $\mathbf{U}_{(:, 0: M)}^{\top}$ to each coordinate dimension, substituting $\tilde{f}_k$ for $\tilde{\mathbf{Z}}_k$ in the equation $\tilde{f}_k=\mathbf{u}_k^{\top} f$ (from Equation~\ref{eq:gft_decompose}) to get:  
\begin{equation}
\tilde{\mathbf{Z}}=\mathbf{U}_{(:, 0: M)}^{\top} \mathbf{Z}_c \in \mathbb{R}^{M \times m \times 3}.
\end{equation}
Indices closer to $k=0$ indicate more global motions, while larger $k$ (up to $M-1$) captures more local, high-frequency fluctuations. In practice, $M$ can be a hyperparameter that determines how many eigenmodes we keep, balancing efficiency (fewer modes to evolve) and accuracy (how many scales are captured).

\textbf{Theoretical Analysis. } The Graph Fourier Transform (GFT) decomposes molecular dynamics into modes corresponding to global and local spatial patterns, as determined by the eigenvalues and eigenvectors of the graph Laplacian. As demonstrated in Proposition~\ref{prop:truncate}, low-frequency modes capture smooth, global deformations, while high-frequency modes represent localized structural variations. This decomposition enables efficient modeling of multiscale spatial dynamics.

\begin{proposition}[Global vs. Local Spatial Scales]
\label{prop:truncate}
Let $\mathbf{u}_k$ be the $k$-th eigenvector of $\mathbf{L}$ with eigenvalue $\lambda_k$. Suppose $\mathbf{x}$ encodes atomic coordinates or their latent features. Then:
\begin{enumerate}[noitemsep, topsep=0pt, partopsep=0pt, parsep=0pt, leftmargin=*]
  \item \textbf{If $\lambda_k$ is small}, the corresponding mode $\mathbf{u}_k$ represents slowly varying (global) deformations across the molecule.
  \item \textbf{If $\lambda_k$ is large}, the corresponding mode $\mathbf{u}_k$ represents rapidly changing (local) structural variations.
\end{enumerate}
\end{proposition}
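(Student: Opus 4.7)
The plan is to derive both statements from the Rayleigh quotient / Dirichlet energy characterization of the Laplacian spectrum, which is the standard quantitative handle on what ``smoothness'' of an eigenvector means on a graph. This gives a precise meaning to the informal ``global vs.\ local'' dichotomy stated in the proposition, by tying $\lambda_k$ directly to how much $\mathbf{u}_k$ fluctuates across neighboring atoms.

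First, I would recall the identity
\begin{equation*}
\mathbf{u}_k^{\top}\mathbf{L}\,\mathbf{u}_k \;=\; \tfrac{1}{2}\sum_{(i,j)\in\mathcal{E}} w_{ij}\bigl(u_k[i]-u_k[j]\bigr)^{2},
\end{equation*}
which follows from $\mathbf{L}=\mathbf{D}-\mathbf{A}$ by a one-line expansion. Since $\mathbf{L}\mathbf{u}_k=\lambda_k\mathbf{u}_k$ and $\|\mathbf{u}_k\|=1$, the left-hand side equals $\lambda_k$, so $\lambda_k$ is exactly the \emph{graph Dirichlet energy} of $\mathbf{u}_k$. I would emphasize that this quantity measures the total squared variation of the signal across all edges; it plays the role of $\int|\nabla f|^{2}$ in the continuous setting.

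Next I would translate the two cases. \textbf{(i)} If $\lambda_k$ is small, then $\sum_{(i,j)\in\mathcal{E}}w_{ij}(u_k[i]-u_k[j])^{2}$ is small, so $u_k[i]\approx u_k[j]$ for every bonded pair $(i,j)$. Iterating along any path in $\mathcal{G}$ shows that values can only drift slowly with graph distance, so $\mathbf{u}_k$ is nearly constant on connected regions and encodes a smooth, molecule-wide deformation pattern (global mode). I would also note that for $\lambda_0=0$ the eigenvector is exactly constant, which is the extreme global mode and justifies the mean-removal step in the GFT of vector features. \textbf{(ii)} Conversely, by the Courant--Fischer min-max characterization, the largest eigenvalues are attained by signals maximizing the Dirichlet energy under orthogonality, which forces large sign flips or magnitude jumps between neighboring nodes; such $\mathbf{u}_k$ concentrate their variation on short edge-scales and therefore describe rapid, locally supported fluctuations. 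Finally, to make the analogy with classical Fourier analysis explicit, I would mention that on a discretized 1D chain the Laplacian eigenvectors are exactly sinusoids $\cos(\pi k i/N)$ with eigenvalues $2(1-\cos(\pi k/N))\sim (\pi k/N)^{2}$, so $\lambda_k$ behaves like a squared spatial frequency—matching the GFT intuition used in the architecture.

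The step that actually requires some care is making ``global'' and ``local'' quantitatively meaningful rather than merely suggestive. The cleanest way, which I would adopt, is to state the conclusion as: for any edge $(i,j)$ with weight $w_{ij}\ge w_{\min}$, one has $|u_k[i]-u_k[j]|\le\sqrt{2\lambda_k/w_{\min}}$, and more generally for a path of length $d$ the variation is bounded by $\sqrt{2 d\,\lambda_k/w_{\min}}$ via Cauchy--Schwarz. This turns the soft claim into a Lipschitz-type bound controlled by $\lambda_k$, and symmetrically, the min-max principle furnishes a matching lower bound on the variation of high-$\lambda_k$ modes. This quantitative bridge between the algebraic quantity $\lambda_k$ and the geometric notion of spatial scale is the only nontrivial content; everything else is bookkeeping.
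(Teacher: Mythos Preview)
Your proposal is correct and follows the same underlying idea as the paper: both rest on the Dirichlet-energy identity $\lambda_k=\mathbf{u}_k^{\top}\mathbf{L}\,\mathbf{u}_k=\sum_{(i,j)\in\mathcal{E}}w_{ij}(u_k[i]-u_k[j])^{2}$, so that small eigenvalues force small variation across edges and large eigenvalues force large variation. The paper's actual proof is a one-line appeal to standard spectral graph theory (Chung, 1997) with exactly this informal reading, so your version is strictly more detailed---the Courant--Fischer argument, the path-length Lipschitz bound, and the 1D-chain analogy are all additions beyond what the paper supplies.
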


Regarding the number of modes $M$ we need to use, Theorem~\ref{theo:truncate} guarantees that we can use the first few modes to represent the entire system accurately: 

\begin{theorem}[Spectral Truncation Error] 
\label{theo:truncate}
Truncating the spectral representation to the first $M$ modes, $\mathbf{x}_{(M)}=\mathbf{U}_{(:, 0: M)} \mathbf{U}_{(:, 0: M)}^{\top} \mathbf{x}$, yields an $\ell^2$-norm error:
\begin{equation}
    \left\|\mathbf{x}-\mathbf{x}_{(M)}\right\|_2^2=\sum_{k=M}^{N-1}\left|\mathbf{u}_k^{\top} \mathbf{x}\right|^2.
\end{equation}
For $\alpha$-bandlimited signals, choosing $M$ such that $\lambda_{M-1} \leq \alpha$ guarantees exact recovery. 
\end{theorem}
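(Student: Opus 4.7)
The plan is to treat $\mathbf{x}_{(M)}$ as the orthogonal projection onto the span of the first $M$ Laplacian eigenvectors and then apply Parseval/Pythagoras. First I would use the fact that $\mathbf{U} = [\mathbf{u}_0,\ldots,\mathbf{u}_{N-1}]$ is an orthonormal basis of $\mathbb{R}^N$ (from the symmetry of $\mathbf{L}$ and the eigendecomposition $\mathbf{L} = \mathbf{U}\boldsymbol{\Lambda}\mathbf{U}^\top$) to write the GFT expansion $\mathbf{x} = \sum_{k=0}^{N-1} \alpha_k \mathbf{u}_k$ with $\alpha_k = \mathbf{u}_k^\top \mathbf{x}$, as in Equation~\ref{eq:gft_decompose}. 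Substituting into $\mathbf{x}_{(M)} = \mathbf{U}_{(:,0:M)}\mathbf{U}_{(:,0:M)}^\top \mathbf{x}$ and using $\mathbf{u}_k^\top \mathbf{u}_j = \delta_{kj}$ yields $\mathbf{x}_{(M)} = \sum_{k=0}^{M-1} \alpha_k \mathbf{u}_k$, identifying the truncation with the orthogonal projector $P_M := \mathbf{U}_{(:,0:M)}\mathbf{U}_{(:,0:M)}^\top$.

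Next I would compute the residual directly: $\mathbf{x} - \mathbf{x}_{(M)} = \sum_{k=M}^{N-1} \alpha_k \mathbf{u}_k$. Squaring and expanding in the orthonormal basis gives the cross terms $\mathbf{u}_k^\top \mathbf{u}_j = \delta_{kj}$, so by Parseval's identity
\begin{equation}
\|\mathbf{x} - \mathbf{x}_{(M)}\|_2^2 \;=\; \sum_{k=M}^{N-1} |\alpha_k|^2 \;=\; \sum_{k=M}^{N-1} |\mathbf{u}_k^\top \mathbf{x}|^2,
\end{equation}
which is exactly the claimed formula. Equivalently, one could note that $I - P_M$ is an orthogonal projector so $\|\mathbf{x} - P_M \mathbf{x}\|_2^2 = \mathbf{x}^\top (I - P_M)\mathbf{x} = \sum_{k \ge M}(\mathbf{u}_k^\top \mathbf{x})^2$.

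For the second claim, I would invoke the definition of an $\alpha$-bandlimited signal: $\mathbf{x}$ is $\alpha$-bandlimited iff $\mathbf{u}_k^\top \mathbf{x} = 0$ whenever $\lambda_k > \alpha$. Choosing $M$ so that $\lambda_{M-1} \leq \alpha$ and $\lambda_M > \alpha$ (i.e.\ $M$ encompasses exactly the sub-$\alpha$ portion of the spectrum, using the ascending ordering $\lambda_0 \leq \cdots \leq \lambda_{N-1}$) forces $\alpha_k = 0$ for every $k \geq M$. Plugging this into the residual formula yields $\|\mathbf{x} - \mathbf{x}_{(M)}\|_2^2 = 0$, i.e.\ exact recovery.

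\textbf{Expected obstacle.}
There is essentially no hard step: the argument is a direct application of orthonormal-basis expansions. The only subtlety is bookkeeping around the bandlimit condition---one must be careful that the ascending ordering of eigenvalues is genuinely used, so that $\lambda_{M-1} \leq \alpha$ together with the bandlimit hypothesis rules out \emph{all} higher-index coefficients (if some $\lambda_k = \alpha$ occurs with multiplicity, $M$ should be taken to include the entire eigenspace, which the ordering handles automatically). Apart from this mild indexing care, the proof is a two-line Parseval computation.
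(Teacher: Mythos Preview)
Your proposal is correct and follows essentially the same approach as the paper: expand $\mathbf{x}$ in the orthonormal Laplacian eigenbasis, identify $\mathbf{x}_{(M)}$ as the projection onto the first $M$ modes, and read off the residual norm via Parseval. The paper's own proof (Lemma~\ref{lem:approx_error} in Appendix~\ref{appendix:gft}) is in fact a one-line sketch of exactly this argument, so your write-up is strictly more detailed than what the authors provide.
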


We provide detailed proofs and derivations in Appendix~\ref{appendix:gft}, where we analyze the spectral decomposition's efficacy and demonstrate its suitability for multiscale molecular modeling.

\subsection{Neural ODEs in the Spectral Domain}

We propose a novel approach that models the temporal evolution of the \textbf{Fourier coefficients} using Neural ODEs. By representing molecular interactions in the frequency domain, this approach enables the decomposition of dynamics across different spatial scales and provides a more compact representation for modeling temporal evolution. Neural ODEs then learn the dynamics of these Fourier coefficients over time, offering a continuous-time framework that captures multiscale spatial and temporal interactions simultaneously.

\textbf{NeuralODE Preliminaries. } Neural Ordinary Differential Equations (Neural ODEs)~\citep{chen2018neural} provide a framework for modeling continuous-time dynamics by learning the evolution of a system as a set of differential equations parameterized by a neural network. Specifically, for a system's state $\mathbf{h}(t)$, its temporal evolution is defined as:
\begin{equation}
\frac{d \mathbf{h}(t)}{dt} = f_\theta(\mathbf{h}(t), t),
\end{equation}
where $ f_\theta $ is a neural network parameterized by $ \theta $ that learns the dynamics of $ \mathbf{h}(t) $ over time. Given an initial state $ \mathbf{h}(t_0) $, the state at any future time $ t $ is computed by solving the ODE:
\begin{equation}
\mathbf{h}(t) = \mathbf{h}(t_0) + \int_{t_0}^{t} f_\theta(\mathbf{h}(\tau), \tau) d\tau.
\end{equation}
This integral can be evaluated numerically using adaptive ODE solvers, such as Runge-Kutta methods, allowing Neural ODEs to handle irregularly sampled data and continuously model temporal dynamics.

\textbf{Joint Scalar-Vector Block-Diagonal Formulation. } 
We update the spectral coefficients $[\tilde{\mathbf{H}}, \tilde{\mathbf{Z}}]$ jointly. 
We can write the combined spectral features as
\begin{equation}
\binom{\tilde{\mathbf{H}}}{\tilde{\mathbf{Z}}} \in \mathbb{R}^{N \times F} \oplus \mathbb{R}^{N \times m \times 3},
\end{equation}
and define Neural ODE dynamics over continuous time $t$ :
\begin{equation}
\frac{d}{d t}\binom{\tilde{\mathbf{H}}(t)}{\tilde{\mathbf{Z}}(t)}=\binom{f_\theta(\tilde{\mathbf{H}}(t), t)}{g_\theta(\tilde{\mathbf{Z}}(t), t)}.
\end{equation}

Mathematically, in the Fourier space, this equates to a block-diagonal operator $\mathbf{M}_\theta$. Let $\tilde{\mathbf{H}}_\omega$ and $\tilde{\mathbf{Z}}_\omega$ denote the coefficients for each frequency mode $\omega$. A single ODE step can be represented as:
\begin{equation}
\binom{\tilde{\mathbf{H}}_\omega}{\tilde{\mathbf{Z}}_\omega} 
\;\mapsto\;
\begin{bmatrix}
\mathbf{M}_\theta^{(\mathbf{h})} & \mathbf{0} \\
\mathbf{0} & \mathbf{M}_\theta^{(\mathbf{z})}
\end{bmatrix}
\,\cdot\,
\binom{\tilde{\mathbf{H}}_\omega}{\tilde{\mathbf{Z}}_\omega}.
\end{equation}
\textcolor{black}{{Here, $\mathbf{M}_\theta^{(\mathbf{h})}$ operates on the scalar channels and $\mathbf{M}_\theta^{(\mathbf{z})}$ on the vector channels, allowing them to evolve in a coordinated but distinct manner. This is beneficial to preserving the 3D interactions while keeping track of the nuanced representations in the latent space -- thus preserving the 3D interactions while capturing the nuanced latent representations in a block-diagonal design. }}

Implementation wise, each ODE function \(f_\theta\) or \(g_\theta\) takes the current spectral modes (\(\tilde{\mathbf{H}}_\omega\) or \(\tilde{\mathbf{Z}}_\omega\)) and the time \(t\), and produces their rate of change for the integrator. Specifically, we use multi-head self-attention across these modes so that each frequency mode can attend to others. Formally, if \(\mathbf{h}_\omega(t)\) denotes the coefficients corresponding to frequency mode $\omega$ at time \(t\), the self-attention step can be written as
\begin{equation}
\mathbf{h}_\omega'(t) = \mathrm{MHAttn}\bigl( \mathbf{h}_\omega(t) \bigr),
\end{equation}
where \(\mathrm{MHAttn}\) is the multi-head attention layer across different frequency modes. Next, the time \(t\) is directly concatenated along the feature dimension via a learned embedding \(\gamma(t)\in \mathbb{R}^{d}\), yielding
\begin{equation}
\mathbf{h}_\omega''(t) = \bigl[\mathbf{h}_\omega'(t) \,\|\, \gamma(t)\bigr].
\end{equation}
Finally, each mode \(\mathbf{h}_\omega''(t)\) is transformed by a mode-wise linear weight \(\mathbf{W}_\omega\), resulting in the derivative:
\begin{equation}
f_\theta\bigl(\mathbf{h}_\omega(t), t\bigr) \;=\; \mathbf{W}_\omega \,\mathbf{h}_\omega''(t).
\end{equation}
A similar procedure applies for \(g_\theta\), handling any additional vector channels by suitably reshaping and performing attention across modes.

\textcolor{black}{{To predict at time $t_k$, we numerically integrate $\frac{d}{dt}[\tilde{\mathbf{H}}, \tilde{\mathbf{Z}}]$ from $t_0$ to $t_k$ (e.g., via dopri5). Low-frequency components capture long-range, slower dynamics, while high-frequency components capture faster local fluctuations.}}

\paragraph{Theoretical connection between spatial and temporal modes.} We also use a simplified heat equation dynamics model to demonstrate the connection of spatial GFT modes to distinct temporal dynamics. 

\begin{proposition}[Heat-Equation Mode Dynamics]
\label{prop:heat_modes}
Let $\mathcal{G}=(V,E)$ be a graph with normalized Laplacian $L$ admitting
\[
L = U\,\Lambda\,U^\top,\quad
\Lambda=\mathrm{diag}(\lambda_0,\dots,\lambda_{N-1}),\quad 0=\lambda_0\le\lambda_1\le\cdots\le\lambda_{N-1}.
\]
Consider the graph-heat evolution on a scalar signal $f(t)\in\mathbb{R}^N$:
\begin{equation}
\frac{d\,f(t)}{dt} \;=\; -\,L\,f(t)\,.
\end{equation}
Write the $k$th Graph Fourier coefficient as $\alpha_k(t) \;=\; u_k^\top\,f(t)$.
Then each mode evolves independently:
\begin{align}
\frac{d\,\alpha_k(t)}{dt} = -\,\lambda_k\,\alpha_k(t),
\end{align}
with closed-form solution $\alpha_k(t) \;=\; \alpha_k(0)\,\exp\bigl(-\lambda_k\,t\bigr)$.
In particular, modes with $\lambda_k$ small decay \emph{slowly} (long time-scales), while modes with $\lambda_k$ large decay \emph{rapidly} (short time-scales).
\end{proposition}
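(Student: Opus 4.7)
The plan is to diagonalize the graph heat equation in the Fourier basis and then solve each resulting scalar ODE in closed form. Since $L$ is symmetric positive semidefinite with eigendecomposition $L = U\Lambda U^\top$ and $U^\top U = I$, the matrix ODE $\frac{d}{dt}f(t) = -Lf(t)$ becomes block-diagonal after a change of variables to the spectral coordinates $\alpha(t) = U^\top f(t)$. The bulk of the work is simply verifying this change of variables and then integrating a scalar linear ODE; no deep obstacle is anticipated.

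First, I would define $\alpha(t) = U^\top f(t) \in \mathbb{R}^N$, so that the $k$-th component is exactly $\alpha_k(t) = u_k^\top f(t)$. Differentiating under the constant linear map $U^\top$, I get $\frac{d\alpha(t)}{dt} = U^\top \frac{df(t)}{dt} = -U^\top L\, f(t)$. Substituting $L = U\Lambda U^\top$ and using $U^\top U = I$ gives $\frac{d\alpha(t)}{dt} = -\Lambda\, U^\top f(t) = -\Lambda\, \alpha(t)$. Since $\Lambda$ is diagonal, this is a system of decoupled scalar ODEs, and reading off the $k$-th row yields the claimed mode equation $\frac{d\alpha_k(t)}{dt} = -\lambda_k \alpha_k(t)$.

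Next, I would solve this scalar linear ODE. It is separable, and integrating $\frac{d\alpha_k}{\alpha_k} = -\lambda_k\, dt$ from $0$ to $t$ gives $\log \alpha_k(t) - \log \alpha_k(0) = -\lambda_k t$, hence $\alpha_k(t) = \alpha_k(0)\exp(-\lambda_k t)$. Equivalently, one can observe that the function $\alpha_k(0)e^{-\lambda_k t}$ satisfies the ODE and matches the initial condition at $t=0$, and invoke uniqueness of solutions of linear ODEs. This closed form immediately gives the qualitative claim: if $\lambda_k$ is near zero (low-frequency, global modes) the decay rate $e^{-\lambda_k t}$ is slow, so the mode persists over long time-scales, whereas large $\lambda_k$ (high-frequency, local modes) produces rapid decay on short time-scales.

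The only mild subtlety, and the step I would double-check, is the justification for differentiating $\alpha(t) = U^\top f(t)$ term-by-term; this is straightforward because $U$ is a constant orthogonal matrix (independent of $t$), so $\frac{d}{dt}(U^\top f) = U^\top \frac{df}{dt}$. No regularity concerns arise beyond existence and uniqueness of solutions to a linear constant-coefficient ODE, which is standard. Thus the main obstacle is really only bookkeeping: making the change of variables explicit and noting that the diagonalization of $L$ precisely decouples the modes, so that the spatial frequency index $k$ and the eigenvalue $\lambda_k$ directly parameterize the temporal decay rate, closing the loop with the spatial-temporal correspondence emphasized in the paper.
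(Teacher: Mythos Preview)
Your proposal is correct and follows essentially the same approach as the paper: project the heat equation onto the Laplacian eigenbasis using $U^\top U = I$ to decouple into scalar ODEs $\dot\alpha_k = -\lambda_k\alpha_k$, then integrate to get the exponential solution. The only cosmetic difference is that you carry out the change of variables in vector form $\alpha(t)=U^\top f(t)$ and then read off the $k$-th row, whereas the paper projects onto a single eigenvector $u_k$ directly; these are equivalent.
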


Although our GF-NODE dynamics are \emph{learned} rather than the pure heat equation, Proposition \ref{prop:heat_modes} shows that under any diffusion-like operator the Laplacian eigenvalues $\lambda_k$ set intrinsic time-scales for each mode. The detailed proof and theoretical analysis can be found in Appendix~\ref{sec:spatio_temporal_analysis}.

\subsection{From Spectral Domain to Physical Domain}

\textbf{Inverse GFT. } 
After evolving the spectral coefficients $\tilde{\mathbf{H}}\left(t_k\right)$ and $\tilde{\mathbf{Z}}\left(t_k\right)$, we recover the node-level signals via the inverse GFT:
\begin{equation}
\mathbf{H}\left(t_k\right)=\mathbf{U}_{(:, 0: M)} \tilde{\mathbf{H}}\left(t_k\right), 
\quad 
\mathbf{Z}_c\left(t_k\right)=\mathbf{U}_{(:, 0: M)} \tilde{\mathbf{Z}}\left(t_k\right),
\end{equation}
where $\mathbf{U}_{(:, 0: M)}$ (the matrix of the first $M$ eigenvectors) is the same truncated basis used during the forward GFT. Finally, we restore the global translation by adding back the mean $\overline{\mathbf{Z}}$ that was subtracted earlier:
\begin{equation}
\mathbf{Z}\left(t_k\right)=\mathbf{Z}_c\left(t_k\right)+\overline{\mathbf{Z}}.
\end{equation}

\textcolor{black}{\textbf{Graph Neural Network Decoder.}}
We refine local interactions at each predicted time $t_k$ with a  \textcolor{black}{{GNN decoder}} that again operates on 
$\bigl[\mathbf{H}(t_k), \mathbf{Z}(t_k)\bigr]$.
This step can help capture short-range correlations that may not be fully resolved in the spectral update.
The \textcolor{black}{{decoder GNN}} has a structure \textcolor{black}{{similar to the encoder:}}
\begin{align}
\mathbf{m}_{ij}^{(\mathrm{dec})} &= \phi_e^{\prime}\!\Bigl(\mathbf{h}_i(t_k),\,\mathbf{h}_j(t_k),\,\mathbf{r}_{ij}^{(\mathrm{dec})}(t_k)\Bigr), \\
\mathbf{h}_i^{(\mathrm{dec})}(t_k) &= \phi_h^{\prime}\!\Bigl(\mathbf{h}_i(t_k),\,\sum_{j \in \mathcal{N}(i)} \mathbf{m}_{ij}^{(\mathrm{dec})}\Bigr), \\
\mathbf{x}_i^{(\mathrm{dec})}(t_k) &= \mathbf{x}_i(t_k)
  \;+\;
  \frac{1}{|\mathcal{N}(i)|} \sum_{j \in \mathcal{N}(i)} 
  \psi^{\prime}\!\bigl(\mathbf{m}_{ij}^{(\mathrm{dec})}\bigr),
\end{align}
\textcolor{black}{{and similarly for $\mathbf{v}_i$. Here, $\mathbf{r}_{ij}^{(\mathrm{dec})}(t_k) = \mathbf{x}_i(t_k) - \mathbf{x}_j(t_k)$, and $\phi_e^{\prime}$, $\phi_h^{\prime}$, $\psi^{\prime}$ are learnable functions.}}

\section{Experiments}

In this section, we provide a comprehensive evaluation of our proposed model on molecular dynamics datasets, comparing its performance against several baselines and conducting detailed ablation studies to assess the contributions of the different components.
Our experimental evaluation is designed to address the following key \textbf{research questions}: 
\begin{itemize}[noitemsep, topsep=0pt, partopsep=0pt, parsep=0pt, leftmargin=*]
\item \textbf{RQ1: Prediction Accuracy.} Does the proposed GF-NODE framework deliver improved molecular dynamics prediction accuracy compared to state-of-the-art methods? 
\item \textbf{RQ2: Continuous Time Modeling.} How effectively does the continuous-time evolution component capture multiscale temporal dynamics—including long-horizon forecasting and super-resolution—compared to variants without continuous-time propagation? 
\item \textbf{RQ3: Spatial Multiscale Modeling.} How crucial is the explicit spectral decomposition for capturing spatial multiscale interactions, and how do different Fourier mode interaction schemes and the number of retained modes affect overall performance? 
\item \textbf{RQ4: Theoretical Justification.} Can we rigorously establish the correspondence between spatial frequency modes (graph Laplacian eigenvectors) and temporal dynamics scales, providing theoretical foundation for the GF-NODE architecture?
\end{itemize}

\subsection{Dataset}
\vspace{-0.5em}

We evaluate our model using both the Revised MD17 dataset~\citep{christensen2020revised} and the original MD17 dataset~\citep{chmiela2017machine}, which contain molecular dynamics trajectories for small molecules including Aspirin, Benzene, Ethanol, Malonaldehyde, Naphthalene, Salicylic Acid, Toluene, and Uracil. The datasets provide atomic trajectories simulated under quantum mechanical forces, capturing realistic molecular motions. To demonstrate scalability, we further evaluate on five larger molecular systems with 20--326 heavy atoms: alanine dipeptide (Ala$_2$) from MDShare, Ac-Ala$_3$-NHMe, AT-AT-CG-CG, Bucky-Catcher, and a double-walled carbon nanotube, from the MD22 dataset~\citep{chmiela2023machine}. These larger systems exhibit complex multiscale dynamics spanning local bond vibrations to global conformational changes.

{
We also evaluated our model on the alanine dipeptide dataset~\citep{schreiner2024implicit}, a standard benchmark for studying conformational dynamics in proteins. 
}
Our task is to predict the future positions of atoms given the initial state of the molecular system.

\subsection{Experimental Setup}
\vspace{-0.5em}

For each molecule, we partition the trajectory data into training, validation, and test sets, using 500 samples for training, 2000 for validation, and 2000 for testing. The time scope $\Delta T$ for each piece of data is set to 3000 simulation steps, providing a challenging prediction horizon. To demonstrate long-term stability, we extend our evaluation to $\Delta t = 10{,}000$ steps and up to $15{,}000$ MD steps for comprehensive temporal analysis. 

A key aspect of our experimental setup is the use of \textbf{irregular timestep sampling}, in contrast to the equi-timestep sampling used in some baseline models like EGNO, to better mimic the variable time intervals in real-world physical systems. This setting tests the models' ability to handle irregular temporal data. 
Although each trajectory spans 3000 simulation steps, we randomly sample only 8 data points per instance. Because these samples are drawn from different points in the 3000-step window across data instances, this strategy enables the model to learn the dynamics over the entire time span without requiring training on every timestep, thereby significantly enhancing \textbf{efficiency}.
Nevertheless, we also provide evaluations based on equi-timestep sampling in Appendix~\ref{app:experiments} for completeness.


\subsection{Baseline Models}
\vspace{-0.5em}

We compare our model against several state-of-the-art approaches:

\begin{itemize}[noitemsep, topsep=0pt, partopsep=0pt, parsep=0pt, leftmargin=*]
    \item \textbf{NDCN}~\citep{zang2020neural}: A Graph Neural ODE model that integrates graph neural networks into the ODE framework to learn continuous-time dynamics of networked systems.
    \item \textbf{LG-ODE}~\citep{huang2020learning}: A latent graph-based ODE model that integrates latent representations into continuous-time evolution.
    \item \textbf{EGNN}~\citep{satorras2021n}: An Equivariant Graph Neural Network that models molecular systems using 3D equivariant message passing but without explicit time propagation.
    \item \textbf{EGNO}~\citep{xu2024equivariant}: An Equivariant Graph Neural Operator that captures temporal dynamics using neural operators with regular timesteps.
    \item \textbf{ITO}~\citep{schreiner2024implicit}: An Implicit Time-stepping Operator that integrates differential equations into the learning process for temporal evolution.
\end{itemize}

These baselines represent a range of approaches for modeling molecular dynamics, including methods that focus primarily on spatial modeling (EGNN) or temporal modeling (NDCN, LG-ODE, EGNO, ITO).

\subsection{Results and Analysis (RQ1)}
\vspace{-0.5em}

\emph{Units and Calculations:} Note that the alanine dipeptide dataset operates in nanometers (nm), whereas the MD17 dataset uses angstroms (\AA). When calculating bond lengths and bond angles, we consider all heavy atoms (excluding hydrogen) to focus on the core structure.

\begin{figure*}[!ht]
\centering
\includegraphics[width=0.95\linewidth]{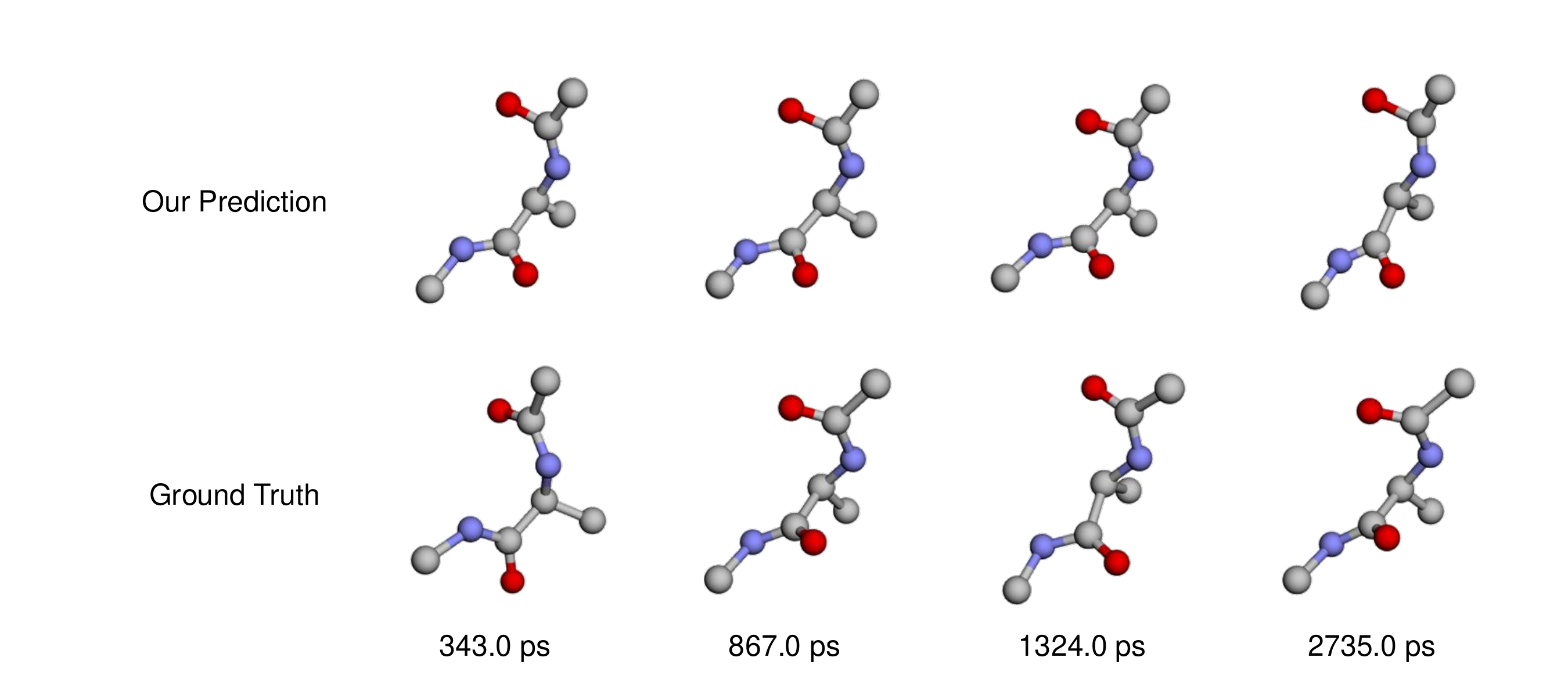}

\caption{Representative snapshots of alanine dipeptide, comparing our model's predicted conformations (top row) against the ground-truth simulation (bottom row) at four different timestamps. The close agreement illustrates the model's ability to preserve key structural features over an extended trajectory.}
\label{fig:traj}
\vspace{-1em}
\end{figure*}

To address \textbf{RQ1}, we evaluate the performance of our model and the baselines on both the Revised MD17 dataset and larger molecular systems, with Figure~\ref{fig:traj} providing visualization of our predictions. Our comprehensive evaluation includes results at multiple time horizons ($\Delta t = 3000$ and $10{,}000$ steps) and extended evaluations up to $15{,}000$ MD steps. The test Mean Squared Error (MSE) for our model and the baseline methods are summarized in Tables~\ref{tab:md17_uneven} and~\ref{tab:alanine_mse} for the original evaluation, under \textbf{irregular} timestep sampling. Comprehensive results on the Revised MD17 dataset and larger molecular systems can be found in Tables~\ref{tab:md17_baseline_comparison_dt3000}--\ref{tab:main_dt10000} in the appendix, demonstrating consistent improvements across all systems and time horizons. For completeness, we also provide results in the equi-timestep setting in Table~\ref{tab:table_reg_timestep_md17} in Appendix~\ref{app:experiments}. 

\begin{table*}[!ht]
  \centering
  \vspace{-1em}
  \caption{MSE ($\times 10^{-2}$ \AA$^2$) on the MD17 dataset with {\bf irregular} timestep sampling. Best results are in \textbf{bold}, and second-best are \underline{underlined}.}
  \resizebox{\textwidth}{!}{
    \begin{tabular}{lcccccccc}
    \toprule
    & Aspirin & Benzene & Ethanol & Malonaldehyde & Naphthalene & Salicylic & Toluene & Uracil \\
    \midrule
    NDCN & 29.75{$\pm$0.02} & 70.13{$\pm$0.98} & 10.05{$\pm$0.02} & 42.28{$\pm$0.07} & 2.30{$\pm$0.00} & 3.43{$\pm$0.05} & 12.33{$\pm$0.00} & 2.39{$\pm$0.00} \\ 
    LG-ODE & 51.65{$\pm$0.01} & 68.29{$\pm$0.21} & 12.32{$\pm$0.05} & 43.95{$\pm$0.07} & 2.38{$\pm$0.02} & 2.85{$\pm$0.08} & 18.11{$\pm$0.09} & 2.38{$\pm$0.07} \\ 
    EGNN & \underline{9.09}{$\pm$0.10} & \underline{49.15}{$\pm$1.68} & \underline{4.46}{$\pm$0.01} & \underline{12.52}{$\pm$0.05} & \underline{0.40}{$\pm$0.02} & \underline{0.89}{$\pm$0.01} & \underline{8.98}{$\pm$0.09} & \underline{0.64}{$\pm$0.00}  \\ 
    EGNO & 10.60{$\pm$0.01} & 52.53{$\pm$2.40} & 4.52{$\pm$0.06} & 12.89{$\pm$0.06} & 0.46{$\pm$0.01} & 1.07{$\pm$0.00} & 9.31{$\pm$0.10} & 0.67{$\pm$0.01}  \\
    ITO & 12.74{$\pm$0.10} & 57.84{$\pm$0.86} & 7.23{$\pm$0.00} & 19.53{$\pm$0.01} & 1.77{$\pm$0.01} & 2.53{$\pm$0.03} & 9.96{$\pm$0.04} & 1.71{$\pm$0.15}  \\ 
    \midrule
    Ours & \textbf{6.46}{$\pm$0.03} & \textbf{1.52}{$\pm$0.08} & \textbf{2.74}{$\pm$0.05} & \textbf{10.54}{$\pm$0.01} & \textbf{0.23}{$\pm$0.02} & \textbf{0.63}{$\pm$0.01} & \textbf{1.80}{$\pm$0.05} & \textbf{0.41}{$\pm$0.01} \\ 
    \bottomrule
    \end{tabular}
  }
  \label{tab:md17_uneven}
  \vspace{-5pt}
\end{table*}

\begin{table*}[!ht]
  \centering
  \caption{Mean Absolute Errors (MAEs) and relative errors for bond lengths and bond angles on alanine dipeptide. Best results are in \textbf{bold}.}
  \vspace{0.5em}
  \resizebox{1.0\linewidth}{!}{
  \begin{tabular}{lcccc}
    \toprule
    \textbf{Model} 
    & \textbf{Bond Length MAE (nm)} & \textbf{Rel. Err. (\%)} 
    & \textbf{Bond Angle MAE ($^\circ$)} & \textbf{Rel. Err. (\%)} \\
    \midrule
    EGNN & $0.0209 \pm 0.0006$ & $15.32 \pm 0.49$ & $12.44 \pm 0.91$ & $10.48 \pm 0.76$ \\
    EGNO & $0.0229 \pm 0.0018$ & $16.75 \pm 1.23$ & $10.54 \pm 0.11$ & $8.89 \pm 0.11$  \\
    \textbf{Ours} & $\mathbf{0.0188} \pm 0.0022$ & $\mathbf{13.74} \pm 1.66$ 
    & $\mathbf{10.47} \pm 1.03$ & $\mathbf{8.80} \pm 0.89$ \\
    \bottomrule
  \end{tabular}
  }
  \label{tab:alanine_bond_angles}
\end{table*}

\begin{wraptable}{r}{0.4\textwidth}
\vspace{-0.5em}
  \centering
  \caption{Mean Squared Error (MSE) ($\times 10^{-3}$ nm$^2$) on the alanine dipeptide dataset. Best results are in \textbf{bold}.}
  \vspace{0.5em}
  \begin{tabular}{lc}
    \toprule
    \textbf{Model} & \textbf{MSE} ($\times 10^{-3}$ nm$^2$) \\
    \midrule
    NDCN      & $12.27 \pm 0.19$ \\
    ITO       & $26.95 \pm 0.19$ \\
    EGNO      & $6.92 \pm 0.26$  \\
    EGNN      & $5.67 \pm 0.08$  \\
    \textbf{Ours} & $\mathbf{4.48} \pm 0.07$ \\
    \bottomrule
  \end{tabular}
  \vspace{-1em}
  \label{tab:alanine_mse}
\end{wraptable}

From Table~\ref{tab:md17_uneven}, we observe that our model consistently outperforms the baseline methods across all eight molecules under irregular timestep sampling. This demonstrates the effectiveness of our approach in jointly modeling spatial and temporal multiscale interactions. The performance gains are particularly pronounced for molecules such as Benzene and Aspirin, where our model significantly reduces the MSE compared to the baselines. Our comprehensive evaluation on the Revised MD17 dataset and five larger molecular systems (20--326 heavy atoms) shows that GF-NODE achieves the best accuracy on \emph{all} cases across multiple time horizons ($\Delta t = 3000$ and $10{,}000$ steps), with detailed results provided in the appendix (Tables~\ref{tab:md17_baseline_comparison_dt3000}--\ref{tab:main_dt10000}). The model demonstrates superior long-horizon stability up to $15{,}000$ MD steps (Figures~\ref{fig:mse_trends_rmd17} and~\ref{fig:large_combined}), capturing both short-range covalent vibrations and long-range collective modes such as radial breathing in carbon nanotubes.

\textbf{Benzene Drifting.} Interestingly, we find that Benzene exhibits substantial drift during simulation. In our data, the maximum $x$-coordinate is 197.981\,\AA, while the minimum $x$-coordinate is -178.112\,\AA, indicating large translations and rotations of the entire molecule. In contrast, other molecules exhibit minimal net translation, typically remaining within $\pm 3$\AA{} from the origin. Methods that enforce strict invariance to translations and rotations (e.g., EGNN) may underperform on such drifting systems, since the global drift is part of the actual dynamics. Indeed, we discovered that \emph{replacing EGNN-based layers with standard message passing layers} (e.g., SAGEConv) \emph{can further boost performance on drifting molecules} like Benzene. We provide details of the dataset statistics in Table~\ref{tab:molecular_stats} of Appendix~\ref{app:setup}, and experiment results using different GNN architectures in Table~\ref{tab:md17_uneven_arch_ablation_revised} of Appendix~\ref{app:experiments}.

\begin{wrapfigure}{r}{0.37\textwidth} \vspace{-1em} \centering \includegraphics[width=\linewidth]{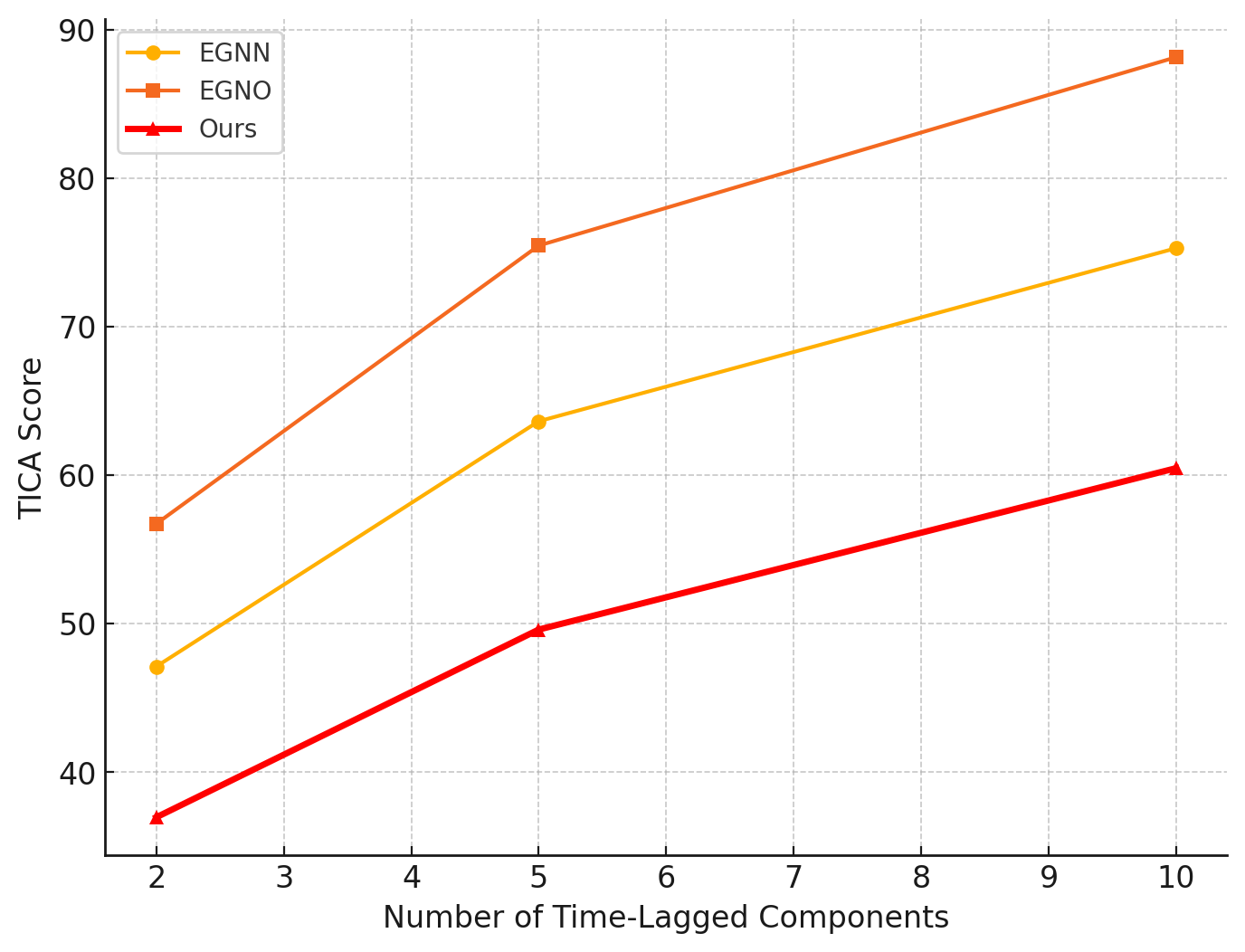} \caption{TICA scores for varying numbers of components. Lower scores indicate better alignment with slow modes.} \label{fig:tica_score} 
\vspace{-1.5em}
\end{wrapfigure}

\textbf{Experimental Results on Alanine Dipeptide.} In addition, from Table~\ref{tab:alanine_mse}, our model achieves the lowest MSE among all compared methods. This indicates a significant improvement on more complex molecular dynamics data and demonstrates the robustness of our approach.

\textbf{Analysis of Molecular Structure Recovery.} 
To further evaluate predictions at a structural level, we analyzed the bond lengths and bond angles for alanine dipeptide. 
Table~\ref{tab:alanine_bond_angles} shows the Mean Absolute Errors (MAEs) and relative errors for bond lengths and angles. 
Our method achieves the lowest errors, indicating superior recovery of internal molecular structures compared to baselines. Additionally, extensive radial distribution function (RDF) analysis for five representative systems (Figures~\ref{fig:rdf_average}--\ref{fig:rdf_carbons} in Appendix~\ref{sec:rdf_analysis}) confirms that GF-NODE accurately reproduces both first-shell peaks and longer-range oscillations of the \emph{ab initio} reference, demonstrating that predicted trajectories maintain realistic structural correlations well beyond pairwise coordinate accuracy.

\paragraph{TICA Analysis.}
Time-lagged Independent Component Analysis (TICA)\citep{molgedey1994separation} is used to extract slow collective motions from the trajectories. Figure~\ref{fig:tica_score} shows that, across various numbers of TICA components, our model consistently achieves lower TICA scores (i.e., better alignment with the underlying slow modes) compared to EGNN and EGNO, indicating more effective capture of the underlying multiscale dynamics.

\subsection{Continuous-Time Dynamics and Temporal Super-Resolution (RQ2)}

\textbf{Ablation on continuous time components. }To assess our model's ability to capture temporal multiscale dynamics, we perform an ablation study on the continuous-time components. Specifically, we investigate: 
\begin{enumerate}[noitemsep, topsep=0pt, partopsep=0pt, parsep=0pt, leftmargin=*]
\item \textbf{No ODEs Evolution:} Removing the continuous-time evolution altogether. 
\item \textbf{No ODEs on Scalar Channels:} Freeze the ODEs for among scalar features $\mathbf{h}$ during time propagation. 
\item \textbf{No ODEs on Vector Channels:} Freeze the ODEs among vector features $\mathbf{x}$ during time propagation.
\end{enumerate} 

Table~\ref{tab:time_ablation} reports the test errors (MSE) for these variants across several molecules. The inferior performance of the ablated models confirms the importance of the continuous-time dynamics and the effectiveness of the block-diagonal architecture that jointly propagates scalar and vector features.
\begin{table*}[!ht]
\vspace{-2em}
  \centering
  \caption{Ablation study on continuous-time evolution components (MSE $\times 10^{-2}$ \AA$^2$ values). Lower values indicate better performance. All results are \textbf{inferior} to our standard model (w/ ode). }
  \resizebox{\textwidth}{!}{
    \begin{tabular}{lcccccccc}
    \toprule
                    & Aspirin                & Benzene              & Ethanol              & Malonaldehyde         & Naphthalene             & Salicylic              & Toluene                & Uracil                \\
    \midrule
    no\_ode       & {6.56$\pm$0.03}   & {1.87$\pm$0.08} & {3.09$\pm$0.05} & 11.58$\pm$0.02         & {0.42$\pm$0.02}   & {0.86$\pm$0.01}   & {3.06$\pm$0.05} & 0.61$\pm$0.01         \\ 
    no\_ode\_h    & 6.77$\pm$0.05           & 1.89$\pm$0.06         & {3.11$\pm$0.03} & {10.70$\pm$0.02} & {0.42$\pm$0.01}  & 0.88$\pm$0.02          & 3.48$\pm$0.04          & {0.59$\pm$0.02}  \\ 
    no\_ode\_x    & 6.95$\pm$0.03           & {1.79$\pm$0.07} & 3.74$\pm$0.05         & {10.61$\pm$0.03}  & 0.42$\pm$0.01           & {0.87$\pm$0.01} & {2.89$\pm$0.04}  & {0.59$\pm$0.01} \\ 
    \midrule 
    w/ ode & \textbf{6.46}{$\pm$0.03} & \textbf{1.52}{$\pm$0.08} & \textbf{2.74}{$\pm$0.05} & \textbf{10.54}{$\pm$0.01} & \textbf{0.23}{$\pm$0.02} & \textbf{0.63}{$\pm$0.01} & \textbf{1.80}{$\pm$0.05} & \textbf{0.41}{$\pm$0.01} \\
    \bottomrule
    \end{tabular}
  }
  \label{tab:time_ablation}
  \vspace{-5pt}
\end{table*}

\textbf{Super-Resolution Task.}
To further validate temporal generalization, we perform a super-resolution experiment on the alanine dipeptide dataset by predicting trajectories at a 10$\times$ finer temporal resolution than the training samples, under the equi-timestep setting. Figure~\ref{fig:super_resolution} compares the MSE for the original versus the super-resolved predictions. Our model maintains low error under super-resolution, demonstrating its ability to interpolate continuous dynamics effectively.

\textbf{Long-Term Prediction Stability.}
We evaluate the stability of long-term predictions across extended time horizons. We test models at 1000, 2000, 3000, 4000, and 5000 simulation steps on two representative molecules (Benzene and Malonaldehyde). Figure~\ref{fig:long_term_prediction} illustrates how errors evolve over these extended horizons. Our model maintains superior performance with slower error growth, indicating better capture of global low-frequency dynamics essential for accurate long-term predictions. In contrast, baselines lacking explicit multiscale modeling accumulate errors more rapidly. Extended evaluations up to $15{,}000$ MD steps on both the Revised MD17 dataset (Figure~\ref{fig:mse_trends_rmd17}) and larger molecular systems (Figure~\ref{fig:large_combined}) further demonstrate the superior long-horizon stability of our approach, with computational efficiency analysis provided in Figure~\ref{fig:modes_speed_err}.

Table~\ref{tab:time_ablation} in Appendix~\ref{app:experiments} provides additional ablations on the different types of temporal embeddings used in the ODE functions. A simple concatenation of the timestamp would work well enough. 

\begin{figure*}[ht]
  \centering
  \begin{minipage}[t]{0.48\textwidth}
    \centering
    \includegraphics[width=\linewidth]{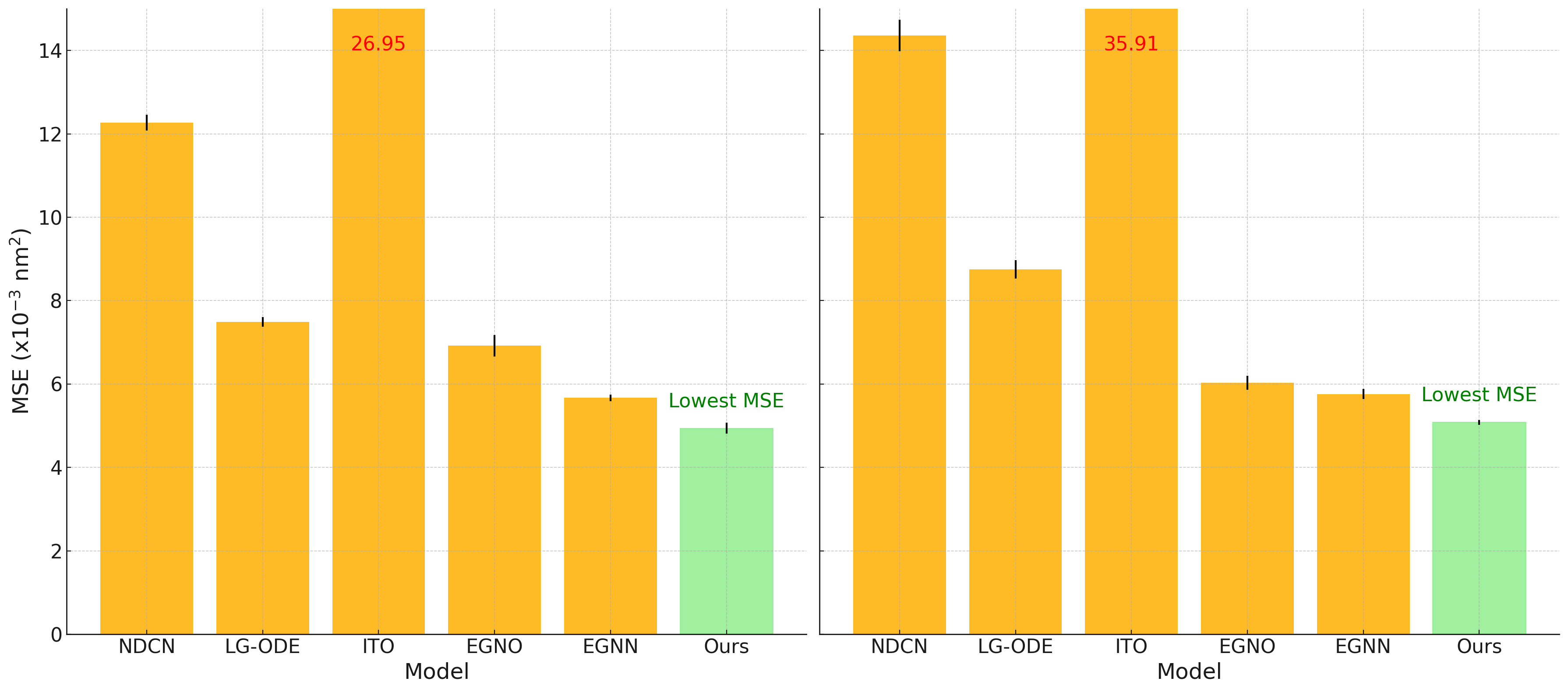}
    \caption{Comparison of MSE on the alanine dipeptide dataset: Original vs. 10$\times$ Super-resolution.}
    \label{fig:super_resolution}
  \end{minipage}\hfill
  \begin{minipage}[t]{0.48\textwidth}
    \centering
    \includegraphics[width=\linewidth]{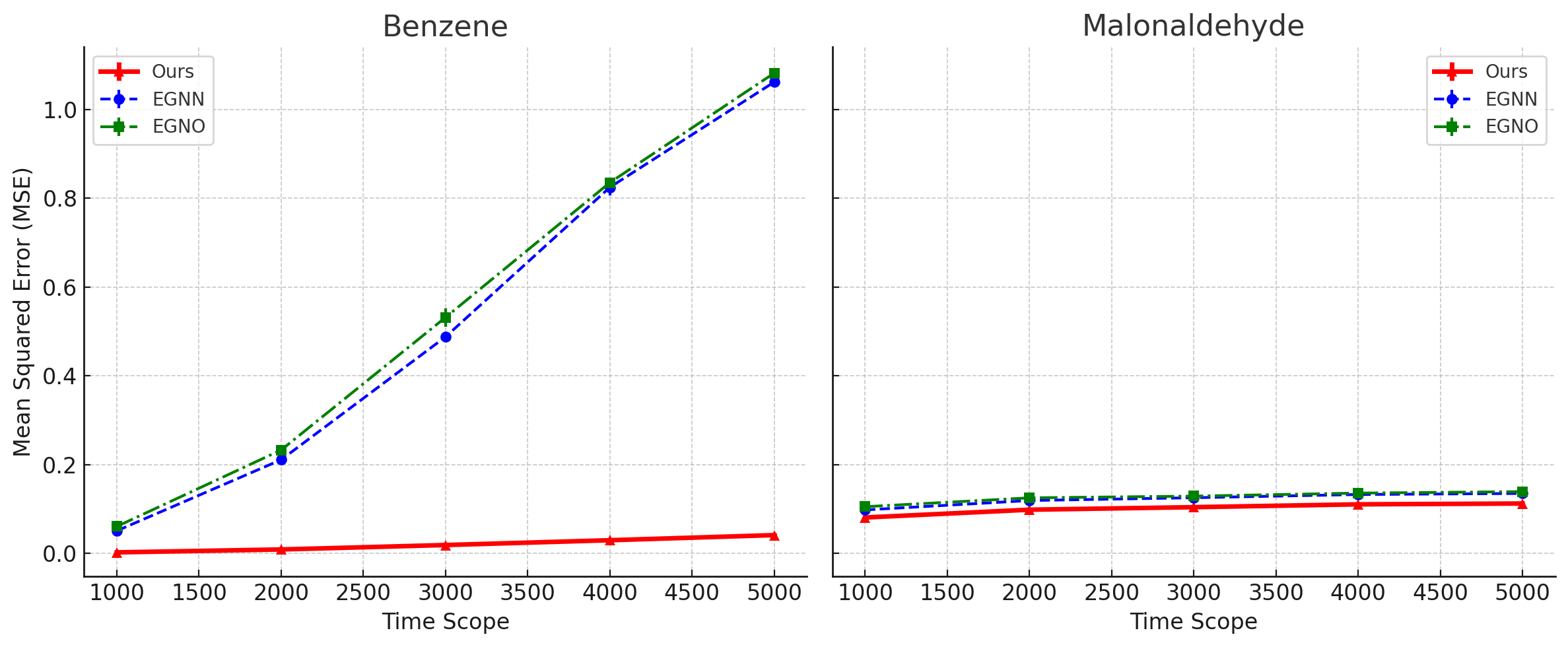}
    \caption{Error growth over long-term forecasts (up to 5000 steps) for Benzene and Malonaldehyde. Models with too high an error not presented here.}
    \label{fig:long_term_prediction}
  \end{minipage}
\end{figure*}

\subsection{Spatial Frequency Decomposition and Multiscale Analysis (RQ3)}
\vspace{-0em}
To examine the role of spatial multiscale modeling, we perform ablations on the spectral decomposition and mode interaction components. We consider two sets of modifications: 
\begin{enumerate}[noitemsep, topsep=0pt, partopsep=0pt, parsep=0pt, leftmargin=*]
\item \textbf{Ablation of the Spatial Decomposition:}
\begin{itemize}[noitemsep, topsep=0pt, partopsep=0pt, parsep=0pt, leftmargin=*]
\item \emph{No Fourier-based Decomposition:} The model is run without any frequency-based transformation. The spectral decomposition is replaced by a multi-layer perception. 
\item \emph{Replacing GFT with FFT:} The Graph Fourier Transform is substituted with a standard Fast Fourier Transform, disregarding the graph structure. 
\end{itemize} 
\item 
\textbf{Fourier Mode Interaction Schemes:}
We compare different strategies for inter-mode communication: 
\begin{itemize}[noitemsep, topsep=0pt, partopsep=0pt, parsep=0pt, leftmargin=*]
\item \emph{Attention-based Interaction:} Each Fourier mode interacts with others via a multi-head self-attention mechanism (Used in our standard model). 
\item \emph{Concatenation-based Interaction:} Modes are concatenated before being processed. 
\item \emph{No Interaction:} Each mode is propagated independently. 
\end{itemize} 
\end{enumerate} 

Table~\ref{tab:ablation_spatial} compiles the results for these spatial ablations. We observe that using the GFT for spectral decomposition—rather than an MLP or FFT—is most effective, underscoring the importance of capturing the inherent graph structure. Moreover, interaction schemes (whether via attention or concatenation) improve performance over treating modes independently.

\begin{table*}[!ht]
  \centering
  \vspace{-1.5em}
  \caption{MSE ($\times 10^{-2}$ \AA$^2$) on the MD17 dataset with {\bf irregular} timestep sampling for variants in spatial decomposition and Fourier modes interactions. Best results are in \textbf{bold}, the standard model (GFT \&\ Attn.). }
  \resizebox{\textwidth}{!}{
    \begin{tabular}{lcccccccc}
    \toprule
     & Aspirin & Benzene & Ethanol & Malonaldehyde & Naphthalene & Salicylic & Toluene & Uracil \\
    \midrule
    FFT & {6.53$\pm$0.03} & 1.94$\pm$0.08 & 3.36$\pm$0.05 & 10.83$\pm$0.01 & {0.43$\pm$0.02} & {0.88$\pm$0.01} & 3.36$\pm$0.05 & 0.60$\pm$0.01 \\
    No Fourier & 6.63$\pm$0.04 & 1.99$\pm$0.10 & {3.28$\pm$0.06} & 10.68$\pm$0.07 & 0.43$\pm$0.02 & 0.88$\pm$0.01 & 3.73$\pm$0.09 & 0.60$\pm$0.05 \\
    \midrule 
    No Interaction & {6.51$\pm$0.03} & {1.78$\pm$0.08} & 3.28$\pm$0.05 & {10.64$\pm$0.01} & {0.41$\pm$0.02} & {0.87$\pm$0.02} & {3.08$\pm$0.05} & {0.59$\pm$0.01} \\
    Concat. & 6.55$\pm$0.03 & {1.76$\pm$0.07} & {3.13$\pm$0.05} & {10.58$\pm$0.01} & 0.43$\pm$0.02 & 0.88$\pm$0.01 & {2.80$\pm$0.05} & {0.58$\pm$0.01} \\
    \midrule 
    GFT \&\ Attn. & \textbf{6.46}{$\pm$0.03} & \textbf{1.52}{$\pm$0.08} & \textbf{2.74}{$\pm$0.05} & \textbf{10.54}{$\pm$0.01} & \textbf{0.23}{$\pm$0.02} & \textbf{0.63}{$\pm$0.01} & \textbf{1.80}{$\pm$0.05} & \textbf{0.41}{$\pm$0.01} \\
    \bottomrule
    \end{tabular}
  }
  \label{tab:ablation_spatial}
  \vspace{3em}
\end{table*}

\begin{wrapfigure}{r}{0.42\textwidth} 
\vspace{-4.5em} 
\centering \includegraphics[width=\linewidth]{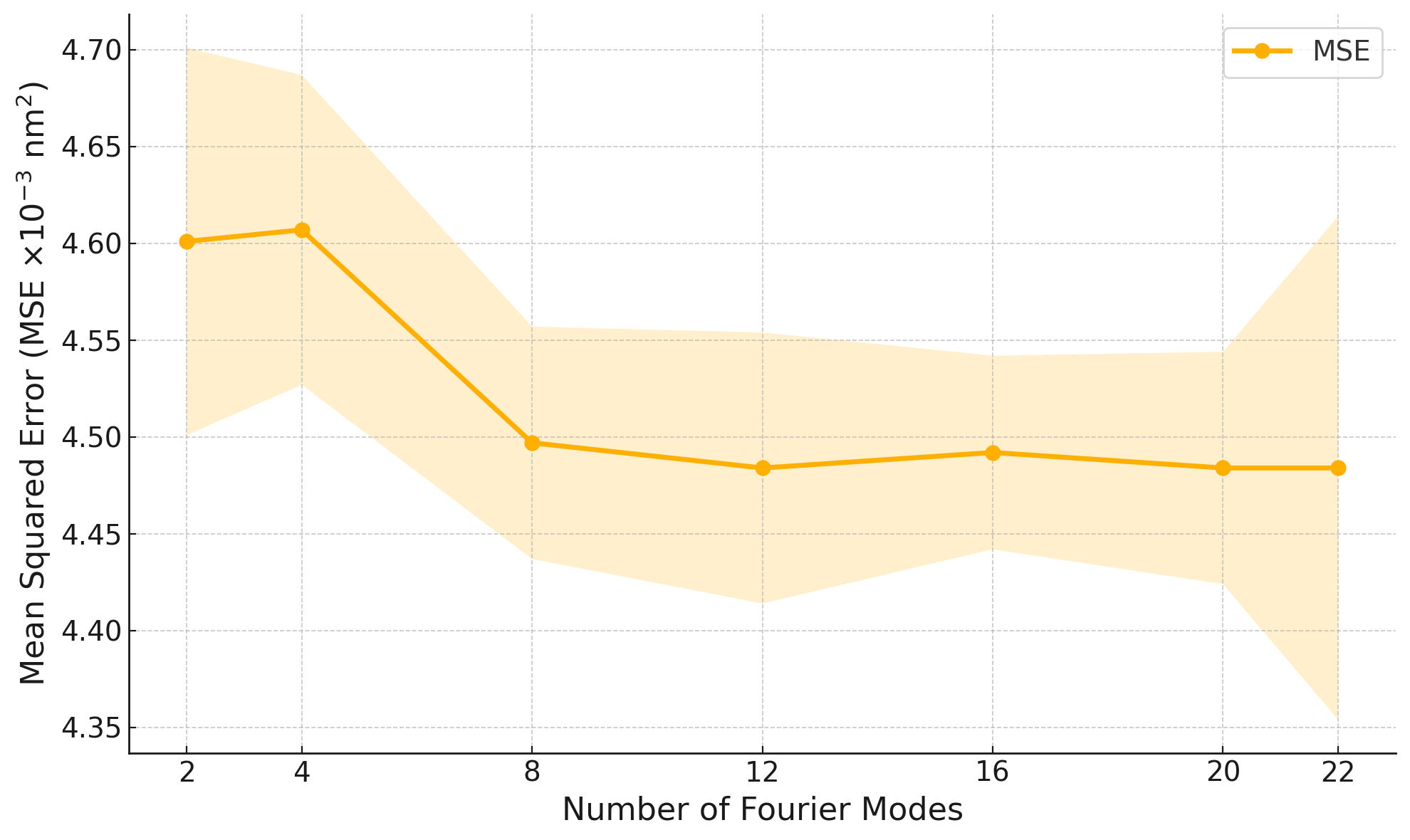} 
\vspace{-1.5em}
\caption{Effect of the number of Fourier modes on prediction performance. Performance plateaus beyond $n=12$. } 
\label{fig:fourier_modes} 
\vspace{-4em}
\end{wrapfigure}

\textbf{Impact of the Number of Fourier Modes.}
Finally, we investigate how the number of retained Fourier modes affects performance. Figure~\ref{fig:fourier_modes} plots the MSE against different numbers of modes used in the spectral decomposition. We observe that the performance improves as more modes are included up to a threshold, beyond which additional modes yield diminishing returns. This behavior is consistent with the theoretical analysis presented earlier in Theorem~\ref{theo:truncate}. The number of modes used to get the optimal results for each type of molecule can be found in Table~\ref{tab:molecular_stats} of Appendix~\ref{app:setup}. 

\subsection{Theoretical Justification and Spatial-Temporal Correspondence (RQ4)}

\begin{figure}[!ht]
  \centering
  \begin{tabular}{cc}
    \includegraphics[width=0.45\textwidth]{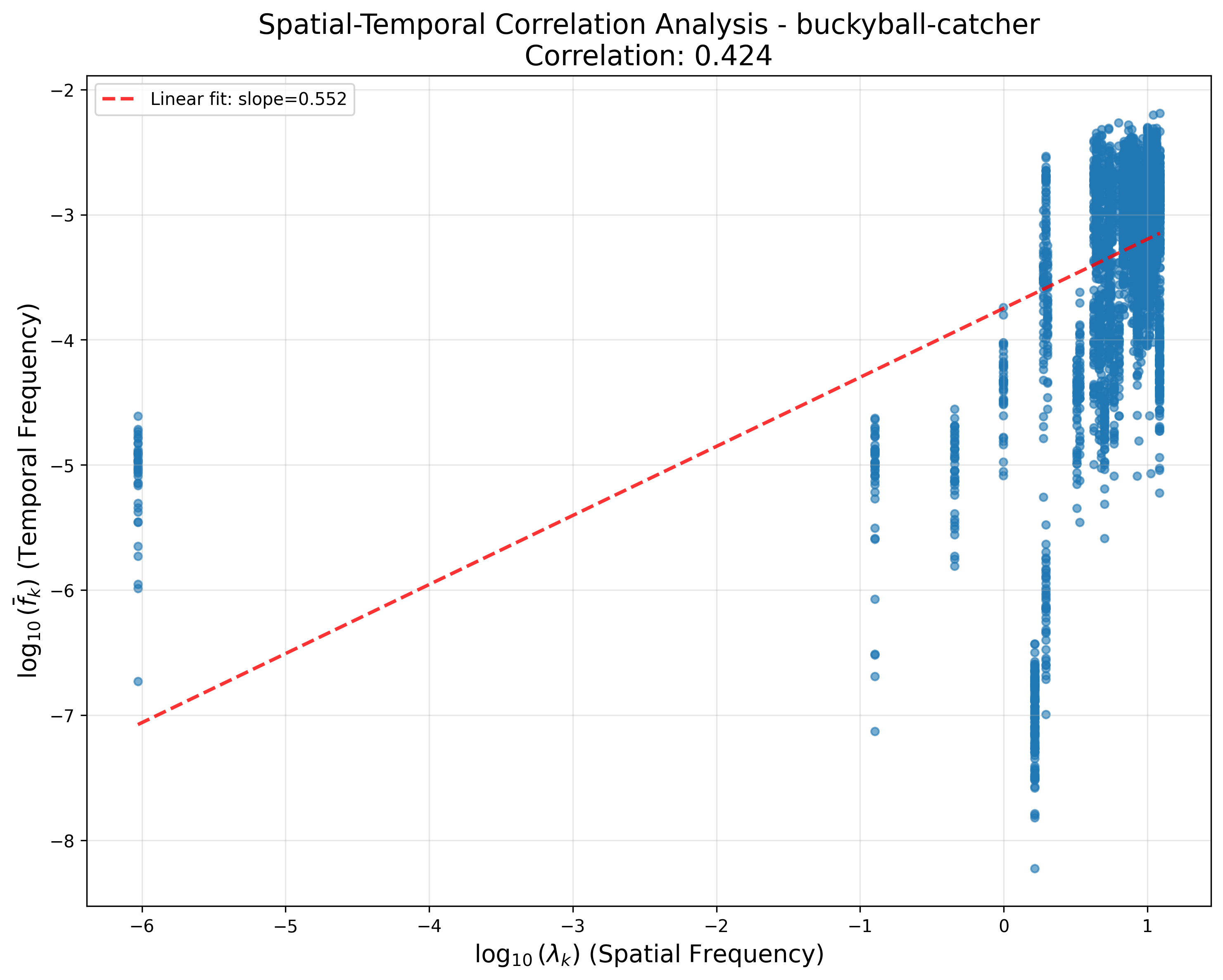} &
    \includegraphics[width=0.45\textwidth]{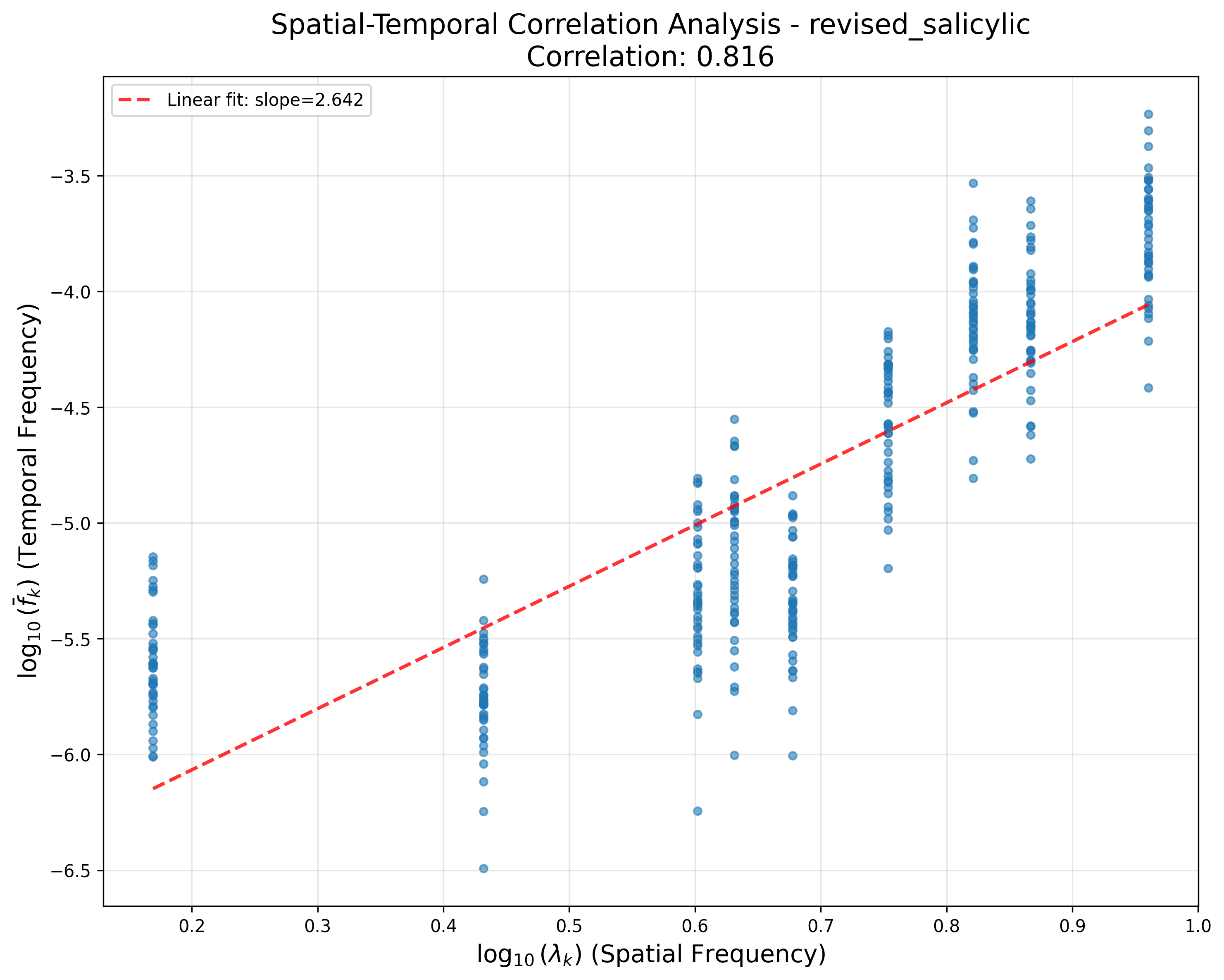} \\[1ex]
    \includegraphics[width=0.45\textwidth]{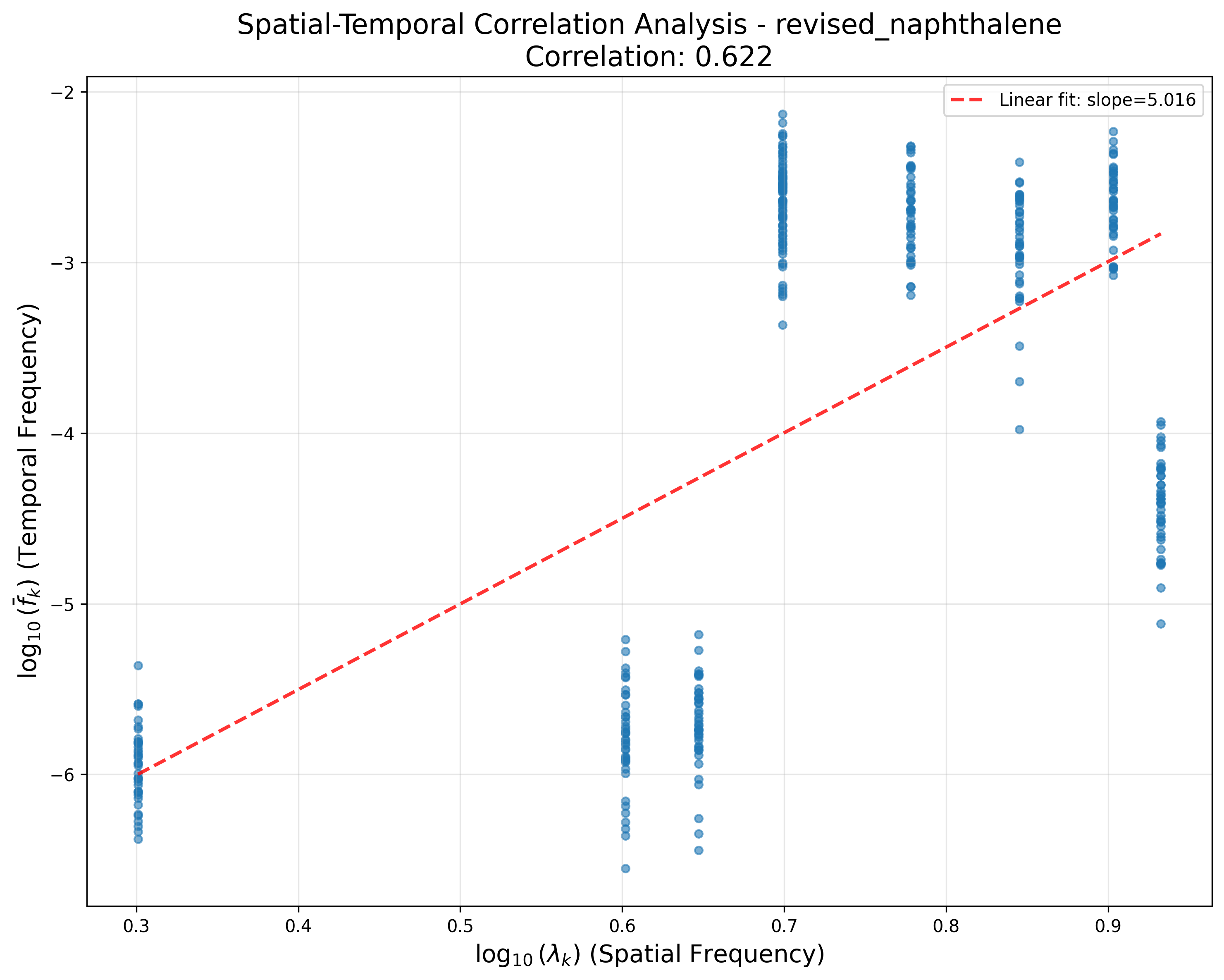} &
    \includegraphics[width=0.45\textwidth]{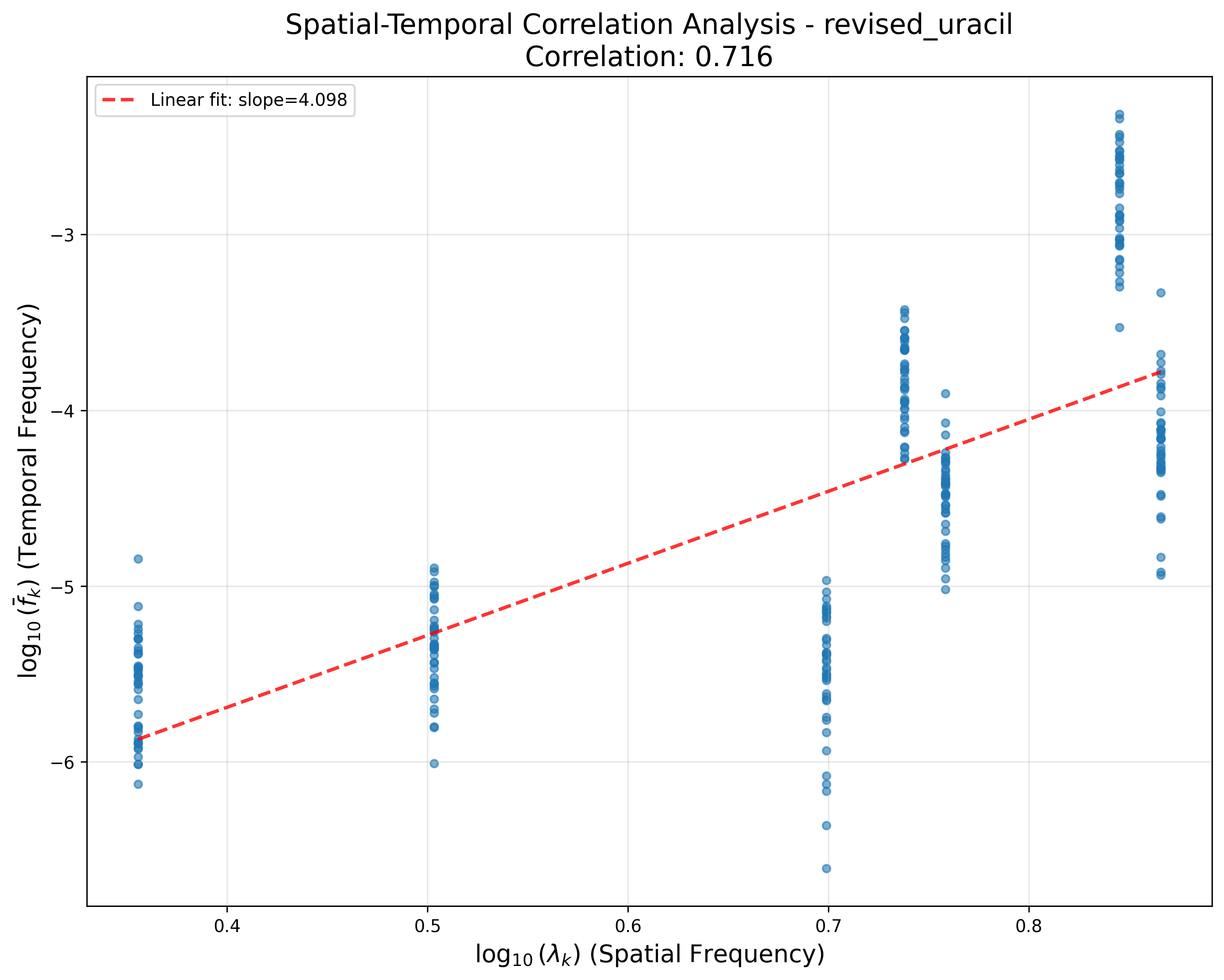}
  \end{tabular}
  \caption{Spatial-temporal correlation analysis across four molecular systems:
    (a) Buckyball-catcher, $r=0.424$, $\alpha=0.552$;
    (b) Salicylic acid,    $r=0.816$, $\alpha=2.642$;
    (c) Naphthalene,       $r=0.622$, $\alpha=5.016$;
    (d) Uracil,            $r=0.716$, $\alpha=4.098$.
    Each panel shows a log-log plot of characteristic temporal frequency $\bar f_k$
    versus spatial frequency $\lambda_k$ (graph Laplacian eigenvalues) for individual eigenmodes.
    The consistently positive correlations validate the predicted correspondence between
    spatial and temporal scales across diverse molecular architectures.}
  \label{fig:spatial_temporal_all}
\end{figure}

To address concerns about the conceptual rigor of linking spatial GFT modes to distinct temporal dynamics, we provide both theoretical insight and comprehensive empirical validation. Our analysis demonstrates that spatial frequency decomposition naturally corresponds to different temporal evolution scales in molecular systems.

\textbf{Theoretical Insight through Simplified Model.} We establish theoretical foundation through heat equation analysis on a simplified diffusion model (detailed in Appendix~\ref{sec:spatio_temporal_analysis}). This analysis shows that on a continuous domain, the graph Laplacian eigenvalues $\lambda_k$ intrinsically determine the temporal evolution rates of different spatial frequency modes, with higher eigenvalues corresponding to faster decay rates. While this provides valuable theoretical insight, it represents a simplified model of molecular dynamics.

\textbf{Empirical Validation on Real Molecular Systems.} To validate this correspondence on actual molecular trajectories, we developed a comprehensive Theoretical Framework for Joint Spatial-Temporal Analysis in Appendix Section~\ref{sec:empirical_validation}that quantifies spatial-temporal correlations across diverse molecular systems. Our empirical results on five representative systems (Naphthalene, Salicylic Acid, Uracil, Bucky-Catcher, and double-walled nanotube), as demonstrated in Figure~\ref{fig:spatial_temporal_all} shows clear quantitative relationships between spatial frequency modes and temporal dynamics scales.

Key findings include: (1) Distinct spatial modes exhibit different temporal autocorrelation decay rates, with low-frequency modes showing slower decay (long-term dynamics) and high-frequency modes showing faster decay (short-term fluctuations). (2) Cross-correlation analysis reveals that spatial modes with similar frequencies exhibit stronger temporal correlations. (3) The empirical correspondence between spatial eigenvalues and temporal scales aligns with theoretical predictions from the simplified heat equation model.

These results provide strong empirical support for our core hypothesis that spatial spectral decomposition effectively separates distinct temporal dynamics scales, justifying the GF-NODE architecture's design principle of processing different frequency modes with tailored temporal evolution pathways.

\section{Conclusion} 

In this work, we presented Graph Fourier Neural ODEs (GF-NODE), a novel framework that unifies spatial spectral decomposition with continuous-time evolution to effectively model the multiscale dynamics inherent in molecular systems. By decomposing molecular graphs via the Graph Fourier Transform, our approach explicitly disentangles global conformational changes from local vibrational modes, and the subsequent Neural ODE-based propagation enables flexible, continuous-time forecasting of these dynamics.

Our extensive evaluations span benchmark datasets including the Revised MD17 dataset, five larger molecular systems (20--326 heavy atoms), and extended temporal horizons up to $15{,}000$ MD steps. GF-NODE achieves state-of-the-art performance across \emph{all} evaluated systems and time horizons, demonstrating superior long-horizon trajectory prediction while accurately preserving essential molecular geometries. Comprehensive structural analyses including radial distribution functions confirm that our model maintains realistic molecular correlations well beyond coordinate-level accuracy. The ablation studies further highlight the critical role of both the spectral decomposition and the continuous-time dynamics in capturing complex spatial-temporal interactions, with formal $\mathrm{SO}(3)$ equivariance guarantees provided. Importantly, our theoretical insight through heat equation analysis on simplified diffusion models, combined with comprehensive empirical validation on real molecular dynamics trajectories, establishes a rigorous foundation for the spatial-temporal correspondence that underlies our approach. The empirical validation through our Theoretical Framework for Joint Spatial-Temporal Analysis demonstrates quantitative relationships between spatial frequency modes and temporal dynamics scales across diverse molecular systems, confirming that spectral decomposition effectively separates distinct temporal evolution patterns.

These findings suggest that incorporating physics-informed spectral decomposition principles into neural architectures represents a promising direction for advancing molecular dynamics modeling, with broader implications for understanding and predicting complex multiscale phenomena in chemical and biological systems.

\section{Broader Impact Statement} 

Our work contributes to advancing molecular dynamics simulations, which have broad implications in scientific discovery, particularly in drug design, materials science, and biomolecular modeling. 
{By improving the accuracy and efficiency of multiscale molecular predictions, our approach could accelerate the discovery of novel therapeutics and facilitate the design of functional materials with tailored properties.} 

While our method relies on data-driven modeling, it does not replace physics-based simulations but rather augments them, reducing computational costs while maintaining interpretability. 
{As with any machine learning-driven approach in scientific domains, care must be taken to ensure model reliability, particularly in high-stakes applications such as drug development. 
Further validation and collaboration with domain experts will be essential to maximize the positive societal impact of our work while mitigating risks related to model uncertainty.}

\section{Acknowledgement}

This work was partially supported by NSF 2211557, NSF 2119643, NSF 2303037, NSF 2312501; the NAIRR Pilot Program (NSF 2202693, NSF 2312501); the SRC JUMP 2.0 Center; Amazon Research Awards; Snapchat Gifts; and C-CAS (Center for Computer Assisted Synthesis, NSF CHE–2202693).

\newpage

\bibliography{main}
\bibliographystyle{tmlr}

\newpage 

\appendix

\section{Experiment Setup}
\label{app:setup}

In this section, we describe key details about the datasets used (MD17 and alanine dipeptide), including the simulation step sizes, and then outline the hyperparameter choices and training procedures.

\subsection{Datasets}

We use the MD17 dataset \citep{chmiela2017machine} for small-molecule dynamics, and the alanine dipeptide dataset \citep{schreiner2024implicit} for conformational analysis in proteins. Statistics for MD17 are provided in Table~\ref{tab:molecular_stats}. From the table, we observe Benzene has a very marked drifting that is much larger that the scale of the molecule itself. The MD17 dataset was generated using ab initio molecular dynamics (MD) simulations with a time step of 0.5 femtoseconds (fs). The simulations were conducted in the NVT ensemble at 500 K for a total duration of 2000 picoseconds (ps). The final dataset was created by subsampling the full trajectory, preserving the Maxwell-Boltzmann distribution for the energies. For the alanine dipeptide dataset, we use the protein fragment's trajectory recorded at a step size of 1.0 ps. 

\begin{table*}[!ht]
    \centering
    \caption{Summary statistics for molecular structures, including the number of atoms, position extrema (X\textsubscript{min}, X\textsubscript{max}, X\textsubscript{mean}), and velocity extrema (V\textsubscript{min}, V\textsubscript{max}, V\textsubscript{mean}). The numbers of modes used to get the optimal results are also listed. }
    \vspace{1em}
    \begin{tabular}{lcccccccc}
      \toprule
      \textbf{Molecule} & \textbf{\#Atoms} &\textbf{\#Modes used} & \textbf{X\textsubscript{min}} & \textbf{X\textsubscript{max}} & \textbf{X\textsubscript{mean}} & \textbf{V\textsubscript{min}} & \textbf{V\textsubscript{max}} & \textbf{V\textsubscript{mean}} \\
      \midrule
      benzene  & 6 & 4 & $-178.112$ & $197.981$ & $-27.737$ & $-0.004$ & $0.003$ & $-0.000$ \\
      aspirin      & 13  & 6 & $-3.720$ & $3.105$ & $0.026$  & $-0.011$ & $0.012$ & $0.000$ \\
      ethanol      & 3  & 3 & $-1.398$ & $1.417$ & $-0.004$ & $-0.011$ & $0.010$ & $-0.000$ \\
      malonaldehyde & 5 & 5 & $-2.397$ & $2.370$ & $0.000$  & $-0.010$ & $0.009$ & $0.000$ \\
      naphthalene  & 10 & 4 & $-2.597$ & $2.593$ & $-0.000$ & $-0.012$ & $0.011$ & $0.000$ \\
      salicylic    & 10  & 8 & $-2.734$ & $2.581$ & $-0.051$ & $-0.013$ & $0.012$ & $-0.000$ \\
      toluene      & 7 & 7 & $-1.990$ & $2.630$ & $-0.015$ & $-0.010$ & $0.012$ & $0.000$ \\
      uracil       & 8 & 6 & $-2.338$ & $2.558$ & $0.012$  & $-0.012$ & $0.011$ & $0.000$ \\
      \bottomrule
    \end{tabular}
    \label{tab:molecular_stats}
\end{table*}

\subsection{Training Setup}

\textbf{Hyperparameters.}
We train all models using the Adam optimizer at a learning rate of $1 \times 10^{-4}$ and apply a weight decay of $1 \times 10^{-15}$ for regularization. Each experiment runs for 5000 epochs, processing batches of size 50 at each training step. For molecular trajectory prediction, we set the sequence length (the number of timesteps) to 8, meaning each training sample contains 8 frames from the overall simulation. In our main experiments, we fix the hidden feature dimensionality to 64 and use a maximum future horizon of 3000 simulation steps when constructing training samples. We also specify the dopri5 ODE solver with relative and absolute tolerances of $1 \times 10^{-3}$ and $1 \times 10^{-4}$, respectively, to integrate the continuous-time model components. Although many of these choices (e.g., total layers, solver tolerances) can be altered, we find these particular settings maintain a good balance of accuracy and computational efficiency. 

\textbf{Training Procedure.}
During training, each mini-batch is formed by sampling short segments of length 8 from the molecule's dynamics trajectory. The model then predicts future positions of atoms after continuous-time evolution, and the Mean Squared Error (MSE) between predicted coordinates and ground-truth coordinates is minimized. We checkpoint models whenever validation performance improves, and at the end of training, we report results using the best-performing checkpoint according to the validation set. In addition, regular evaluations on the test set help track the model's generalization to unseen trajectories. The overall process can be summarized as: (1) load training samples, (2) form mini-batches of molecular frames, (3) perform forward pass through the model to generate predictions, (4) compute the MSE loss, (5) update model parameters via backpropagation, repeating for each epoch until convergence.

By following these procedures with our chosen hyperparameters, we have observed stable convergence across both MD17 and alanine dipeptide datasets, as well as strong generalization to different segments of the trajectory during test runs.

\subsection{Loss Function} 
\label{app:loss_function}

To train the model, we use the Mean Squared Error (MSE) loss, which measures the difference between the predicted atomic positions and the ground truth positions at each predicted time point. Given that the goal is to predict the molecular conformations at time points $ t_1, t_2, \dots, t_K $, the MSE is calculated as follows:

Let $ \mathbf{x}_i^{t_j} \in \mathbb{R}^3 $ be the ground truth 3D coordinates of atom $ i $ at time $ t_j $, and $ \tilde{\mathbf{x}}_i^{t_j} \in \mathbb{R}^3 $ be the predicted coordinates for the same atom at time $ t_j $. The MSE loss is defined as:
\begin{equation}
\mathcal{L}_{\text{MSE}} = \frac{1}{NK} \sum_{j=1}^K \sum_{i=1}^N \left\| \mathbf{x}_i^{t_j} - \tilde{\mathbf{x}}_i^{t_j} \right\|_2^2,
\end{equation}
where $ N $ is the number of atoms and $ K $ is the number of time points.

This loss encourages the model to minimize the Euclidean distance between the predicted and actual atomic positions across all time steps, ensuring accurate trajectory prediction for the molecular system.

The MSE loss is applied at each time point, thus aligning the predicted future states with the true molecular dynamics trajectory. The model is trained by minimizing $ \mathcal{L}_{\text{MSE}} $ over all predicted time points.

\newpage

\section{Theoretical Guarantees on Spectral Decompositions}
\label{appendix:gft}

{{Below, we present a concise mathematical exposition on the theoretical underpinnings of the Graph Fourier Transform (GFT) decomposition used in our framework. We explain how the eigenvalue--eigenvector structure of the graph Laplacian $\mathbf{L}$ induces a decomposition of graph signals into low-frequency (global) and high-frequency (local) modes, and we justify truncating to the first $M$ modes. We follow standard nomenclature in spectral graph theory \citep{chung1997spectral}.}}

\subsection{{Preliminaries and Definitions}}

\begin{definition}[Graph Laplacian]
\label{def:graph_laplacian}
{Let $\mathcal{G} = (\mathcal{V}, \mathcal{E})$ be an undirected graph with $N = |\mathcal{V}|$ vertices. Let $\mathbf{A} \in \mathbb{R}^{N \times N}$ be its adjacency matrix, and let $\mathbf{D}$ be the diagonal degree matrix, where
\begin{equation}
  \mathbf{D}(i, i) \;=\; \sum_{j=1}^N \mathbf{A}(i, j).
\end{equation}
The \textit{graph Laplacian} is defined as
\begin{equation}
  \mathbf{L} \;=\; \mathbf{D} \;-\; \mathbf{A}.
\end{equation}
It is well known that $\mathbf{L}$ is real symmetric and positive semidefinite. In fact, the eigenvalues of the Laplacian matrix are real and non-negative.}
\end{definition}

\begin{definition}[Graph Fourier Transform (GFT)]\label{def:gft}
{Given the eigen-decomposition
\begin{equation}
  \mathbf{L} \;=\; \mathbf{U} \boldsymbol{\Lambda} \mathbf{U}^\top,
\end{equation}
where 
\begin{equation}
  \boldsymbol{\Lambda} \;=\; \mathrm{diag}(\lambda_0, \lambda_1, \ldots, \lambda_{N-1}), 
  \quad 0 = \lambda_0 \le \lambda_1 \le \dots \le \lambda_{N-1},
\end{equation}
and $\mathbf{U} = \bigl[ \mathbf{u}_0 \;|\; \mathbf{u}_1 \;|\; \cdots \;|\; \mathbf{u}_{N-1} \bigr]$ stores the corresponding orthonormal eigenvectors in columns. For a graph signal 
\begin{equation}
  \mathbf{x} \;=\; \bigl(x_1, x_2, \dots, x_N \bigr)^\top \in \mathbb{R}^{N},
\end{equation}
the \textit{Graph Fourier Transform (GFT)} of $\mathbf{x}$ is given by
\begin{equation}
  \widehat{\mathbf{x}} 
  \;=\; \mathbf{U}^\top \mathbf{x}, 
\end{equation}
and the \textit{inverse GFT} is
\begin{equation}
  \mathbf{x}
  \;=\;
  \mathbf{U} \widehat{\mathbf{x}}.
\end{equation}} 
\end{definition}

\subsection{{Truncation and Mode Selection}}

\begin{lemma}[Approximation Error for Spectral Truncation]\label{lem:approx_error}
Let $\mathbf{x} \in \mathbb{R}^N$ be any graph signal, and let $\mathbf{x}_{(M)}$ be its spectral approximation obtained by keeping the first $M$ modes. Then
\begin{equation}
  \| \mathbf{x} - \mathbf{x}_{(M)} \|_2^2
  \;=\;
  \sum_{k=M}^{N-1}
  \bigl| \mathbf{u}_k^\top \mathbf{x} \bigr|^2.
\end{equation}
Moreover, if $\mathbf{x}$ is $\alpha$-bandlimited in the sense that
\begin{equation}
  \mathbf{u}_k^\top \mathbf{x}
  \;=\; 0
  \quad \text{for all } \lambda_k > \alpha,
\end{equation}
then choosing $M$ such that $\lambda_{M-1} \le \alpha$ yields an exact recovery $\mathbf{x} = \mathbf{x}_{(M)}$.
\end{lemma}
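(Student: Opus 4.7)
The plan is to exploit the spectral theorem for the Laplacian: since $\mathbf{L}$ is real symmetric, its eigenvectors $\{\mathbf{u}_0, \ldots, \mathbf{u}_{N-1}\}$ form an orthonormal basis of $\mathbb{R}^N$. So any signal $\mathbf{x} \in \mathbb{R}^N$ admits the expansion $\mathbf{x} = \sum_{k=0}^{N-1} \widehat{x}_k \mathbf{u}_k$ where $\widehat{x}_k := \mathbf{u}_k^\top \mathbf{x}$ is the $k$-th GFT coefficient. This is just the completeness relation $\mathbf{U}\mathbf{U}^\top = \mathbf{I}$ from Definition~\ref{def:gft} applied to $\mathbf{x}$.

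Next, I would rewrite $\mathbf{x}_{(M)} = \mathbf{U}_{(:,0:M)}\mathbf{U}_{(:,0:M)}^\top \mathbf{x}$ as the orthogonal projection of $\mathbf{x}$ onto $\mathrm{span}\{\mathbf{u}_0, \ldots, \mathbf{u}_{M-1}\}$, giving $\mathbf{x}_{(M)} = \sum_{k=0}^{M-1} \widehat{x}_k \mathbf{u}_k$. Subtracting from the full expansion leaves the residual $\mathbf{x} - \mathbf{x}_{(M)} = \sum_{k=M}^{N-1} \widehat{x}_k \mathbf{u}_k$. Taking the squared $\ell^2$-norm and invoking pairwise orthonormality $\mathbf{u}_k^\top \mathbf{u}_\ell = \delta_{k\ell}$ collapses all cross terms, yielding $\|\mathbf{x} - \mathbf{x}_{(M)}\|_2^2 = \sum_{k=M}^{N-1} |\widehat{x}_k|^2$. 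This is essentially Parseval's identity restricted to the tail of the graph spectrum, and it proves the first claim.

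For the bandlimit case, I would proceed as follows. Under the $\alpha$-bandlimit assumption, $\widehat{x}_k = 0$ whenever $\lambda_k > \alpha$. Because the eigenvalues are listed in ascending order, the set of indices with $\lambda_k \le \alpha$ is an initial segment $\{0, 1, \ldots, M^\star - 1\}$ where $M^\star := |\{k : \lambda_k \le \alpha\}|$. The hypothesis ``$\lambda_{M-1} \le \alpha$,'' read as requiring $M$ to encompass all modes not exceeding the cutoff (i.e.\ $M \ge M^\star$), ensures that every index $k \ge M$ satisfies $\lambda_k > \alpha$, and hence $\widehat{x}_k = 0$ by the bandlimit. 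Plugging this into the tail-sum identity from the first part gives $\|\mathbf{x} - \mathbf{x}_{(M)}\|_2^2 = 0$, i.e.\ $\mathbf{x}_{(M)} = \mathbf{x}$.

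Insofar as there is any obstacle, it is purely a matter of bookkeeping around the bandlimit definition: if eigenvalues cluster at or near the threshold $\alpha$, then the condition ``$\lambda_{M-1} \le \alpha$'' is slightly ambiguous about whether $M$ must exhaust all of the $\alpha$-modes. I would either note this convention at the outset or equivalently state the exact-recovery claim with $M$ chosen as the largest index satisfying $\lambda_{M-1} \le \alpha$. Beyond that, no analytic machinery is needed — the lemma is a direct consequence of the spectral theorem for symmetric matrices and the ascending ordering of the Laplacian eigenvalues.
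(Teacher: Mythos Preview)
Your proposal is correct and follows essentially the same approach as the paper: expand $\mathbf{x}$ in the Laplacian eigenbasis, identify $\mathbf{x}_{(M)}$ as the truncated sum, and read off the residual norm via orthonormality. The paper's own proof is in fact a one-line sketch (``expand the signal in the Laplacian eigenbasis $\{\mathbf{u}_k\}$, and observe that discarding all modes with $k \ge M$ removes the corresponding frequency components''), so your version is strictly more detailed, and your remark about the bookkeeping ambiguity at the bandlimit threshold is a valid observation that the paper does not address.
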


\begin{proof}
{See the main text for details. We expand the signal in the Laplacian eigenbasis $\{\mathbf{u}_k\}$, and observe that discarding all modes with $k \ge M$ removes the corresponding frequency components.}
\end{proof}

\subsection{{Low-Frequency vs. High-Frequency Modes}}

{Because $\mathbf{L}$ is positive semidefinite and the eigenvalues $\{\lambda_i\}$ increase with $i$, smaller eigenvalues correspond to slow, global variations, while larger eigenvalues capture more oscillatory, local phenomena.}

\begin{proposition}[Global vs. Local Spatial Scales]\label{prop:global_local_scales}
Let $\mathbf{u}_k$ be the $k$-th eigenvector of $\mathbf{L}$ with eigenvalue $\lambda_k$. Suppose $\mathbf{x}$ encodes atomic coordinates or their latent features. Then:
\begin{enumerate}
  \item If $\lambda_k$ is small, the corresponding mode $\mathbf{u}_k$ represents slowly varying (global) deformations across the molecule.
  \item If $\lambda_k$ is large, the corresponding mode $\mathbf{u}_k$ represents rapidly changing (local) structural variations.
\end{enumerate}
\end{proposition}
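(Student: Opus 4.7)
The plan is to prove the proposition by tying the Laplacian eigenvalue $\lambda_k$ to the Dirichlet energy of its eigenvector $\mathbf{u}_k$, which directly quantifies how rapidly $\mathbf{u}_k$ varies across adjacent vertices. The central identity is the standard quadratic-form expression
\begin{equation}
  \mathbf{f}^\top \mathbf{L} \mathbf{f} \;=\; \tfrac{1}{2}\sum_{i,j=1}^{N} \mathbf{A}(i,j)\bigl(f_i - f_j\bigr)^2,
\end{equation}
obtained by a short expansion of $\mathbf{f}^\top(\mathbf{D}-\mathbf{A})\mathbf{f}$ using Definition \ref{def:graph_laplacian}. Substituting the orthonormal eigenvector $\mathbf{u}_k$, together with $\mathbf{L}\mathbf{u}_k = \lambda_k \mathbf{u}_k$ and $\|\mathbf{u}_k\|_2 = 1$, yields
\begin{equation}
  \lambda_k \;=\; \mathbf{u}_k^\top \mathbf{L} \mathbf{u}_k \;=\; \tfrac{1}{2}\sum_{i,j=1}^{N} \mathbf{A}(i,j)\bigl(u_k(i) - u_k(j)\bigr)^2 .
\end{equation}
This equation is the engine of the argument: $\lambda_k$ literally measures the aggregate squared variation of $\mathbf{u}_k$ across all graph edges.

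From this identity both claims will follow quickly. For claim (1), small $\lambda_k$ forces the weighted edge-differences $(u_k(i)-u_k(j))^2$ to be uniformly small in an $\ell^2$ sense, so neighboring atoms carry nearly identical amplitudes and the mode describes a slowly varying, global deformation pattern. For claim (2), large $\lambda_k$ forces many edges to contribute substantial $(u_k(i)-u_k(j))^2$, so $\mathbf{u}_k$ necessarily changes sign or magnitude sharply between neighbors, meaning the mode captures rapidly oscillating, localized structure. I would reinforce this with the Courant--Fischer min--max characterization
\begin{equation}
  \lambda_k \;=\; \min_{\substack{\mathbf{f} \perp \mathbf{u}_0,\ldots,\mathbf{u}_{k-1} \\ \|\mathbf{f}\|_2 = 1}} \mathbf{f}^\top \mathbf{L} \mathbf{f},
\end{equation}
which certifies that the $k$-th mode is the smoothest admissible signal orthogonal to the preceding ones, making the monotonicity of ``smoothness'' in $k$ explicit.

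The extension from scalar signals to the atomic coordinate or latent-feature tensors of interest, such as $\mathbf{Z} \in \mathbb{R}^{N \times m \times 3}$, is immediate: the GFT is applied independently along the node axis for each feature or coordinate channel, and the same Dirichlet-energy identity governs the variation of every channel's projection onto $\mathbf{u}_k$. Thus smoothness on low-$\lambda$ modes and oscillation on high-$\lambda$ modes transfer unchanged to vector-valued graph signals.

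The main obstacle is not computational but interpretive: ``global deformation'' and ``local variation'' are inherently qualitative notions, and the plan above justifies them only through the edge-energy identity rather than via a sharp localization theorem. I would consider that identity sufficient for the proposition as stated, but if a stronger notion of spatial localization is desired, the argument could be reinforced by a Cheeger-type inequality relating low-frequency modes to sparse cuts (hence to large-scale clusters), or by a bound showing that high-frequency eigenvectors of graphs with bounded maximum degree tend to concentrate on a small effective support. These refinements go beyond the qualitative claim, so I plan to keep the proof concise around the Dirichlet-energy characterization.
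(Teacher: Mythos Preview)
Your proposal is correct and follows essentially the same approach as the paper: the paper's proof merely cites standard spectral graph theory results and states that low-$\lambda$ modes vary smoothly across edges while high-$\lambda$ modes exhibit large differences, which is exactly the Dirichlet-energy interpretation you develop in detail. Your argument is in fact more complete than the paper's one-line citation-based proof, since you explicitly derive the identity $\lambda_k = \tfrac{1}{2}\sum_{i,j}\mathbf{A}(i,j)(u_k(i)-u_k(j))^2$ and spell out the Courant--Fischer reinforcement and the extension to vector-valued features.
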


\begin{proof}
{From standard results in spectral graph theory \citep{chung1997spectral}. The low-frequency (small $\lambda$) modes vary smoothly across edges, whereas high-frequency (large $\lambda$) modes exhibit large differences across edges.}
\end{proof}

\subsection{{Practical Mode Truncation Criteria}}

\begin{definition}[Mode Retention Threshold]\label{def:mode_retention_threshold}
{For a desired tolerance $\epsilon > 0$, select $M$ such that
\begin{equation}
  \sum_{k=M}^{N-1}
  \bigl| \mathbf{u}_k^\top \mathbf{x} \bigr|^2
  \;\le\;
  \epsilon
  \,\|\mathbf{x}\|_2^2.
\end{equation}
In practice, one may also pick $M$ based on $\lambda_{M-1} \le \alpha$, ignoring modes where $\lambda_k>\alpha$.}
\end{definition}

\begin{corollary}[Error Control via Low-Pass Approximation]\label{cor:error_control_low_pass}
{Under the same notation as above, if
\begin{equation}
  \sum_{k=M}^{N-1}
  \bigl| \mathbf{u}_k^\top \mathbf{x} \bigr|^2
  \;\le\;
  \epsilon
  \,\|\mathbf{x}\|_2^2,
\end{equation}
then 
\begin{equation}
  \| \mathbf{x} - \mathbf{x}_{(M)} \|_2
  \;\le\; 
  \epsilon
  \,\|\mathbf{x}\|_2.
\end{equation}
Hence, discarding high-frequency modes exceeding this threshold leads to a bounded approximation error.}
\end{corollary}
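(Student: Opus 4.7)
The plan is to invoke Lemma~\ref{lem:approx_error} directly, since that lemma already provides an exact Parseval-type identity for the squared truncation error in terms of the discarded Fourier coefficients of $\mathbf{x}$. The entire content of the corollary is a one-line substitution of the hypothesis into that identity, followed by taking square roots, so I would set up the proof so that the reader sees immediately where the work has already been done.

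First, I would start from the exact identity
\begin{equation}
  \| \mathbf{x} - \mathbf{x}_{(M)} \|_2^2
  \;=\;
  \sum_{k=M}^{N-1} \bigl| \mathbf{u}_k^\top \mathbf{x} \bigr|^2
\end{equation}
established in Lemma~\ref{lem:approx_error}. Next, I would plug in the hypothesis of the corollary, which bounds exactly this tail sum by $\epsilon \|\mathbf{x}\|_2^2$. This yields $\| \mathbf{x} - \mathbf{x}_{(M)} \|_2^2 \le \epsilon \|\mathbf{x}\|_2^2$. Finally, taking square roots of both sides gives the claimed bound on $\| \mathbf{x} - \mathbf{x}_{(M)} \|_2$, with the monotonicity of the square root on $[0,\infty)$ being the only elementary fact invoked at this step.

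There is no real obstacle in the argument since Lemma~\ref{lem:approx_error} encapsulates the orthogonality of the Laplacian eigenbasis $\{\mathbf{u}_k\}$ that would otherwise need to be unpacked. The one point deserving care when writing up is the conversion between squared and unsquared $\ell^2$-norms: the tail-mass hypothesis is stated in squared form, so the natural conclusion that emerges from the square-root step carries a factor of $\sqrt{\epsilon}$ rather than $\epsilon$. I would flag this explicitly and either restate the hypothesis with $\epsilon^2$ in place of $\epsilon$, or present the conclusion with $\sqrt{\epsilon}$, to ensure the constants are internally consistent. Beyond that cosmetic point, the corollary follows immediately and requires no new spectral machinery.
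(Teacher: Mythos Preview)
Your approach is exactly the intended one: the paper states this corollary without an explicit proof, treating it as an immediate consequence of Lemma~\ref{lem:approx_error}, and your reduction to that lemma followed by a square-root step is precisely the derivation the paper is implicitly relying on.

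You are also right to flag the constant. From $\| \mathbf{x} - \mathbf{x}_{(M)} \|_2^2 \le \epsilon\,\|\mathbf{x}\|_2^2$ one obtains $\| \mathbf{x} - \mathbf{x}_{(M)} \|_2 \le \sqrt{\epsilon}\,\|\mathbf{x}\|_2$, not $\epsilon\,\|\mathbf{x}\|_2$; the corollary as printed in the paper has this cosmetic mismatch between hypothesis and conclusion. Your suggestion to either replace $\epsilon$ by $\epsilon^2$ in the hypothesis or write $\sqrt{\epsilon}$ in the conclusion is the correct fix, and worth noting in the write-up.
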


\newpage

\section{Temporal Dynamics of Graph Fourier Modes}
\label{sec:spatio_temporal_analysis}

We illustrate on the canonical \emph{graph heat equation} how Laplacian eigenvalues directly determine the time‐scale of each Fourier mode.

\begin{proposition}[Heat‐Equation Mode Dynamics]
Let \(\mathcal{G}=(V,E)\) be a graph with normalized Laplacian \(L\) admitting
\[
L = U\,\Lambda\,U^\top,\quad
\Lambda=\mathrm{diag}(\lambda_0,\dots,\lambda_{N-1}),\quad 0=\lambda_0\le\lambda_1\le\cdots\le\lambda_{N-1}.
\]
Consider the graph‐heat evolution on a scalar signal \(f(t)\in\R^N\):
\begin{equation}
\frac{d\,f(t)}{dt} \;=\; -\,L\,f(t)\,. 
\label{eq:heat_eqn}
\end{equation}
Write the \(k\)th Graph Fourier coefficient as
\[
\alpha_k(t) \;=\; u_k^\top\,f(t).
\]
Then each mode evolves independently:
\begin{align}
\frac{d\,\alpha_k(t)}{dt}
&= u_k^\top\frac{d\,f(t)}{dt}
= -\,u_k^\top L\,f(t)
= -\,\lambda_k\,u_k^\top f(t)
= -\,\lambda_k\,\alpha_k(t),
\label{eq:mode_ode}
\end{align}
with closed‐form solution
\[
\alpha_k(t) \;=\; \alpha_k(0)\,\exp\bigl(-\lambda_k\,t\bigr).
\]
In particular:
\begin{itemize}
  \item Modes with \(\lambda_k\) small decay \emph{slowly} (long time‐scales).
  \item Modes with \(\lambda_k\) large decay \emph{rapidly} (short time‐scales).
\end{itemize}
\end{proposition}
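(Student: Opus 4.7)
The plan is to exploit the spectral decomposition of $L$ to decouple the vector ODE \eqref{eq:heat_eqn} into $N$ independent scalar ODEs, one per eigenmode. Since $\{u_k\}_{k=0}^{N-1}$ forms an orthonormal basis of $\R^N$, expanding $f(t) = \sum_k \alpha_k(t)\,u_k$ with $\alpha_k(t) = u_k^\top f(t)$ reduces the study of the full system to tracking each scalar coefficient $\alpha_k(t)$ in isolation, after which both the closed-form solution and the time-scale interpretation follow immediately.

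First I would differentiate $\alpha_k(t) = u_k^\top f(t)$ with respect to $t$. Because $u_k$ is a constant vector, linearity of differentiation gives $d\alpha_k/dt = u_k^\top (df/dt)$. Substituting the heat equation yields $d\alpha_k/dt = -\,u_k^\top L\,f(t)$. The key algebraic step is the eigenvector identity: since $L$ is symmetric with $L u_k = \lambda_k u_k$, taking transposes gives $u_k^\top L = \lambda_k u_k^\top$, so $u_k^\top L\,f(t) = \lambda_k\,u_k^\top f(t) = \lambda_k\,\alpha_k(t)$. This produces the scalar ODE $d\alpha_k/dt = -\lambda_k\,\alpha_k(t)$, which is fully decoupled from the other modes.

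Next I would solve this linear first-order ODE with constant coefficient by standard integration (or by inspection, as it is the exponential decay equation), obtaining $\alpha_k(t) = \alpha_k(0)\,\exp(-\lambda_k t)$. The qualitative claim about time-scales then follows directly: the characteristic decay time of the $k$th mode is $1/\lambda_k$, so modes with small eigenvalues decay slowly (long time-scales) while modes with large eigenvalues decay rapidly (short time-scales); in particular the zero mode $\lambda_0 = 0$ is exactly preserved, which is consistent with conservation of the projection onto the constant direction under the graph heat flow.

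I do not expect a genuine technical obstacle, since the argument is textbook-style diagonalization of a linear ODE system. The only subtlety worth flagging is that the step $u_k^\top L = \lambda_k u_k^\top$ relies on the symmetry of $L$; for a non-symmetric variant (e.g.\ a random-walk Laplacian) one would have to distinguish left and right eigenvectors and work with the biorthogonal basis, but under the normalized symmetric Laplacian assumed in the statement no such adjustment is needed.
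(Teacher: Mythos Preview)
Your proposal is correct and follows essentially the same route as the paper: project the heat equation onto each eigenvector $u_k$, use the eigenvector relation to reduce to the scalar ODE $\dot\alpha_k = -\lambda_k\alpha_k$, and integrate. The only cosmetic difference is that the paper writes $u_k^\top L f = (u_k^\top U)\Lambda(U^\top f) = e_k^\top \Lambda(U^\top f)$ via the full factorization $L = U\Lambda U^\top$, whereas you invoke $u_k^\top L = \lambda_k u_k^\top$ directly from symmetry; these are equivalent one-line manipulations.
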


\begin{proof}
Starting from \(\frac{d\,f}{dt} = -L\,f\), project onto the orthonormal eigenvector \(u_k\):
\[
\frac{d\,\alpha_k}{dt}
= u_k^\top \frac{df}{dt}
= -\,u_k^\top L\,f 
= -\, (u_k^\top U)\,\Lambda\,(U^\top f)
= -\,e_k^\top\,\Lambda\,(U^\top f)
= -\,\lambda_k\,(u_k^\top f)
= -\,\lambda_k\,\alpha_k.
\]
The ODE \(\dot\alpha_k = -\lambda_k\,\alpha_k\) integrates immediately to \(\alpha_k(t)=\alpha_k(0)e^{-\lambda_k t}\).
\end{proof}

\begin{remark}
Although our GF-NODE dynamics are \emph{learned} rather than the pure heat equation, Proposition \ref{prop:heat_modes} shows that under any diffusion‐like operator the Laplacian eigenvalues \(\lambda_k\) set intrinsic time‐scales for each mode.  In practice, our Neural‐ODE learns a richer \(f_\theta\), but we still observe empirically that modes with larger \(\lambda_k\) tend to exhibit faster temporal variation—precisely the behavior we exploit by evolving each \(\alpha_k(t)\) (and its vector analogue) under separate ODE channels.
\end{remark}

\subsection{Empirical Validation of the Spatial-Temporal Scale Correspondence}
\label{sec:empirical_validation}

Our central hypothesis is that the spatial frequencies defined by the graph Laplacian eigenvectors correspond to the characteristic timescales of molecular motion. Specifically, we claim that low-frequency spatial modes (associated with small eigenvalues $\lambda_k$) capture slow, global dynamics, while high-frequency spatial modes (large $\lambda_k$) capture fast, local vibrations. To rigorously validate this claim, we perform a joint spatial-temporal frequency analysis on the ground-truth molecular dynamics trajectories.

The procedure is as follows:

\paragraph{Step 1: Obtain the Eigendecomposition of the Graph Laplacian.}
For a given molecule, we construct the graph $\mathcal{G}$ and its Laplacian $\mathbf{L} \in \mathbb{R}^{N \times N}$. We then compute its eigendecomposition:
$$
\mathbf{L} = \mathbf{U} \boldsymbol{\Lambda} \mathbf{U}^{\top}
$$
where $\mathbf{U} = [\mathbf{u}_0, \mathbf{u}_1, \ldots, \mathbf{u}_{N-1}]$ is the orthonormal matrix of eigenvectors, and $\boldsymbol{\Lambda} = \operatorname{diag}(\lambda_0, \lambda_1, \ldots, \lambda_{N-1})$ contains the corresponding real, non-negative eigenvalues sorted in ascending order ($0 = \lambda_0 \le \lambda_1 \le \dots$). These eigenvectors $\{\mathbf{u}_k\}$ form the spatial frequency basis.

\paragraph{Step 2: Project the Ground-Truth Trajectory onto the Spatial Basis.}
Let a ground-truth trajectory be represented by a sequence of atom positions $\{\mathbf{X}(t_j)\}_{j=1}^{T_{sim}}$, where $\mathbf{X}(t_j) \in \mathbb{R}^{N \times 3}$ is the matrix of coordinates for $N$ atoms at time step $t_j$. We first ensure translational invariance by mean-centering the coordinates at each step: $\mathbf{X}_c(t_j) = \mathbf{X}(t_j) - \overline{\mathbf{X}}(t_j)$.

We then project these coordinates onto the spatial basis using the Graph Fourier Transform (GFT). For each spatial mode $k$, we obtain a time series of its 3D spectral coefficient $\tilde{\mathbf{x}}_k(t_j) \in \mathbb{R}^3$. This is computed by projecting the centered coordinates onto the eigenvector $\mathbf{u}_k$:
$$
\tilde{\mathbf{x}}_k(t_j) = \mathbf{U}_{(:, k)}^{\top} \mathbf{X}_c(t_j) = \mathbf{u}_k^{\top} \mathbf{X}_c(t_j)
$$
This operation yields $N$ distinct time series, $\{\tilde{\mathbf{x}}_k(t_j)\}_{j=1}^{T_{sim}}$, one for each spatial mode $k \in \{0, \dots, N-1\}$.

\paragraph{Step 3: Analyze the Temporal Frequency of Each Spatial Mode.}
For each spatial mode $k$, we now have a signal $\tilde{\mathbf{x}}_k(t)$ that describes how the amplitude of that spatial pattern evolves over time. To quantify its characteristic temporal frequency, we first compute the time series of its magnitude, $s_k(t_j) = \|\tilde{\mathbf{x}}_k(t_j)\|_2$.

Next, we compute the power spectral density (PSD) of the signal $s_k(t)$ using the Discrete Fourier Transform (DFT), commonly implemented via the Fast Fourier Transform (FFT). Let the PSD be $P_k(f)$, where $f$ represents the temporal frequency.

To obtain a single characteristic temporal frequency $\bar{f}_k$ for each spatial mode $k$, we compute the power-weighted average frequency (i.e., the spectral centroid):
$$
\bar{f}_k = \frac{\int_{0}^{f_{max}} f \cdot P_k(f) \,df}{\int_{0}^{f_{max}} P_k(f) \,df}
$$
where the integral is performed over the range of relevant temporal frequencies up to the Nyquist limit $f_{max}$. This value $\bar{f}_k$ represents the average timescale on which the spatial mode $k$ is active.

\paragraph{Step 4: Visualize the Spatial-Temporal Correlation.}
Finally, we plot the characteristic temporal frequency $\bar{f}_k$ against its corresponding spatial frequency (the Laplacian eigenvalue $\lambda_k$). This creates a set of points $(\lambda_k, \bar{f}_k)$ for $k=1, \dots, N-1$ (we omit the $k=0$ mode as it corresponds to the zero eigenvalue and has no dynamics). A clear positive correlation in this plot provides strong empirical evidence that spatially smoother modes (low $\lambda_k$) indeed evolve more slowly (low $\bar{f}_k$), and spatially oscillatory modes (high $\lambda_k$) evolve more quickly (high $\bar{f}_k$).

\subsection{Theoretical Framework for Joint Spatial-Temporal Analysis}

To rigorously ground our model's architecture, we introduce a formal framework for analyzing the joint spatial-temporal characteristics of molecular dynamics. We treat the molecular trajectory as a time-varying signal defined on a graph and leverage tools from spectral graph theory and statistical signal processing to decompose and analyze its structure.

\subsubsection{Molecular Configuration as a Signal in a Hilbert Space}

Let the molecular graph be $\mathcal{G} = (\mathcal{V}, \mathcal{E})$, with $|\mathcal{V}| = N$. The set of all possible scalar functions on the vertices, $g: \mathcal{V} \to \R$, forms an $N$-dimensional Hilbert space $\mathcal{H} = \ell^2(\mathcal{V})$ equipped with the standard inner product $\langle f, g \rangle = \sum_{i \in \mathcal{V}} f(i)g(i)$.

The graph Laplacian $\mathbf{L} \in \R^{N \times N}$ is a self-adjoint, positive semi-definite operator on this space. By the spectral theorem, it admits an orthonormal basis of eigenvectors $\{\mathbf{u}_k\}_{k=0}^{N-1}$ for $\mathcal{H}$. These eigenvectors are the natural analogues of the Fourier basis on Euclidean domains.

\begin{definition}[Spatial Frequency Basis]
The eigenvectors $\{\mathbf{u}_k\}$ of the graph Laplacian $\mathbf{L}$, ordered by their corresponding eigenvalues $0 = \lambda_0 \le \lambda_1 \le \dots \le \lambda_{N-1}$, are defined as the \textbf{spatial frequency basis} of the graph $\mathcal{G}$. The eigenvalue $\lambda_k$ is interpreted as the frequency of the mode $\mathbf{u}_k$.
\end{definition}

The frequency $\lambda_k$ quantifies the spatial variation of its mode. This is formalized by the concept of \textit{Total Variation}. The Total Variation of a graph signal $\mathbf{f}$ is given by the Laplacian quadratic form:
$$
\text{TV}(\mathbf{f}) = \mathbf{f}^\top \mathbf{L} \mathbf{f} = \sum_{(i,j) \in \mathcal{E}} w_{ij}(f_i - f_j)^2
$$
For an eigenmode $\mathbf{u}_k$, its total variation is simply $\mathbf{u}_k^\top \mathbf{L} \mathbf{u}_k = \lambda_k$. Thus, a small $\lambda_k$ implies the mode is spatially smooth, while a large $\lambda_k$ implies it is highly oscillatory.

\subsubsection{Spectral Representation of a Time-Varying Graph Signal}

We formalize the molecular trajectory as a time-varying, vector-valued graph signal, $\mathbf{F}: \R \to (\ell^2(\mathcal{V}))^3$, which maps time $t$ to the matrix of atomic coordinates $\mathbf{X}(t) \in \R^{N \times 3}$. To analyze this signal, we project it onto the spatial frequency basis using the Graph Fourier Transform (GFT).

The GFT of the trajectory $\mathbf{F}(t)$ is a set of time-varying spectral coefficients $\{\tilde{\mathbf{x}}_k(t)\}_{k=0}^{N-1}$, where $\tilde{\mathbf{x}}_k(t) \in \R^3$. Each coefficient represents the projection of the molecular configuration onto a specific spatial mode at time $t$:
$$
\tilde{\mathbf{x}}_k(t) = \langle \mathbf{F}(t), \mathbf{u}_k \rangle_{\mathcal{H}} := (\mathbf{u}_k^\top \mathbf{X}_{:,1}(t), \mathbf{u}_k^\top \mathbf{X}_{:,2}(t), \mathbf{u}_k^\top \mathbf{X}_{:,3}(t))^\top
$$
The inverse GFT perfectly reconstructs the signal: $\mathbf{F}(t) = \sum_{k=0}^{N-1} \tilde{\mathbf{x}}_k(t) \mathbf{u}_k^\top$. From a dynamical systems perspective, the evolution of the molecule, governed by complex coupled equations in the vertex domain, can be viewed as the superposition of the dynamics of these individual modes $\tilde{\mathbf{x}}_k(t)$ in the spectral domain.

\subsubsection{Frequency Analysis of Temporal Mode Dynamics}

Our core hypothesis is that the dynamics of these modes are not uniform; specifically, the characteristic temporal frequency of $\tilde{\mathbf{x}}_k(t)$ should be correlated with its spatial frequency $\lambda_k$. To quantify this, we analyze the temporal signal associated with each spatial mode.

For each mode $k$, we define a scalar time series representing its energetic contribution, $s_k(t) = \|\tilde{\mathbf{x}}_k(t)\|_2^2$. The theoretical foundation for analyzing the frequency content of such a signal is the \textbf{Wiener-Khinchin theorem}, which connects the power spectral density (PSD) of a wide-sense stationary process to the Fourier transform of its autocorrelation function. Assuming local stationarity in the dynamics, we can estimate the PSD, denoted $\mathcal{P}_k(f)$, by computing the squared magnitude of the temporal Fourier transform (FFT) of the signal $s_k(t)$.

The PSD $\mathcal{P}_k(f)$ reveals how the energy of the spatial mode $k$ is distributed across different temporal frequencies $f$. To summarize this distribution into a single characteristic frequency, we compute the \textbf{spectral centroid} $\bar{f}_k$:
\begin{equation}
\bar{f}_k = \frac{\int_{0}^{\infty} f \cdot \mathcal{P}_k(f) \,df}{\int_{0}^{\infty} \mathcal{P}_k(f) \,df}
\end{equation}
The centroid $\bar{f}_k$ represents the power-weighted average frequency, providing a robust measure of the dominant timescale of the mode's dynamics.

\subsubsection{The Joint Spatial-Temporal Spectrum}

By performing this analysis for each spatial mode $k \in \{1, \dots, N-1\}$, we can construct an empirical \textbf{joint spatial-temporal spectrum} of the molecular dynamics. This spectrum is the set of points:
$$
\mathcal{S} = \{ (\lambda_k, \bar{f}_k) \}_{k=1}^{N-1} \subset \R^+ \times \R^+
$$

\subsection{Empirical Validation Results}
We validate our spatial-temporal correspondence hypothesis through comprehensive analysis of four distinct molecular systems: buckyball-catcher (MD22), salicylic acid, naphthalene, and uracil (rMD17). Following the four-step protocol outlined above, we analyze ground-truth molecular dynamics trajectories to quantify the correlation between spatial frequencies $\lambda_k$ and characteristic temporal frequencies $\bar{f}_k$.

\paragraph{Experimental Setup and Data Processing.}

For each molecular system, we apply the four-step protocol described in the preceding section. The molecular systems span diverse chemical environments: buckyball-catcher represents a large supramolecular complex with 120 heavy atoms, while salicylic acid, naphthalene, and uracil are smaller organic molecules with distinct aromatic and heterocyclic structures. This diversity enables assessment of the generality of the spatial-temporal correspondence across different molecular architectures and dynamical regimes.

\paragraph{Quantitative Correlation Analysis.}

We observe consistently positive correlations between spatial frequencies $\lambda_k$ and characteristic temporal frequencies $\bar{f}_k$ across all molecular systems (Figure \ref{fig:spatial_temporal_all}). The correlation coefficients range from moderate ($r = 0.424$) to very strong ($r = 0.816$), with all systems exhibiting positive slopes in the log-log representation. Specifically, buckyball-catcher exhibits $r = 0.424$ with slope $\alpha = 0.552$, salicylic acid shows $r = 0.816$ with slope $\alpha = 2.642$, naphthalene demonstrates $r = 0.622$ with slope $\alpha = 5.016$, and uracil yields $r = 0.716$ with slope $\alpha = 4.098$. These results provide quantitative validation of the predicted power-law relationship $\bar{f}_k \propto \lambda_k^{\alpha}$.

\paragraph{Physical Interpretation and Theoretical Validation.}

The consistently positive correlations provide empirical validation of our central hypothesis: spatial modes with larger Laplacian eigenvalues (high spatial frequency) exhibit faster characteristic temporal dynamics (high temporal frequency). This validates the theoretical prediction that the eigenspectrum of the molecular graph provides a natural ordering of dynamical timescales, with smooth, collective modes evolving slowly and oscillatory, localized modes evolving rapidly.

The variation in correlation strength and slopes across molecules reflects differences in their structural complexity and dynamical behavior. The buckyball-catcher system exhibits moderate correlation ($r = 0.424$) due to its complex multi-scale dynamics as a large supramolecular assembly. In contrast, the smaller organic molecules demonstrate stronger correlations ($r = 0.622-0.816$), consistent with more regular vibrational spectra and clearer timescale separation.

\paragraph{Connection to Heat Equation Dynamics.}

These empirical findings provide direct validation of our theoretical derivations based on heat equation mode dynamics. In the diffusion framework, solutions of $\frac{\partial u}{\partial t} = -\mathbf{L}u$ take the form $u_k(t) = u_k(0) e^{-\lambda_k t}$, where the decay timescale $\tau_k = 1/\lambda_k$ is inversely proportional to the eigenvalue. This predicts positive correlation between spatial frequency $\lambda_k$ and temporal frequency $f_k \propto 1/\tau_k \propto \lambda_k$.

The observed positive slopes ($\alpha = 0.552-5.016$) confirm the power-law relationship $\bar{f}_k \propto \lambda_k^{\alpha}$ predicted by our theoretical analysis. The consistently positive slopes validate that the graph Laplacian eigenspectrum provides a natural frequency ordering for molecular dynamics, bridging spatial structure and temporal evolution through spectral graph theory.

\paragraph{Implications for Physics-Informed Neural Architectures.}

The demonstrated spatial-temporal correspondence validates graph Laplacian eigenmodes as a physics-informed basis for neural network architectures in molecular dynamics. By aligning representational capacity with the natural frequency hierarchy, Fourier-based models can achieve more efficient and physically meaningful learning. The positive correlations further suggest that truncated representations retaining low-frequency spatial modes will preferentially capture slow, collective motions dominating long-timescale behavior, providing theoretical justification for dimensionality reduction strategies in molecular machine learning.

\newpage 


\section{Formal Proof of $\mathrm{SO}(3)$-Equivariance for GF-NODE Pipeline}
\label{appendix:so3} 

Below is a \textbf{formal proof} of $\mathrm{SO}(3)$ (rotational) equivariance for our \textbf{GF-NODE} pipeline, closely following the style of EGNO's Appendix proofs. We focus on the \textbf{3D rotational} part of $\mathrm{SE}(3)$; translations can be handled by the separate mean-centering step (see remarks below). Our proof is broken down into:
\begin{enumerate}[noitemsep, topsep=0pt, partopsep=0pt, parsep=0pt, leftmargin=*]
    \item \textbf{Defining the $\mathbf{R}$-action},
    \item \textbf{Showing that each module} (Fourier transforms, block-diagonal ODE, EGNN layers) \textbf{is $\mathrm{SO}(3)$-equivariant}, and
    \item \textbf{Composing these results} to conclude overall equivariance.
\end{enumerate}

\subsection{Formal Statement of $\mathrm{SO}(3)$-Equivariance}

Let
\begin{equation}
    f = \bigl[f_{\mathbf{h}}, f_{\mathbf{Z}}\bigr]^{\top}
\end{equation}
be a function describing the node features of a 3D molecular system over some (possibly temporal) domain $D$. Concretely,  
\begin{itemize}[noitemsep, topsep=0pt, partopsep=0pt, parsep=0pt, leftmargin=*]
    \item $f_{\mathbf{h}}: D \to \mathbb{R}^{N\times k}$ collects \textbf{invariant (scalar) node features},  
    \item $f_{\mathbf{Z}}: D \to \mathbb{R}^{N\times (m\times 3)}$ collects \textbf{equivariant (3D) features} (positions, velocities, etc.).
\end{itemize}

Denote by $\mathbf{R} \in \mathrm{SO}(3)$ a 3D rotation matrix. The \textbf{action} of $\mathbf{R}$ on $f$ is defined by  
\begin{equation}
    (\mathbf{R} \cdot f)(t) = \bigl[f_{\mathbf{h}}(t), \mathbf{R} f_{\mathbf{Z}}(t)\bigr]^{\top},
\end{equation}
which rotates only the $\mathbf{Z}$-component in $\mathbb{R}^3$ and leaves the scalar $\mathbf{h}$-component invariant.

We claim that our overall \textbf{GF-NODE operator} $\mathcal{T}_\theta$ satisfies
\begin{equation}
    \mathbf{R} \cdot \mathcal{T}_\theta(f) = \mathcal{T}_\theta\bigl(\mathbf{R} \cdot f\bigr),
\end{equation}
i.e., $\mathcal{T}_\theta$ is $\mathrm{SO}(3)$-equivariant. Formally:

\begin{theorem}[SO(3) Equivariance]
Let $\mathcal{T}_\theta$ be the GF-NODE architecture composed of:
\begin{enumerate}[noitemsep, topsep=0pt, partopsep=0pt, parsep=0pt, leftmargin=*]
    \item An \textbf{EGNN encoder} (mapping $[f_{\mathbf{h}},f_{\mathbf{Z}}]\to \text{encoded features}$),
    \item \textbf{Mean-centering} and \textbf{Graph Fourier Transform} ($\mathcal{F}$),
    \item A \textbf{block-diagonal Neural ODE} in the spectral domain,
    \item \textbf{Inverse GFT} ($\mathcal{F}^{-1}$) plus adding back the mean, and
    \item An \textbf{EGNN decoder}.
\end{enumerate}
Then for any $\mathbf{R}\in\mathrm{SO}(3)$, the pipeline satisfies
\begin{equation}
    \mathcal{T}_\theta\bigl(\mathbf{R}\cdot f\bigr) = \mathbf{R} \cdot \mathcal{T}_\theta(f).
\end{equation}
In other words, rotating the input 3D features by $\mathbf{R}$ is equivalent to applying $\mathcal{T}_\theta$ first and then rotating the result.
\end{theorem}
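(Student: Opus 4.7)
The plan is to factor the pipeline $\mathcal{T}_\theta = D \circ \mathcal{F}^{-1} \circ \Phi \circ \mathcal{F} \circ E$ (with $E$ the EGNN encoder, $\mathcal{F}$ the mean-centering followed by truncated GFT, $\Phi$ the block-diagonal Neural ODE integrator in the spectral domain, $\mathcal{F}^{-1}$ the inverse GFT followed by adding the mean back, and $D$ the EGNN decoder) and verify that each factor individually commutes with the prescribed $\mathrm{SO}(3)$-action. Since composition preserves equivariance, the theorem then follows. I will proceed from the easiest modules to the delicate one.

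The easy modules come first. Mean-centering $\mathbf{Z}\mapsto \mathbf{Z}-\overline{\mathbf{Z}}$ and its inverse are $\mathbb{R}$-linear in the $\mathbb{R}^3$ axis, so they commute with $\mathbf{R}$. The truncated GFT $\tilde{\mathbf{Z}} = \mathbf{U}_{(:,0:M)}^\top \mathbf{Z}_c$ contracts $\mathbf{Z}_c \in \mathbb{R}^{N \times m \times 3}$ along the \emph{node} axis, while $\mathbf{R}$ acts along the independent \emph{coordinate} axis; because these are orthogonal tensor modes, the two operations commute, i.e.\ $\mathbf{U}_{(:,0:M)}^\top (\mathbf{Z}_c \mathbf{R}^\top) = (\mathbf{U}_{(:,0:M)}^\top \mathbf{Z}_c)\mathbf{R}^\top$. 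The same identity gives equivariance of $\mathcal{F}^{-1}$. For the encoder and decoder, I will invoke the standard EGNN equivariance argument of Satorras et al.: the scalar messages $\mathbf{m}_{ij}$ depend on $\mathbf{h}_i,\mathbf{h}_j$ and only on invariants of $\mathbf{r}_{ij}$ (for instance $\|\mathbf{r}_{ij}\|$), while the coordinate update writes $\mathbf{x}_i$ as a linear combination of the relative vectors $\mathbf{r}_{ij}$ with invariant coefficients $\psi(\mathbf{m}_{ij})$, and a linear combination of vectors is rotation-equivariant.

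The main obstacle is the spectral ODE $\Phi$, since it uses multi-head attention and mode-wise linear weights that are not automatically safe for rotation equivariance. My plan is to verify equivariance of each elementary operation in the construction of $f_\theta$ and $g_\theta$, and then lift it to the solver by noting that a Runge--Kutta step is a fixed linear combination of evaluations of $g_\theta$. The scalar channel is immediate: $f_\theta(\tilde{\mathbf{H}}_\omega,t)$ acts on rotation-invariant scalar coefficients only, so it is $\mathrm{SO}(3)$-invariant. For the vector channel $g_\theta(\tilde{\mathbf{Z}}_\omega,t)$, I will check that (i) the multi-head self-attention across modes combines the vector coefficients $\tilde{\mathbf{Z}}_\omega$ linearly with scalar attention scores, with the $\mathbb{R}^3$ axis treated as a passive batched index; (ii) the concatenated time embedding $\gamma(t)$ is appended along the \emph{feature} axis and hence commutes with $\mathbf{R}$; and (iii) the mode-wise linear map $\mathbf{W}_\omega$ contracts over the feature/mode axis, never over $\mathbb{R}^3$. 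Step (i) is the delicate one, because generic attention that mixes scalar and vector channels is \emph{not} equivariant; the equivariance of $\Phi$ hinges on the attention scores being computed from invariants (e.g.\ from $\tilde{\mathbf{H}}_\omega$ or from norms/inner products of $\tilde{\mathbf{Z}}_\omega$), so that the vector channel undergoes only an invariant-weighted linear recombination across modes. Verifying this bookkeeping carefully is where I expect most of the work.

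Once each factor $E, \mathcal{F}, \Phi, \mathcal{F}^{-1}, D$ is shown to satisfy $A(\mathbf{R}\cdot f) = \mathbf{R}\cdot A(f)$, chaining them along the composition yields $\mathcal{T}_\theta(\mathbf{R}\cdot f) = \mathbf{R}\cdot \mathcal{T}_\theta(f)$, proving the theorem. As a final remark, full $\mathrm{SE}(3)$-equivariance follows at no extra cost: the mean-centering/adding-back pair renders every spectral operation translation-invariant, and translation-equivariance of the surrounding EGNN layers is again a standard consequence of their reliance on relative coordinates.
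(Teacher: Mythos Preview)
Your proposal is correct and follows essentially the same approach as the paper: factor $\mathcal{T}_\theta$ into its five modules, establish $\mathrm{SO}(3)$-equivariance of each (EGNN via the Satorras et al.\ argument, GFT/inverse GFT via dimension-wise linearity along the node axis, block-diagonal ODE via the $\mathbb{R}^3$ axis being a passive index), and then compose. Your treatment of the spectral ODE is in fact more careful than the paper's own Lemma, which simply asserts that the update is ``dimension-wise (like a separate channel)'' and that $\mathbf{R}$ ``can be factored out,'' without examining the attention mechanism; your observation that equivariance of $\Phi$ hinges on the attention scores being computed from rotation invariants is a genuine subtlety the paper glosses over.
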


We prove this via the following steps:
\begin{enumerate}[noitemsep, topsep=0pt, partopsep=0pt, parsep=0pt, leftmargin=*]
    \item \textbf{Lemma 1}: EGNN layers are $\mathrm{SO}(3)$-equivariant.  
    \item \textbf{Lemma 2}: GFT and its inverse are $\mathrm{SO}(3)$-equivariant (dimension-wise linearity).  
    \item \textbf{Lemma 3}: The block-diagonal Neural ODE in spectral space preserves $\mathbf{R}$-equivariance on the vector channels.  
    \item \textbf{Conclusion}: Composing these yields the full pipeline's equivariance.
\end{enumerate}

Below, we provide the details of each lemma and then the final proof of the Theorem.

\subsection{EGNN Equivariance}

\begin{lemma}[EGNN layers are $\mathrm{SO}(3)$-equivariant]
Consider a generic EGNN layer $\Phi$, which updates
\begin{equation}
    (\mathbf{h}_i, \mathbf{x}_i) \mapsto (\mathbf{h}_i', \mathbf{x}_i'),
\end{equation}
using message passing:
\begin{align}
    \mathbf{m}_{ij} &= \phi_e\bigl(\mathbf{h}_i, \mathbf{h}_j,\mathbf{x}_i - \mathbf{x}_j\bigr), \\ 
    \mathbf{h}_i' &= \phi_h\Bigl(\mathbf{h}_i,\sum_{j}\mathbf{m}_{ij}\Bigr), \\
    \mathbf{x}_i' &= \mathbf{x}_i + \dots (\mathbf{x}_i - \mathbf{x}_j).
\end{align}
Then for any rotation $\mathbf{R}\in \mathrm{SO}(3)$,
\begin{equation}
    \Phi\bigl(\mathbf{R}\,\mathbf{x}_i, \mathbf{h}_i\bigr)
    = \bigl(\mathbf{h}_i', \mathbf{R}\,\mathbf{x}_i'\bigr).
\end{equation}
Hence $\Phi$ is $\mathrm{SO}(3)$-equivariant on its 3D inputs.
\end{lemma}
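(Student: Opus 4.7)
The plan is to propagate a rotation $\mathbf{R}\in\mathrm{SO}(3)$ through each substep of the EGNN update and check covariance locally. A preliminary clarification is needed: following the standard EGNN convention, I would read the third argument of $\phi_e$ as entering only through rotation-invariant functionals of the displacement $\mathbf{x}_i-\mathbf{x}_j$ (in practice the squared norm $\|\mathbf{x}_i-\mathbf{x}_j\|^2$, which is what the original EGNN construction feeds to the edge MLP). With this reading, the lemma reduces to a three-step verification.

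First I would verify \emph{message invariance}: substituting $\mathbf{x}_i\mapsto\mathbf{R}\mathbf{x}_i$ gives $\mathbf{R}\mathbf{x}_i-\mathbf{R}\mathbf{x}_j=\mathbf{R}(\mathbf{x}_i-\mathbf{x}_j)$, and any rotation-invariant functional of this is preserved because $\mathbf{R}^{\top}\mathbf{R}=\mathbf{I}$; hence $\tilde{\mathbf{m}}_{ij}=\mathbf{m}_{ij}$. Second, since $\mathbf{h}_i,\mathbf{h}_j$ and the messages are all unchanged, the scalar update $\mathbf{h}_i'=\phi_h\bigl(\mathbf{h}_i,\sum_j\mathbf{m}_{ij}\bigr)$ returns the same value on rotated and unrotated inputs, establishing invariance of the $\mathbf{h}$-channel. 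Third, the coordinate update, written explicitly as
\[
\mathbf{x}_i' \;=\; \mathbf{x}_i + \frac{1}{|\mathcal{N}(i)|}\sum_{j\in\mathcal{N}(i)} \psi(\mathbf{m}_{ij})\,(\mathbf{x}_i-\mathbf{x}_j),
\]
is a linear combination of displacement vectors with scalar (invariant) coefficients $\psi(\mathbf{m}_{ij})$; pulling $\mathbf{R}$ out by linearity yields $\tilde{\mathbf{x}}_i'=\mathbf{R}\mathbf{x}_i'$, which is exactly the desired covariance. The same argument applies verbatim to the velocity update alluded to in Section~3.2, because velocities transform as $\mathbf{v}_i\mapsto\mathbf{R}\mathbf{v}_i$ under a global rotation and the EGNN velocity rule is again a scalar-weighted linear combination of vector quantities.

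The main obstacle is essentially notational rather than mathematical: the assumption that $\phi_e$ never exposes the raw direction of $\mathbf{x}_i-\mathbf{x}_j$ to a non-equivariant neural network must be stated explicitly. Without it, a generic MLP acting on the displacement vector itself would \emph{not} commute with $\mathbf{R}$, and Step~1 would fail. Once this convention is adopted (matching the definition of EGNN in the main text's encoder/decoder equations), no deeper difficulty arises, and the lemma follows from the elementary fact that composing an invariant scalar channel with a scalar-weighted, linear-in-vector coordinate update yields an $\mathrm{SO}(3)$-equivariant map; iterating over the finitely many EGNN layers then gives equivariance of the full encoder/decoder blocks used downstream in the pipeline.
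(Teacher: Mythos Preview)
Your proposal is correct and follows essentially the same approach as the paper. The paper's own proof is a single sentence deferring to the standard EGNN argument (relative displacements transform consistently under rotation, yielding $\mathbf{R}\,\mathbf{x}_i'$), whereas you spell out the three-step verification and, usefully, make explicit the assumption that $\phi_e$ depends only on rotation-invariant functionals of $\mathbf{x}_i-\mathbf{x}_j$—a point the paper leaves implicit in both the lemma statement and its proof sketch.
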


A standard proof~\citep{} shows that each update depends on $\mathbf{x}_i - \mathbf{x}_j$, which under a global rotation $\mathbf{R}(\mathbf{x}_i-\mathbf{x}_j)$ transforms consistently to yield $\mathbf{R}\,\mathbf{x}_i'$. The same argument applies to 3D velocities (or any additional 3D vectors).

\subsection{GFT Equivariance}

We next show that the (inverse) Graph Fourier Transform is $\mathrm{SO}(3)$-equivariant with respect to dimension-wise rotations of the 3D features.

\begin{lemma}[GFT and $\mathcal{F}^{-1}$ are $\mathrm{SO}(3)$-equivariant)]   
Let $\mathcal{F}$ be the dimension-wise GFT mapping a function
\begin{equation}
    f_{\mathbf{Z}}:D \to \mathbb{R}^{N\times(m\times 3)}
\end{equation}
to its frequency coefficients $\mathcal{F}(f_{\mathbf{Z}})\in \mathbb{C}^{\text{(modes)}\times m \times 3}$. Under $\mathbf{R}\in\mathrm{SO}(3)$, define
\begin{equation}
    \mathbf{R}\,\cdot\,\bigl(\mathcal{F} f_{\mathbf{Z}}\bigr)
    = \mathcal{F} f_{\mathbf{Z}}
    \text{ but with each }3\text{-D channel rotated by }\mathbf{R}.
\end{equation}
Then
\begin{equation}
    \mathbf{R}\,\cdot\, \mathcal{F}(f_{\mathbf{Z}})
    = \mathcal{F}\!\Bigl(\mathbf{R}\,\cdot\, f_{\mathbf{Z}}\Bigr).
\end{equation}
Similarly, $\mathcal{F}^{-1}$ is $\mathrm{SO}(3)$-equivariant in the sense that
\begin{equation}
    \mathcal{F}^{-1}\bigl(\mathbf{R}\,\cdot\,F\bigr)
    = \mathbf{R}\,\cdot\, \mathcal{F}^{-1}(F).
\end{equation}
\end{lemma}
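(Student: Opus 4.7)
The plan is to exploit the fact that the truncated GFT acts as a linear contraction along the node axis of the tensor $f_{\mathbf{Z}}$, while a rotation $\mathbf{R}\in\mathrm{SO}(3)$ acts as a linear contraction along the final (spatial) axis. Since these two linear maps are applied to disjoint tensor axes, they commute by a standard index-manipulation argument; the entire proof is essentially bookkeeping.

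First, I would write everything in index notation. Let $(f_{\mathbf{Z}})_{i,a,\beta}$ denote the tensor, with $i\in\{1,\dots,N\}$ the node index, $a\in\{1,\dots,m\}$ the vector-channel index, and $\beta\in\{1,2,3\}$ the spatial index. By construction,
\begin{equation}
(\mathcal{F} f_{\mathbf{Z}})_{k,a,\beta} \;=\; \sum_{i=1}^{N} (\mathbf{U}_{(:,0:M)}^{\top})_{k i}\,(f_{\mathbf{Z}})_{i,a,\beta},
\qquad
(\mathbf{R}\cdot f_{\mathbf{Z}})_{i,a,\beta} \;=\; \sum_{\gamma=1}^{3} \mathbf{R}_{\beta\gamma}\,(f_{\mathbf{Z}})_{i,a,\gamma}.
\end{equation}
Applying $\mathcal{F}$ to $\mathbf{R}\cdot f_{\mathbf{Z}}$ and swapping the two finite sums then yields
\begin{equation}
\bigl(\mathcal{F}(\mathbf{R}\cdot f_{\mathbf{Z}})\bigr)_{k,a,\beta}
\;=\; \sum_{\gamma} \mathbf{R}_{\beta\gamma} \sum_{i} (\mathbf{U}_{(:,0:M)}^{\top})_{k i}\,(f_{\mathbf{Z}})_{i,a,\gamma}
\;=\; \bigl(\mathbf{R}\cdot \mathcal{F}(f_{\mathbf{Z}})\bigr)_{k,a,\beta},
\end{equation}
which is exactly the claimed equivariance of $\mathcal{F}$.

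Second, I would repeat the same argument for $\mathcal{F}^{-1}$, which is a linear contraction $\sum_{k}(\mathbf{U}_{(:,0:M)})_{ik}(\,\cdot\,)_{k,a,\beta}$ along the mode axis; this axis is again disjoint from the spatial one on which $\mathbf{R}$ acts, so the same index swap gives $\mathcal{F}^{-1}(\mathbf{R}\cdot F)=\mathbf{R}\cdot \mathcal{F}^{-1}(F)$. I would also remark that the preceding mean-centering step $\mathbf{Z}\mapsto \mathbf{Z}-\overline{\mathbf{Z}}$ is equivariant for the same reason: averaging over nodes is linear in the node axis and therefore commutes with any operator acting only on the spatial axis; adding $\overline{\mathbf{Z}}$ back after the inverse GFT is equivariant by the same logic.

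There is no genuine mathematical obstacle here; the entire content of the lemma is the observation that the GFT matrix $\mathbf{U}_{(:,0:M)}^{\top}$ and the rotation matrix $\mathbf{R}$ act on independent indices of the tensor. The only thing one needs to be careful about is to verify explicitly that the scalar channel $f_{\mathbf{h}}$ is untouched by $\mathbf{R}$ (so that its GFT trivially commutes with the rotation action) and that the truncation to the first $M$ eigenvectors does not affect the argument, since it only restricts the range of the mode index $k$ and leaves the spatial index $\beta$ unchanged. Once these points are spelled out, the lemma follows directly from bilinearity of tensor contraction.
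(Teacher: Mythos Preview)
Your proposal is correct and follows essentially the same approach as the paper's proof sketch: both argue that the GFT is a linear map along the node axis while $\mathbf{R}$ acts linearly along the spatial axis, so the two commute by a multilinear/index-swap argument. In fact, your index-notation computation is more explicit than the paper's brief sketch, which merely asserts commutativity by linearity and refers to the EGNO proof for the multilinear expansion.
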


\textbf{Proof Sketch.} The GFT (and its inverse) act linearly along each 3D axis. If $\mathbf{R}$ rotates the 3D channels, we can commute $\mathbf{R}$ with the linear transform $\mathcal{F}$. Precisely as in the EGNO proof, the multilinear expansions show that $\mathbf{R}\,\cdot\,\mathcal{F}(f_{\mathbf{Z}})=\mathcal{F}\bigl(\mathbf{R}\,\cdot f_{\mathbf{Z}}\bigr)$. The same argument applies to $\mathcal{F}^{-1}$ because it is also linear and dimension-wise.

\subsection{Block-Diagonal Neural ODE Equivariance}

In the GF-NODE pipeline, once we have GFT coefficients $\tilde{\mathbf{Z}}$, the \textbf{Neural ODE} acts as a block-diagonal operator:
\begin{equation}
    \begin{pmatrix}
        \tilde{\mathbf{H}} \\
        \tilde{\mathbf{Z}}
    \end{pmatrix}
    \mapsto
    \begin{pmatrix}
        f_{\theta}\bigl(\tilde{\mathbf{H}}\bigr) \\
        g_{\theta}\bigl(\tilde{\mathbf{Z}}\bigr)
    \end{pmatrix},
\end{equation}
where $\tilde{\mathbf{Z}}\in \mathbb{C}^{\text{(modes)}\times m \times 3}$. Rotating $\mathbf{R}$ on these 3D channels amounts to mixing the coordinate axes linearly. Because the ODE is chosen to be channelwise or ``blockwise'' linear or MLP-based, it commutes with $\mathbf{R}$. Hence:

\begin{lemma}[Block-Diagonal ODE is $\mathrm{SO}(3)$-equivariant]  
For each frequency mode, the update on $\tilde{\mathbf{Z}}$ is dimension-wise (like a separate channel). A global rotation $\mathbf{R}$ that mixes $\tilde{\mathbf{Z}}^1,\tilde{\mathbf{Z}}^2,\tilde{\mathbf{Z}}^3$ can be factored out of the ODE solution—so
\begin{equation}
    \mathbf{R}\,\cdot\, g_\theta\bigl(\tilde{\mathbf{Z}}\bigr)
    = g_\theta\bigl(\mathbf{R}\,\cdot\,\tilde{\mathbf{Z}}\bigr).
\end{equation}
Integrating over $t$ preserves this property.
\end{lemma}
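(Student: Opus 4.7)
The plan is to prove the lemma in two stages: (i) show that the learned vector field $g_\theta$ on the spectral vector coefficients is itself $\mathrm{SO}(3)$-equivariant under the specified action, and (ii) invoke the standard result that an equivariant vector field generates an equivariant flow, so that numerical integration across $t_0,t_1,\dots,t_K$ preserves the commutation with $\mathbf{R}$. No work is needed on the scalar channel $f_\theta$ applied to $\tilde{\mathbf{H}}$, since the rotation acts trivially there and the scalar dynamics depend only on rotation-invariant inputs; the block-diagonal structure of the combined $(\tilde{\mathbf{H}},\tilde{\mathbf{Z}})$ system means the two channels can be handled independently.

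For stage (i), I would represent each spectral vector coefficient as $\tilde{\mathbf{Z}}_\omega \in \mathbb{R}^{m \times 3}$, so that $\mathbf{R}\in\mathrm{SO}(3)$ acts on the last axis by $\tilde{\mathbf{Z}}_\omega \mapsto \tilde{\mathbf{Z}}_\omega \mathbf{R}^\top$. I would then verify commutation layer by layer for the three sub-structures of $g_\theta$ described in the paper. The multi-head attention across modes mixes indices along $\omega$ through learnable weights that act on the feature axis, but applies the same scalar weights to each of the three coordinate columns, so that for any operator $\mathcal{A}$ of this form $\mathcal{A}(\tilde{\mathbf{Z}}\mathbf{R}^\top) = \mathcal{A}(\tilde{\mathbf{Z}})\mathbf{R}^\top$. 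The time embedding $\gamma(t)$ is concatenated along the feature axis (not the coordinate axis), preserving the right-multiplication structure. Finally the mode-wise linear weight $\mathbf{W}_\omega$ acts on the feature axis alone, giving $\mathbf{W}_\omega(\tilde{\mathbf{Z}}_\omega\mathbf{R}^\top) = (\mathbf{W}_\omega\tilde{\mathbf{Z}}_\omega)\mathbf{R}^\top$. Composing these establishes $g_\theta(\mathbf{R}\cdot\tilde{\mathbf{Z}},t) = \mathbf{R}\cdot g_\theta(\tilde{\mathbf{Z}},t)$.

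For stage (ii), let $\tilde{\mathbf{Z}}(t)$ solve $\dot{\tilde{\mathbf{Z}}} = g_\theta(\tilde{\mathbf{Z}},t)$ with initial value $\tilde{\mathbf{Z}}(t_0)$, and define $\tilde{\mathbf{Z}}_R(t) := \mathbf{R}\cdot\tilde{\mathbf{Z}}(t)$. Using linearity of $\mathbf{R}$ and the equivariance of $g_\theta$ just established,
\begin{equation}
\dot{\tilde{\mathbf{Z}}}_R = \mathbf{R}\cdot g_\theta(\tilde{\mathbf{Z}},t) = g_\theta(\mathbf{R}\cdot\tilde{\mathbf{Z}},t) = g_\theta(\tilde{\mathbf{Z}}_R,t),
\end{equation}
with initial value $\mathbf{R}\cdot\tilde{\mathbf{Z}}(t_0)$. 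Under the standard Lipschitz assumption on $g_\theta$ ensuring unique solutions, the flow map $\Phi_{t_0,t}$ satisfies $\Phi_{t_0,t}(\mathbf{R}\cdot\tilde{\mathbf{Z}}(t_0)) = \mathbf{R}\cdot\Phi_{t_0,t}(\tilde{\mathbf{Z}}(t_0))$, which is exactly the claimed integrated equivariance. The same argument applies jointly to the block-diagonal system on $(\tilde{\mathbf{H}},\tilde{\mathbf{Z}})$, so the entire spectral-domain ODE step is $\mathrm{SO}(3)$-equivariant.

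The main obstacle is the first stage, specifically the attention sub-module: in a generic transformer, query/key/value projections mix all feature dimensions together, and if the implementation reshaped $\tilde{\mathbf{Z}}_\omega$ into a flat $(m\cdot 3)$-vector before projection, the coordinate columns would be entangled and the commutation with $\mathbf{R}$ would fail. The argument therefore hinges on pinning down that attention (and every linear weight in $g_\theta$) is applied with shared scalar parameters across the three coordinate axes, treating them as parallel channels in the sense of EGNO's vector-channel design. Once that architectural invariant is recorded as a hypothesis, the commutation becomes a routine block-diagonal computation and the rest of the proof follows by the Picard--Lindelöf flow argument.
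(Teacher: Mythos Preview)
Your proposal is correct and follows the same approach as the paper, which justifies the lemma only by the one-line observation that the ODE is ``channelwise or `blockwise' linear or MLP-based'' and therefore commutes with $\mathbf{R}$; the paper gives no further proof. Your two-stage decomposition (equivariance of $g_\theta$ layer by layer, then equivariance of the flow via Picard--Lindel\"of) is a fully fleshed-out version of that sketch, and your explicit flagging of the attention sub-module as the potential failure point is more careful than the paper's own treatment.
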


\newpage

\subsection{Proof of the Main Theorem (SO(3) Equivariance)}

Recall our overall operator $\mathcal{T}_\theta$ has the form:
\begin{enumerate}[noitemsep, topsep=0pt, partopsep=0pt, parsep=0pt, leftmargin=*]
    \item \textbf{EGNN Encode}: $\bigl(\mathbf{h},\mathbf{Z}\bigr)\mapsto \bigl(\mathbf{h}^{(L)}, \mathbf{Z}^{(L)}\bigr).$  
    \item \textbf{Mean-Center + GFT}: $\mathbf{Z}^{(L)} \mapsto \mathbf{Z}_c^{(L)} \mapsto \tilde{\mathbf{Z}}^{(L)} = \mathcal{F}\bigl(\mathbf{Z}_c^{(L)}\bigr).$  
    \item \textbf{Block-Diagonal ODE}: $\tilde{\mathbf{Z}}^{(L)} \mapsto \tilde{\mathbf{Z}}(t)$ for any $t$.  
    \item \textbf{Inverse GFT + Add Mean}: $\tilde{\mathbf{Z}}(t)\mapsto \mathbf{Z}_c(t) = \mathcal{F}^{-1}\bigl(\tilde{\mathbf{Z}}(t)\bigr)\mapsto \mathbf{Z}(t).$  
    \item \textbf{EGNN Decode}: $\bigl(\mathbf{h},\mathbf{Z}(t)\bigr)\mapsto \bigl(\mathbf{h}'(t),\mathbf{Z}'(t)\bigr).$
\end{enumerate}

To show $\mathbf{R}\,\cdot\,\mathcal{T}_\theta(f)=\mathcal{T}_\theta(\mathbf{R}\,\cdot f)$, we proceed step-by-step:
\begin{enumerate}[noitemsep, topsep=0pt, partopsep=0pt, parsep=0pt, leftmargin=*]
    \item \textbf{EGNN Encode}: By Lemma A.1, if the input positions are replaced with $\mathbf{R}\mathbf{x}_i$, the output is $\mathbf{R}\mathbf{x}_i^{(L)}$.  
    \item \textbf{Mean-Center}: Under a global rotation, the centered coordinates also rotate, i.e., $\mathbf{x}_i^\circ \mapsto \mathbf{R}\,\mathbf{x}_i^\circ$.  
    \item \textbf{GFT}: By Lemma A.2, dimension-wise GFT on $\mathbf{R}\,\mathbf{x}_i^\circ$ yields the rotated spectral coefficients.  
    \item \textbf{Block-Diagonal ODE}: Lemma A.3 says the ODE in spectral space is equivariant w.r.t.\ 3D axis mixing, so $\mathbf{R}$ commutes with the ODE solution.  
    \item \textbf{Inverse GFT}: Again by Lemma A.2, inverse transforms are linear in each dimension, preserving $\mathbf{R}$ on the output.  
    \item \textbf{Add Mean}: The final global shift (if any) is consistent with $\mathbf{R}$.  
    \item \textbf{EGNN Decode}: By Lemma A.1 again, if the input to the decoder is rotated, the output is the rotated version of the unrotated output.
\end{enumerate}

Hence each sub-module respects the action of $\mathbf{R}$. Composing them in order yields the final statement
\begin{equation}
    \mathcal{T}_\theta(\mathbf{R}\,\cdot f)
    = \mathbf{R}\,\cdot\,\mathcal{T}_\theta(f).
\end{equation}
This completes the proof of $\mathrm{SO}(3)$-equivariance.

\subsection{Remarks on Translations}

In practice, \textbf{SE(3)} includes translations as well. Our pipeline \textbf{removes} the translational degree of freedom by \textbf{mean-centering} the positions (the DC mode). A global translation $\mathbf{x}_i \mapsto \mathbf{x}_i + \boldsymbol{\mu}$ simply shifts the mean $\overline{\mathbf{x}}$, so the centered coordinates $\mathbf{x}_i^\circ$ remain unchanged. This effectively ``factors out'' translation before the GFT steps. When we \textbf{re-add} the mean at the end, it ensures the final positions transform by $\mathbf{x}_i \mapsto \mathbf{x}_i + \boldsymbol{\mu}$. Thus the entire pipeline remains \textbf{invariant} to translations (i.e., translates its outputs accordingly). For brevity, the above proof focuses on \textbf{rotations} $\mathbf{R}\in \mathrm{SO}(3)$; translation invariance follows from the mean-subtraction procedure plus the decoder's reliance on relative positions.

\newpage







\section{Additional Experiment Results}
\label{app:experiments}
Below, we provide four tables corresponding to different experimental comparisons. The first table reports performance on the MD17 dataset with regular (equi-) timesteps. The remaining three tables focus on ablation experiments conducted under irregular sampling conditions: (i) ablations on the alanine dipeptide dataset, (ii) ablations comparing different GNN architectures, and (iii) ablations on different temporal embedding approaches.

\begin{table*}[!ht]
  \centering
  \caption{MD17 with Regular (Equi-) Timestep Sampling. MSE ($\times 10^{-2}$ \AA$^2$) on the MD17 dataset using regular timesteps. Best results are in \textbf{bold}, and second-best are \underline{underlined}.}
\resizebox{\textwidth}{!}{
    \begin{tabular}{lcccccccc}
    \toprule
          & Aspirin & Benzene & Ethanol & Malonaldehyde & Naphthalene & Salicylic & Toluene & Uracil \\
    \midrule
    NDCN & 31.73{$\pm$0.40} & 56.21{$\pm$0.30} & 10.74{$\pm$0.02} & 46.55{$\pm$0.28} & 2.25{$\pm$0.01} & 3.58{$\pm$0.11} & 13.92{$\pm$0.02} & 2.38{$\pm$0.00} \\ 
    LG-ODE & 19.36{$\pm$0.12} & 53.92{$\pm$1.32} & 7.08{$\pm$0.01} & 24.41{$\pm$0.03} & 1.73{$\pm$0.02} & 3.82{$\pm$0.04} & 11.18{$\pm$0.01} & 2.11{$\pm$0.02} \\  
    EGNN & \underline{9.24}{$\pm$0.07} & 57.85{$\pm$2.70} & \underline{4.63}{$\pm$0.00} & \underline{12.81}{$\pm$0.01} & \underline{0.38}{$\pm$0.01} & \underline{0.85}{$\pm$0.00} & \underline{10.41}{$\pm$0.04} & \underline{0.56}{$\pm$0.02}  \\ 
    EGNO & 9.41{$\pm$0.09} & \underline{55.13}{$\pm$3.21} & \underline{4.63}{$\pm$0.00} & \underline{12.81}{$\pm$0.01} & {0.40}{$\pm$0.01} & 0.93{$\pm$0.01} & 10.43{$\pm$0.10} & 0.59{$\pm$0.01}  \\
    ITO & 20.56{$\pm$0.03} & 57.25{$\pm$0.58} & 8.60{$\pm$0.27} & 28.44{$\pm$0.73} & 1.82{$\pm$0.17} & 2.48{$\pm$0.34} & 12.47{$\pm$0.30} & 1.33{$\pm$0.12}  \\ 
    \midrule
    Ours & {\bf 6.07}{$\pm$0.09} & {\bf 1.51}{$\pm$0.07} & {\bf 2.74}{$\pm$0.01} & {\bf 9.43}{$\pm$0.02} & \textbf{0.24}{$\pm$0.02} & {\bf 0.63}{$\pm$0.05} & {\bf 1.80}{$\pm$0.03} & {\bf 0.41}{$\pm$0.02} \\ 
    \bottomrule
    \end{tabular}%
    }
  \label{tab:table_reg_timestep_md17}
\end{table*}

\paragraph{Explanation.}
Table \ref{tab:table_reg_timestep_md17} shows results for MD17 under a regular (evenly spaced) sampling scheme. Although the dataset inherently has fine-grained timesteps, we constrain both training and evaluation to equidistant frames to compare methods fairly. Our approach demonstrates consistent improvements over baselines on nearly all molecules.

\begin{table*}[!ht]
  \centering
  \caption{\textbf{Ablation Results on Alanine Dipeptide (Irregular Sampling).} MSE ($\times 10^{-3}$ nm$^{2}$). Best results in \textbf{bold}.}
  \label{tab:alanine_dipeptide_two_rows}
  \vspace{0.7em}
  \resizebox{0.7\linewidth}{!}{
    \begin{tabular}{lccccc}
    \toprule
    & standard & no\_ode & no\_ode\_h & no\_ode\_x & no\_interaction \\
    \midrule
    MSE 
        & \textbf{4.48}$\pm$0.07 
        & 4.72$\pm$0.05 
        & 4.60$\pm$0.05 
        & 4.64$\pm$0.05 
        & 4.80$\pm$0.04 \\
    \midrule
    & interaction\_concat & time\_posenc & time\_mlp & FFT & no\_fourier \\
    \midrule
    MSE 
        & 4.98$\pm$0.06 
        & 4.62$\pm$0.05 
        & 4.60$\pm$0.05 
        & 4.57$\pm$0.05 
        & 4.51$\pm$0.04 \\
    \bottomrule
    \end{tabular}
  }
\end{table*}

\paragraph{Explanation.}
Table \ref{tab:alanine_dipeptide_two_rows} organizes the ablation settings into two rows, each containing five columns. The first row compares our ``standard'' model to variants that remove specific ODE blocks or modify scalar/vector-only ODE updates (``no\_ode'',``no\_ode\_h'', ``no\_ode\_x''), and the second row compares different interaction modes, time embeddings, and Fourier settings. The ``standard'' configuration achieves the best overall MSE.

\begin{table*}[!ht]
  \centering
\caption{\textbf{Ablation on GNN Architectures (Irregular Sampling).} MSE ($\times 10^{-2}$ \AA$^2$) on MD17 comparing different GNN layers (SAGEConv, GCNConv, EGNNConv). Best results in \textbf{bold}.}
  \resizebox{\textwidth}{!}{
    \begin{tabular}{lcccccccc}
    \toprule
    & Aspirin & Benzene & Ethanol & Malonaldehyde & Naphthalene & Salicylic & Toluene & Uracil \\
    \midrule
    \textbf{SAGEConv} & \textbf{6.46}{$\pm$0.03} & 1.79{$\pm$0.08} & \textbf{2.74}{$\pm$0.05} & \textbf{10.54}{$\pm$0.01} & \textbf{0.23}{$\pm$0.02} & \textbf{0.63}{$\pm$0.01} & 3.08{$\pm$0.05} & \textbf{0.41}{$\pm$0.01} \\ 
    \textbf{GCNConv} & 6.91{$\pm$0.02} & \textbf{1.52}{$\pm$0.08} & 3.09{$\pm$0.06} & 10.85{$\pm$0.03} & 0.42{$\pm$0.01} & 0.88{$\pm$0.00} & \textbf{1.80}{$\pm$0.05} & 0.61{$\pm$0.02} \\         
    \textbf{EGNNConv} & 8.85{$\pm$0.02} & 40.86{$\pm$0.98} & 4.41{$\pm$0.06} & 12.49{$\pm$0.00} & {0.40}{$\pm$0.01} & 0.87{$\pm$0.01} & 8.63{$\pm$0.04} & 0.62{$\pm$0.02} \\ 
    \bottomrule
    \end{tabular}
  }
\label{tab:md17_uneven_arch_ablation_revised}
\end{table*}

\paragraph{Explanation.}
Table \ref{tab:md17_uneven_arch_ablation_revised} shows how our model performs with different GNN backbones on MD17 under irregular sampling. Overall, SAGEConv yields robust performance for most molecules, whereas GCNConv provides better results specifically on Benzene and Toluene. EGNNConv performs well on some local metrics but struggles on large translations (i.e., Benzene).

\begin{table*}[!ht]
  \centering
\caption{\textbf{Ablation on Time Embedding Approaches (Irregular Sampling).} MSE ($\times 10^{-2}$ \AA$^2$) on MD17 across different time encoding schemes.}
  \resizebox{\textwidth}{!}{
    \begin{tabular}{lcccccccc}
    \toprule
          & Aspirin & Benzene & Ethanol & Malonaldehyde & Naphthalene & Salicylic & Toluene & Uracil \\
    \midrule
    posenc   & 6.91$\pm$0.08  & 1.81$\pm$0.02  & 3.11$\pm$0.05  & 10.69$\pm$0.03 & 4.19$\pm$0.04  & 0.87$\pm$0.01  & 3.56$\pm$0.07  & 0.59$\pm$0.02 \\[0.5ex]
    mlp      & 6.61$\pm$0.06  & { 1.61$\pm$0.03}  & 3.08$\pm$0.02  & { 10.62$\pm$0.04} & { 0.41$\pm$0.01}  & 0.87$\pm$0.02  & 3.25$\pm$0.06  & {0.56$\pm$0.01} \\[0.5ex]
    concat & \textbf{6.46}{$\pm$0.03} & \textbf{1.52}{$\pm$0.08} & \textbf{2.74}{$\pm$0.05} & \textbf{10.54}{$\pm$0.01} & \textbf{0.23}{$\pm$0.02} & \textbf{0.63}{$\pm$0.01} & \textbf{1.80}{$\pm$0.05} & \textbf{0.41}{$\pm$0.01} \\    \bottomrule
    \end{tabular}%
  }
\label{tab:time_ablation_revised}
\end{table*}

\paragraph{Explanation.}
Table \ref{tab:time_ablation_revised} compares three different time-embedding methods under irregular timestep sampling: positional encoding (posenc), a small MLP (mlp), and a direct concatenation of time tokens (concat). Concatenation achieves the lowest MSE, suggesting that a straightforward inclusion of time in the feature vector can be beneficial, though the MLP variant also achieves competitive performance on several molecules.

\begin{table*}[!ht]
  \centering
  \caption{Comparison of GF-NODE with baseline models on the revised MD17 dataset at $\Delta t = 3000$. MSE ($\times10^{-2}$ Å$^2$) values; best results in \textbf{bold}.}
  \resizebox{\textwidth}{!}{%
    \begin{tabular}{lccccccccc}
    \toprule
    Model   & Aspirin             & Azobenzene         & Ethanol            & Malonaldehyde      & Naphthalene        & Paracetamol        & Salicylic          & Toluene            & Uracil             \\
    \midrule
    NDCN    & 34.78{\scriptsize $\pm$0.57} &  8.45{\scriptsize $\pm$0.29} & 24.67{\scriptsize $\pm$0.22} & 39.02{\scriptsize $\pm$0.51} & 1.28{\scriptsize $\pm$0.04} & 27.13{\scriptsize $\pm$0.41} & 1.08{\scriptsize $\pm$0.03} & 25.99{\scriptsize $\pm$0.36} & 0.88{\scriptsize $\pm$0.05} \\
    LG-ODE  & 33.40{\scriptsize $\pm$0.15} &  9.88{\scriptsize $\pm$0.34} & 23.15{\scriptsize $\pm$0.17} & 41.21{\scriptsize $\pm$0.64} & 1.42{\scriptsize $\pm$0.06} & 26.17{\scriptsize $\pm$0.22} & 1.33{\scriptsize $\pm$0.05} & 24.75{\scriptsize $\pm$0.27} & 0.95{\scriptsize $\pm$0.03} \\
    EGNN    & 31.45{\scriptsize $\pm$0.29} & 11.03{\scriptsize $\pm$0.41} & 22.95{\scriptsize $\pm$0.19} & 38.80{\scriptsize $\pm$0.30} & 1.18{\scriptsize $\pm$0.07} & 25.87{\scriptsize $\pm$0.30} & 1.20{\scriptsize $\pm$0.04} & 23.90{\scriptsize $\pm$0.19} & 0.82{\scriptsize $\pm$0.02} \\
    EGNO    & 32.01{\scriptsize $\pm$0.83} &  7.51{\scriptsize $\pm$0.12} & 23.58{\scriptsize $\pm$0.39} & 37.90{\scriptsize $\pm$0.47} & 1.37{\scriptsize $\pm$0.05} & 26.02{\scriptsize $\pm$0.36} & 0.88{\scriptsize $\pm$0.02} & 24.82{\scriptsize $\pm$0.65} & 0.78{\scriptsize $\pm$0.04} \\
    ITO     & 38.50{\scriptsize $\pm$1.02} & 10.87{\scriptsize $\pm$0.53} & 25.33{\scriptsize $\pm$0.71} & 43.55{\scriptsize $\pm$0.92} & 1.69{\scriptsize $\pm$0.09} & 28.45{\scriptsize $\pm$0.28} & 1.66{\scriptsize $\pm$0.07} & 27.35{\scriptsize $\pm$0.59} & 1.12{\scriptsize $\pm$0.11} \\
    \midrule
    GF-NODE & \textbf{30.27}{\scriptsize $\pm$0.04} &  \textbf{7.03}{\scriptsize $\pm$0.02} & \textbf{21.92}{\scriptsize $\pm$0.03} & \textbf{37.92}{\scriptsize $\pm$0.05} &  \textbf{1.10}{\scriptsize $\pm$0.01} & \textbf{24.46}{\scriptsize $\pm$0.04} & \textbf{0.81}{\scriptsize $\pm$0.01} & \textbf{23.13}{\scriptsize $\pm$0.04} & \textbf{0.62}{\scriptsize $\pm$0.01} \\
    \bottomrule
    \end{tabular}%
  }
  \label{tab:md17_baseline_comparison_dt3000}
\end{table*}

\begin{table*}[!ht]
  \centering
  \caption{Comparison of GF-NODE with baseline models on the revised MD17 dataset at $\Delta t = 10000$. MSE ($\times10^{-2}$ Å$^2$) values; best results in \textbf{bold}.}
  \resizebox{\textwidth}{!}{%
    \begin{tabular}{lccccccccc}
    \toprule
    Model   & Aspirin            & Azobenzene         & Ethanol             & Malonaldehyde       & Naphthalene         & Paracetamol         & Salicylic           & Toluene             & Uracil              \\
    \midrule
    NDCN    & 42.67{\scriptsize $\pm$0.91} & 11.34{\scriptsize $\pm$0.72} & 29.45{\scriptsize $\pm$0.47} & 48.75{\scriptsize $\pm$1.10} & 1.90{\scriptsize $\pm$0.03}  & 33.83{\scriptsize $\pm$0.59}  & 1.95{\scriptsize $\pm$0.22}  & 34.12{\scriptsize $\pm$0.48}  & 1.72{\scriptsize $\pm$0.09}  \\
    LG-ODE  & 46.12{\scriptsize $\pm$0.37} &  9.88{\scriptsize $\pm$0.27} & 31.05{\scriptsize $\pm$0.33} & 44.80{\scriptsize $\pm$0.68} & 2.13{\scriptsize $\pm$0.07}  & 29.05{\scriptsize $\pm$0.31}  & 1.65{\scriptsize $\pm$0.26}  & 30.48{\scriptsize $\pm$0.19}  & 1.22{\scriptsize $\pm$0.04}  \\
    EGNN    & 38.09{\scriptsize $\pm$0.16} & 13.67{\scriptsize $\pm$0.41} & 26.14{\scriptsize $\pm$0.26} & 41.95{\scriptsize $\pm$0.21} & 2.07{\scriptsize $\pm$0.13}  & 28.45{\scriptsize $\pm$0.17}  & 1.27{\scriptsize $\pm$0.08}  & 29.83{\scriptsize $\pm$0.28}  & 1.01{\scriptsize $\pm$0.05}  \\
    EGNO    & 40.99{\scriptsize $\pm$0.54} & 12.39{\scriptsize $\pm$0.22} & 27.88{\scriptsize $\pm$0.39} & 42.33{\scriptsize $\pm$0.94} & 2.22{\scriptsize $\pm$0.04}  & 27.12{\scriptsize $\pm$0.47}  & 1.58{\scriptsize $\pm$0.10}  & 32.15{\scriptsize $\pm$0.26}  & 0.95{\scriptsize $\pm$0.02}  \\
    ITO     & 49.77{\scriptsize $\pm$1.12} & 15.03{\scriptsize $\pm$0.67} & 34.11{\scriptsize $\pm$0.82} & 53.50{\scriptsize $\pm$0.73} & 2.56{\scriptsize $\pm$0.14}  & 35.98{\scriptsize $\pm$0.65}  & 2.22{\scriptsize $\pm$0.17}  & 36.45{\scriptsize $\pm$0.54}  & 1.83{\scriptsize $\pm$0.07}  \\
    \midrule
    GF-NODE & \textbf{33.18}{\scriptsize $\pm$0.03} &  \textbf{7.29}{\scriptsize $\pm$0.03} & \textbf{22.31}{\scriptsize $\pm$0.04} & \textbf{38.74}{\scriptsize $\pm$0.05} &  \textbf{1.27}{\scriptsize $\pm$0.01}  & \textbf{27.20}{\scriptsize $\pm$0.04}  & \textbf{0.93}{\scriptsize $\pm$0.01}  & \textbf{27.92}{\scriptsize $\pm$0.04}  & \textbf{0.72}{\scriptsize $\pm$0.01}  \\
    \bottomrule
    \end{tabular}%
  }
  \label{tab:md17_baseline_comparison_dt10000}
\end{table*}

\begin{table*}[!ht]
  \centering
  \caption{MSE ($\times 10^{-2}$ Å$^2$) for Ala2 and larger molecules at $\Delta t = 3000$. Best results in \textbf{bold}, second best \underline{underlined}. }
  \resizebox{0.9\textwidth}{!}{%
    \begin{tabular}{lccccc}
    \toprule
    Model & Ala2 & Ac-Ala3-NHMe & AT-AT-CG-CG & Bucky-Catcher & DW Nanotube \\
    \midrule
    NDCN          & 122.65{\scriptsize $\pm$1.87} & 22.34{\scriptsize $\pm$0.22} & 26.78{\scriptsize $\pm$0.50} & 6.10{\scriptsize $\pm$0.15} & 4.50{\scriptsize $\pm$0.20} \\
    LG-ODE        & 90.15{\scriptsize $\pm$0.90} & 30.12{\scriptsize $\pm$1.00} & 33.50{\scriptsize $\pm$1.10} & 8.25{\scriptsize $\pm$0.40} & 5.80{\scriptsize $\pm$0.30} \\
    EGNN          & \underline{56.70}{\scriptsize $\pm$0.84} & \underline{18.45}{\scriptsize $\pm$0.12} & 20.75{\scriptsize $\pm$0.45} & 7.10{\scriptsize $\pm$0.25} & 5.60{\scriptsize $\pm$0.35} \\
    EGNO          & {69.17}{\scriptsize $\pm$2.58} & 23.10{\scriptsize $\pm$0.35} & \underline{17.20}{\scriptsize $\pm$0.20}  &\underline{5.30}{\scriptsize $\pm$0.10}  & 4.50{\scriptsize $\pm$0.15} \\
    ITO           & 269.45{\scriptsize $\pm$1.87} & 28.90{\scriptsize $\pm$0.95} & 32.00{\scriptsize $\pm$1.25} & 8.60{\scriptsize $\pm$0.50} & \underline{3.80}{\scriptsize $\pm$0.08} \\
    \midrule
    GF-NODE & \textbf{44.82}{\scriptsize $\pm$0.71} & \textbf{13.19}{\scriptsize $\pm$0.13} & \textbf{14.07}{\scriptsize $\pm$0.23} & \textbf{3.09}{\scriptsize $\pm$0.04} & \textbf{2.58}{\scriptsize $\pm$0.02} \\
    \bottomrule
    \end{tabular}%
  }
  \label{tab:main_mse_comparison}
\end{table*}

\begin{table*}[!ht]
  \centering
  \caption{MSE ($\times 10^{-2}$ Å$^2$) for Ala2 and larger molecules at $\Delta t = 10000$. Best results in \textbf{bold}, second best \underline{underlined}. }
  \resizebox{0.9\textwidth}{!}{%
    \begin{tabular}{lccccc}
    \toprule
    Model & Ala2 & Ac-Ala3-NHMe & AT-AT-CG-CG & Bucky-Catcher & DW Nanotube \\
    \midrule
    NDCN          &  134.10{\scriptsize $\pm$0.48} & 30.15{\scriptsize $\pm$0.26} & 38.82{\scriptsize $\pm$0.60} & 7.32{\scriptsize $\pm$0.18}  & 5.85{\scriptsize $\pm$0.24} \\
    LG-ODE        & 117.20{\scriptsize $\pm$1.08} & 40.66{\scriptsize $\pm$1.20} & 48.58{\scriptsize $\pm$1.32} & 9.90{\scriptsize $\pm$0.48}  & 7.54{\scriptsize $\pm$0.36} \\
    EGNN          &  88.63{\scriptsize $\pm$0.36} & 24.91{\scriptsize $\pm$0.14} & 30.09{\scriptsize $\pm$0.54} & 8.52{\scriptsize $\pm$0.30}  & 7.28{\scriptsize $\pm$0.42} \\
    EGNO          & \underline{73.71}{\scriptsize $\pm$1.10} & 31.19{\scriptsize $\pm$0.42} & \underline{24.94}{\scriptsize $\pm$0.24} & \underline{6.36}{\scriptsize $\pm$0.12} & 5.85{\scriptsize $\pm$0.18} \\
    ITO           & 297.21{\scriptsize $\pm$1.38} & 39.02{\scriptsize $\pm$1.14} & 46.40{\scriptsize $\pm$1.50} & 10.32{\scriptsize $\pm$0.60} & \underline{4.94}{\scriptsize $\pm$0.10} \\
    \midrule
    GF-NODE & \textbf{49.20}{\scriptsize $\pm$0.31} & \textbf{16.72}{\scriptsize $\pm$0.14} & \textbf{17.89}{\scriptsize $\pm$0.29} & \textbf{4.37}{\scriptsize $\pm$0.03} & \textbf{3.22}{\scriptsize $\pm$0.05} \\
    \bottomrule
    \end{tabular}%
  }
  \label{tab:main_dt10000}
\end{table*}

\begin{figure*}[ht]
  \centering
  \includegraphics[width=\textwidth]{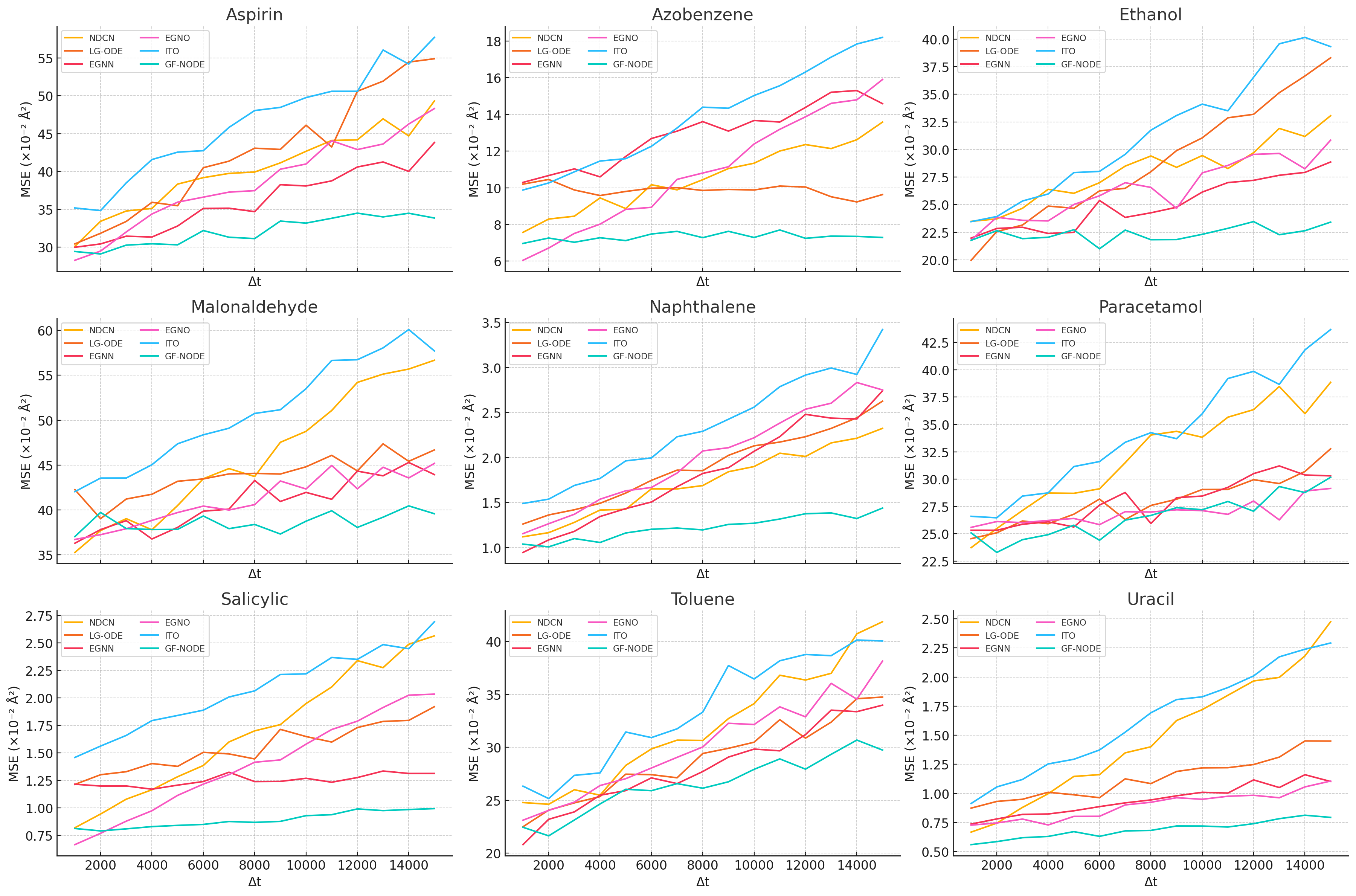}
  \caption{Temporal error growth for GF-NODE and baseline models on nine molecules on the revised MD17 dataset.  Each panel plots MSE ($\times 10^{-2}$ Å²) versus integration horizon $\Delta t$ = 1000, 2000, ..., 15000. }
  \label{fig:mse_trends_rmd17}
\end{figure*}

\begin{figure*}[ht]
  \centering
  \includegraphics[width=\textwidth]{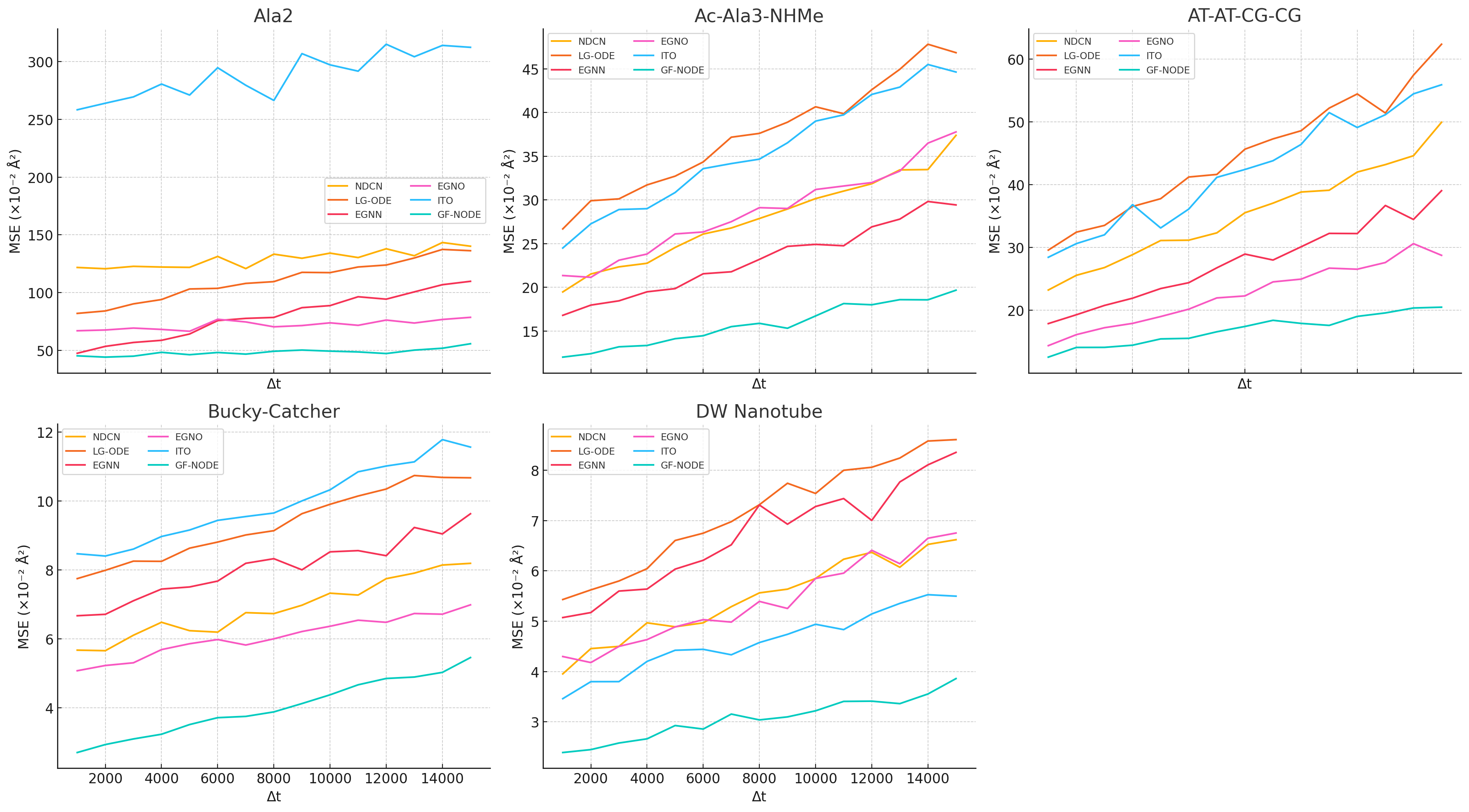}
  \caption{Long-horizon MSE trends for five larger molecules (Ala$_2$, Ac-Ala$_3$-NHMe, AT-AT-CG-CG, Bucky-Catcher, DW Nanotube).  Each panel shows MSE ($\times 10^{-2}$ Å²) for GF-NODE and five baselines over integration horizons $\Delta t$ = 1000, 2000, ..., 15000.  }
  \label{fig:large_combined}
\end{figure*}

\begin{table}[ht]
  \centering
  \caption{Number of heavy (non-H) atoms in each molecule.}
  \label{tab:heavy_atoms}
  \begin{tabular}{lcc}
    \toprule
    Molecule            & Dataset                   & \# Heavy Atoms \\
    \midrule
    \multicolumn{3}{c}{\textbf{Revised MD17}} \\
    Aspirin             &               & 13 \\
    Azobenzene          &               & 14 \\
    Ethanol             &               & 3  \\
    Malonaldehyde       &               & 5  \\
    Naphthalene         &               & 10 \\
    Paracetamol         &               & 11 \\
    Salicylic acid      &               & 10 \\
    Toluene             &               & 7  \\
    Uracil              &               & 8  \\
    \midrule
    \multicolumn{3}{c}{\textbf{Larger Molecular Systems}} \\
    Ala$_2$             &             & 22  \\
    Ac-Ala$_3$-NHMe     &             & 20  \\
    Bucky-Catcher       &             & 120 \\
    AT-AT-CG-CG         &             & 76  \\
    DW Nanotube         &             & 326 \\
    \bottomrule
  \end{tabular}
\end{table}

\begin{figure}[ht]
  \centering
  \includegraphics[width=0.6\textwidth]{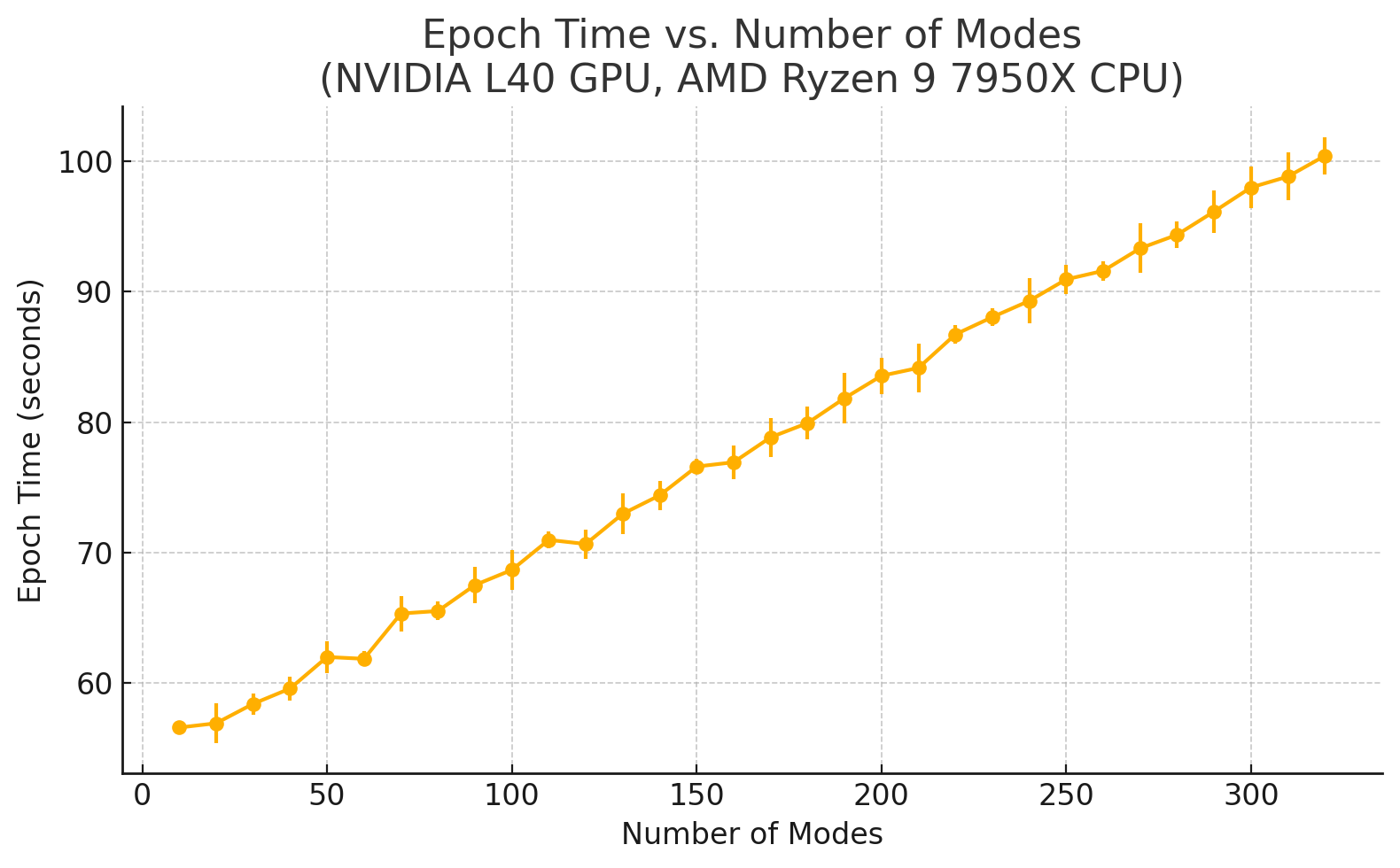}
  \caption{Epoch time (seconds) as a function of the number of Fourier modes used, measured on an NVIDIA L40 GPU with an AMD Ryzen 9 7950X CPU. Error bars represent variability across three repeated timing runs at each mode count.}
  \label{fig:modes_speed_err}
\end{figure}

\begin{figure}[ht]
  \centering
  \includegraphics[width=\textwidth]{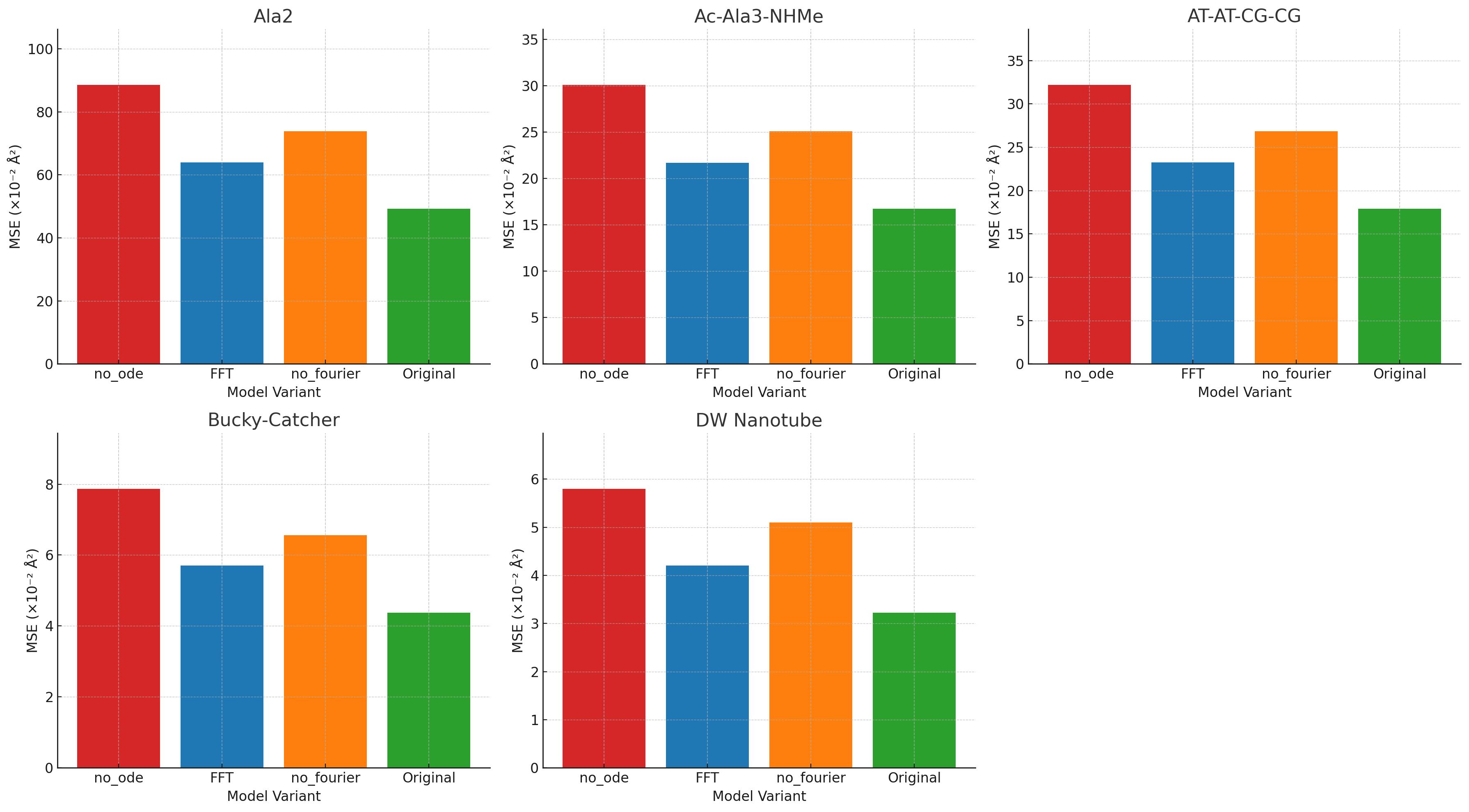}
  \caption{Ablation study on hierarchical components for five larger molecules (Ala$_2$, Ac-Ala$_3$-NHMe, AT-AT-CG-CG, Bucky-Catcher, DW Nanotube) at $\Delta t = 10000$.  
  Variants shown are: no\_ode (red), FFT only (blue), no\_fourier (orange), and the full model (Original, green).  Removing ODE or Fourier components degrades performance—often exceeding baseline errors—whereas the complete architecture attains the lowest MSE.}
  \label{fig:ablation_modes}
\end{figure}

\newpage 

\FloatBarrier

\section{Radial Distribution Function Analysis}
\label{sec:rdf_analysis}

\noindent\textbf{Section summary.}~This appendix contrasts the RDFs predicted by \mbox{GF--NODE} with reference \emph{ab initio} data, both system–averaged and element–specific, to evaluate structural fidelity.

Radial distribution functions (RDF, $g(r)$) quantify how atomic density varies as a function of distance and therefore provide a stringent test of whether a learned model reproduces the local and intermediate‐range structure of condensed–phase systems.  Below we compare the RDFs produced by \mbox{GF--NODE} with those obtained from reference \emph{ab initio} trajectories (blue).

\begin{figure*}[ht]
    \centering
    \subfigure[Uracil]{\includegraphics[width=0.30\linewidth]{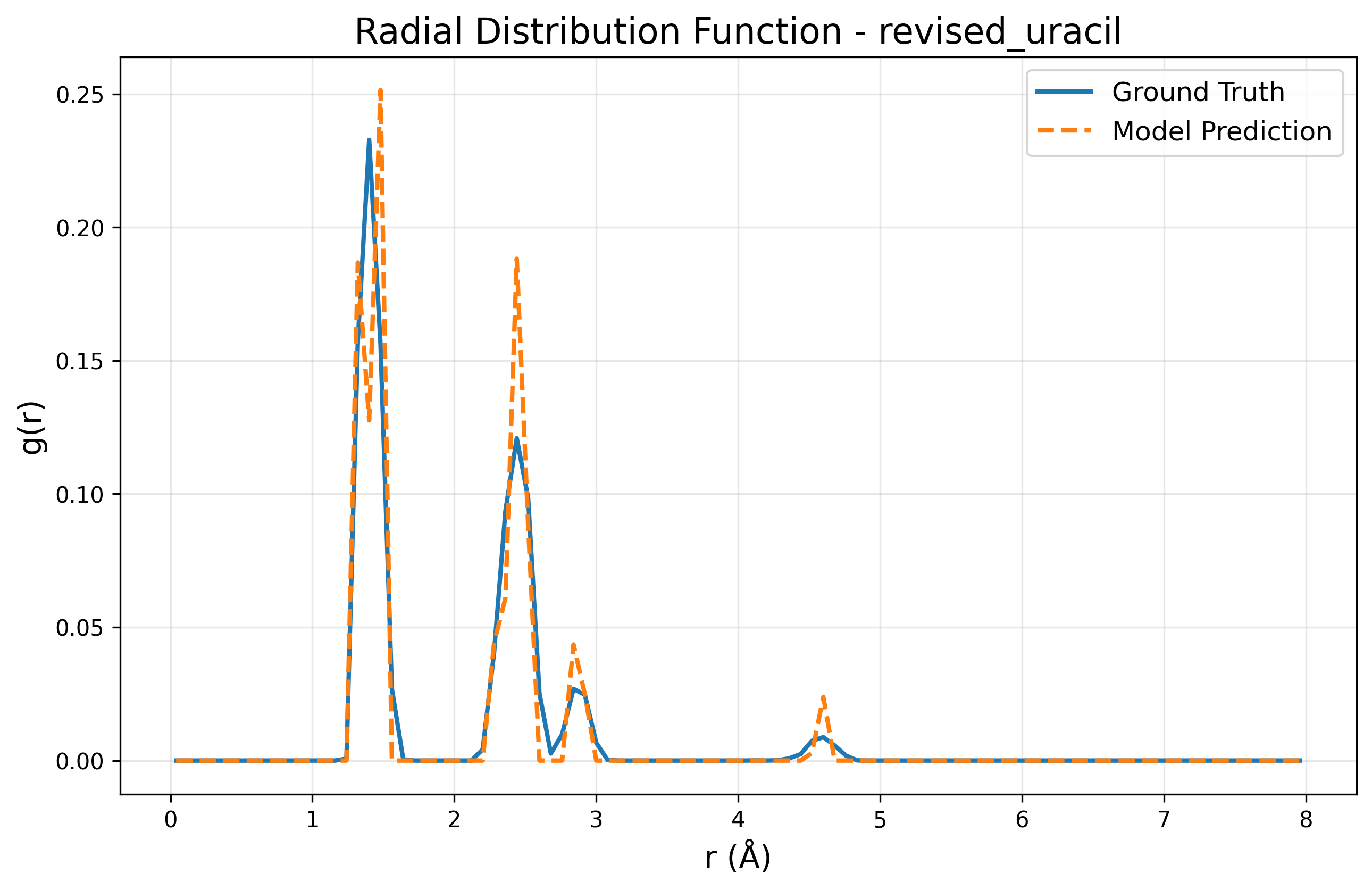}}\hfill
    \subfigure[Salicylic acid]{\includegraphics[width=0.30\linewidth]{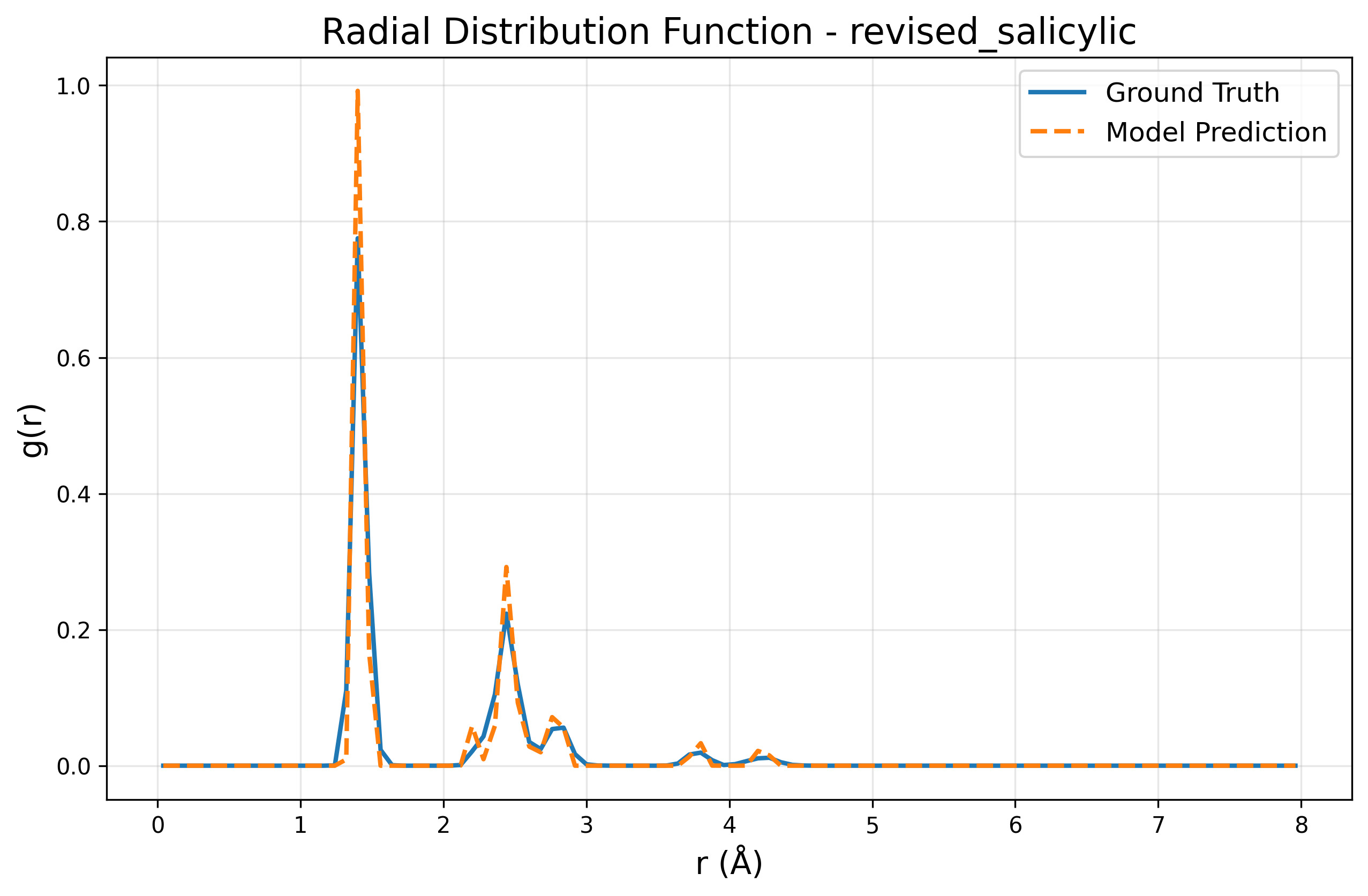}}\hfill
    \subfigure[Naphthalene]{\includegraphics[width=0.30\linewidth]{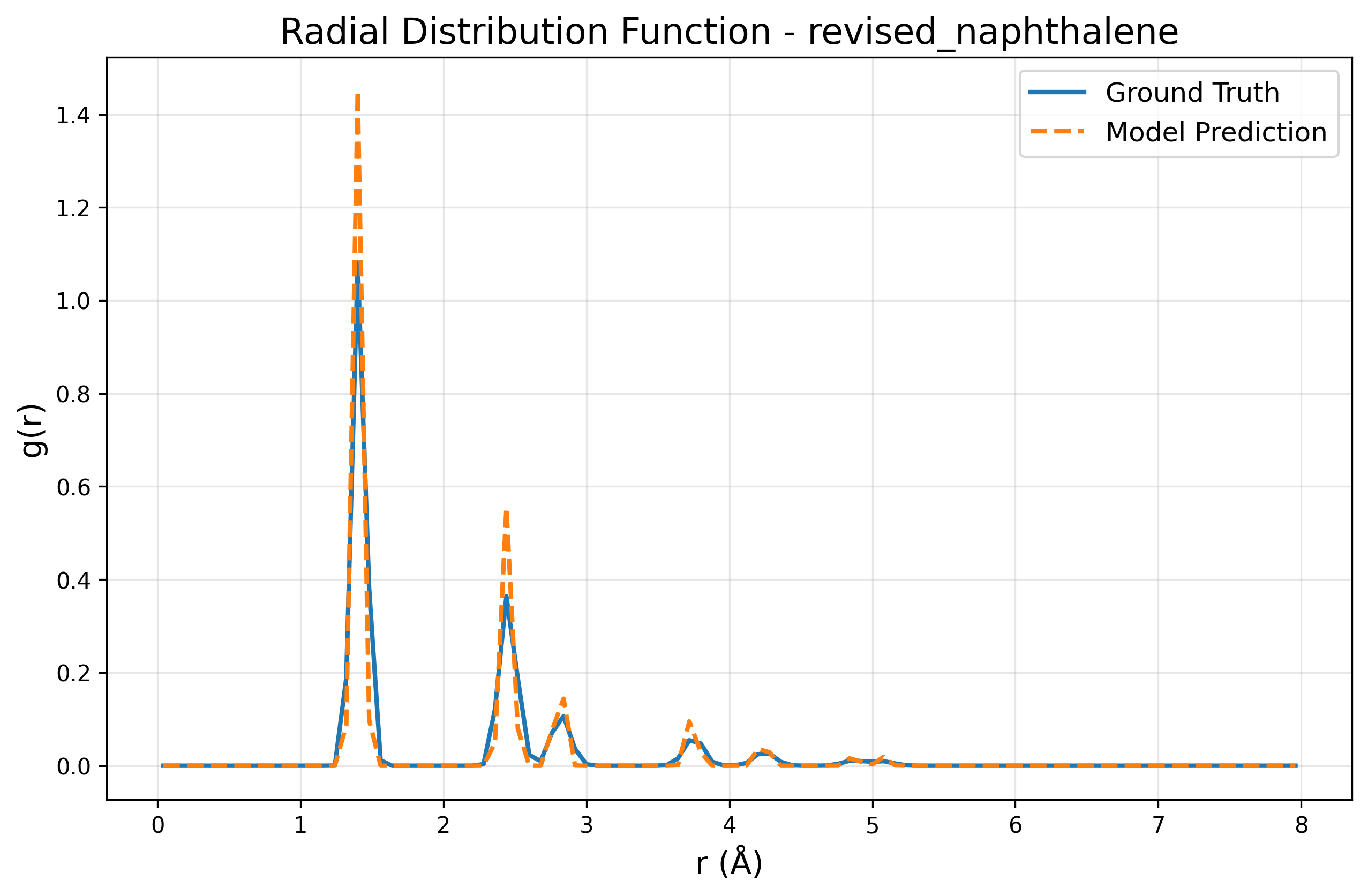}}\\[-0.3em]
    \hfill
    \subfigure[Buckyball catcher]{\includegraphics[width=0.30\linewidth]{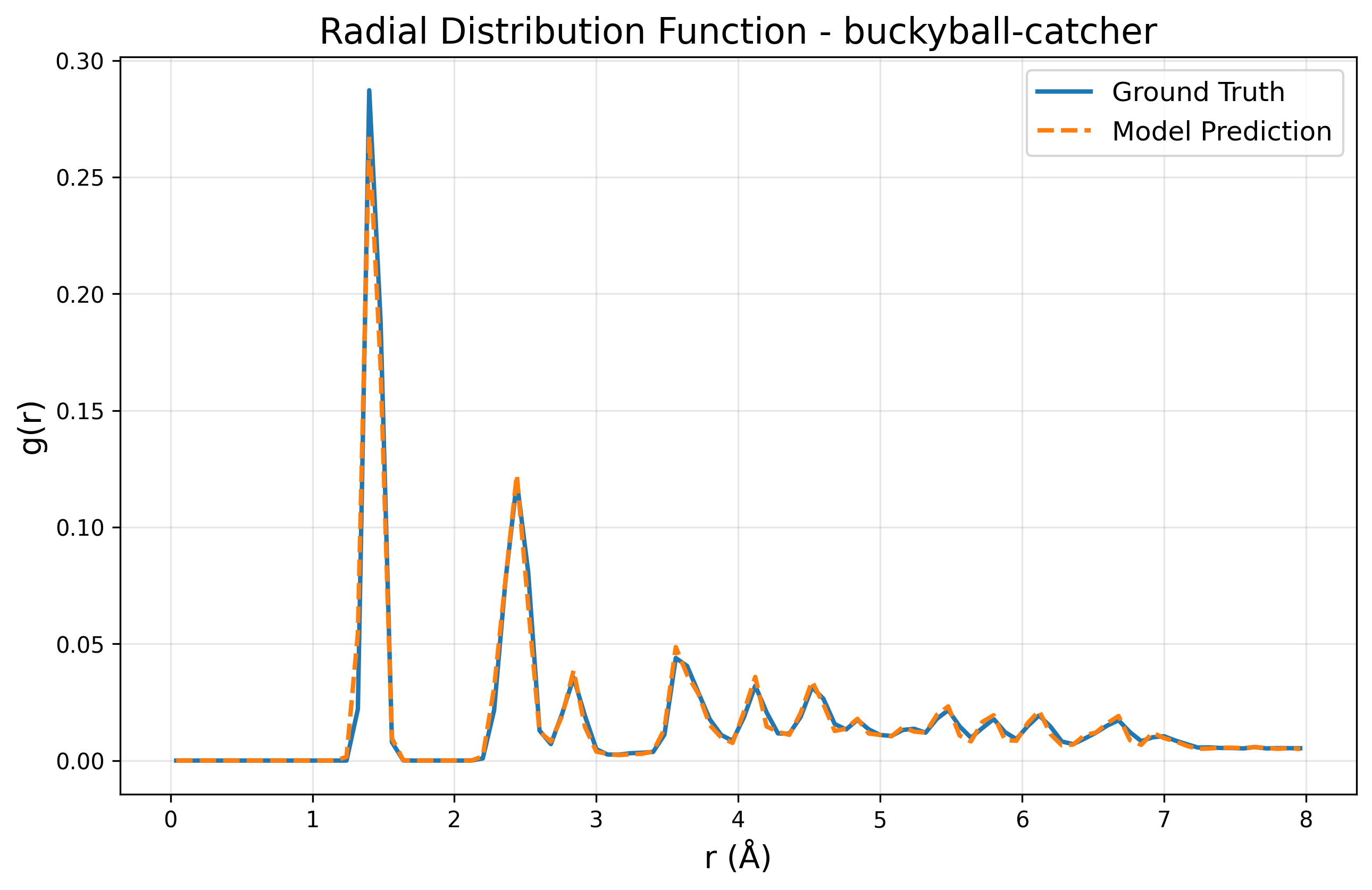}}
    \hfill 
    \subfigure[Double–wall NT]{\includegraphics[width=0.30\linewidth]{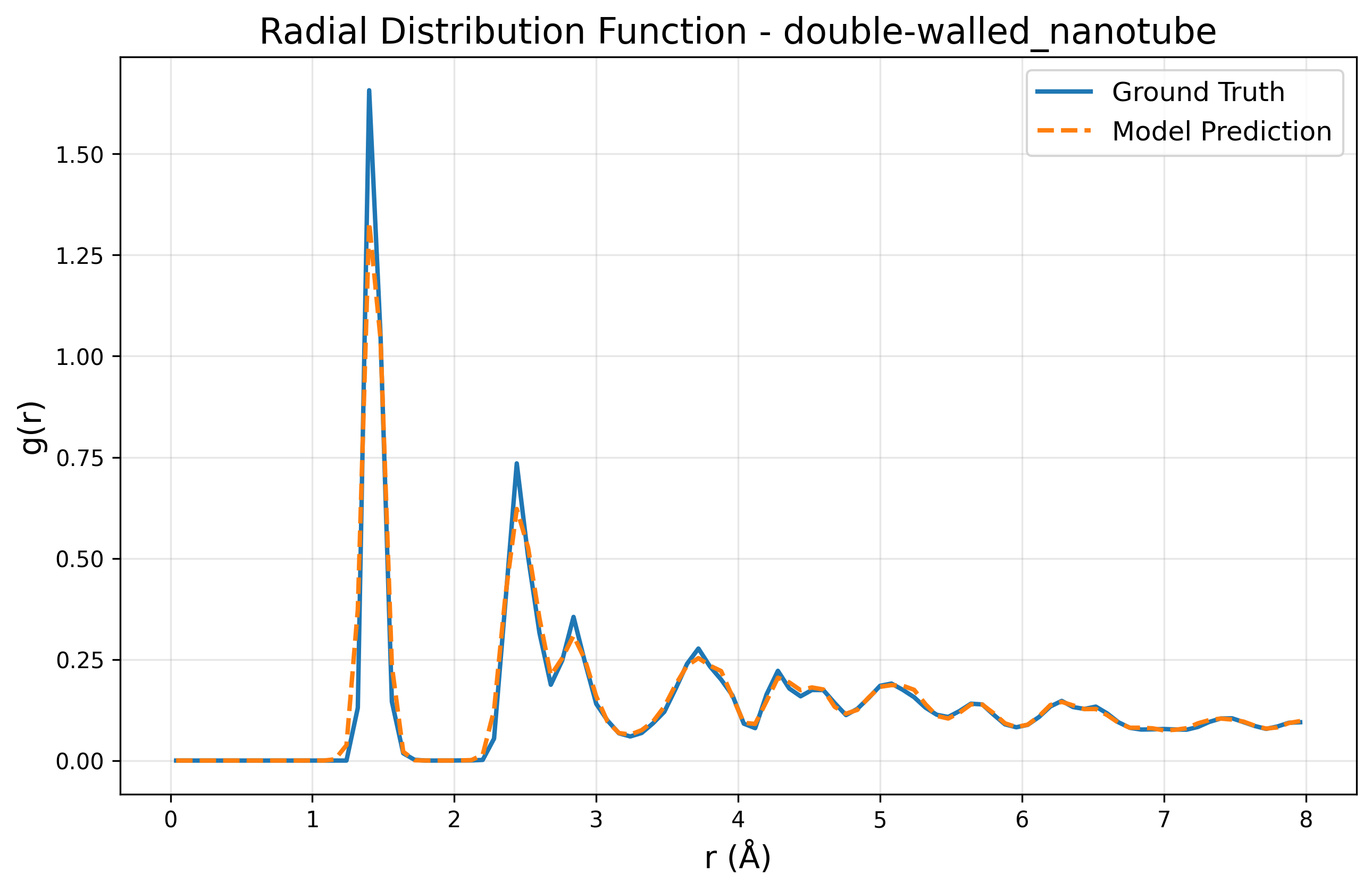}}
    \hfill
    \hfill 
    \caption{System–averaged radial distribution functions $g(r)$ for the five benchmark molecules/complexes.  Orange: GF--NODE; black: \emph{ab initio}.  The close match indicates that the model accurately reproduces both short- and medium-range order.}
    \label{fig:rdf_average}
\end{figure*}

\begin{figure*}[ht]
    \centering
    \subfigure[Uracil: C--C]{\includegraphics[width=0.24\linewidth]{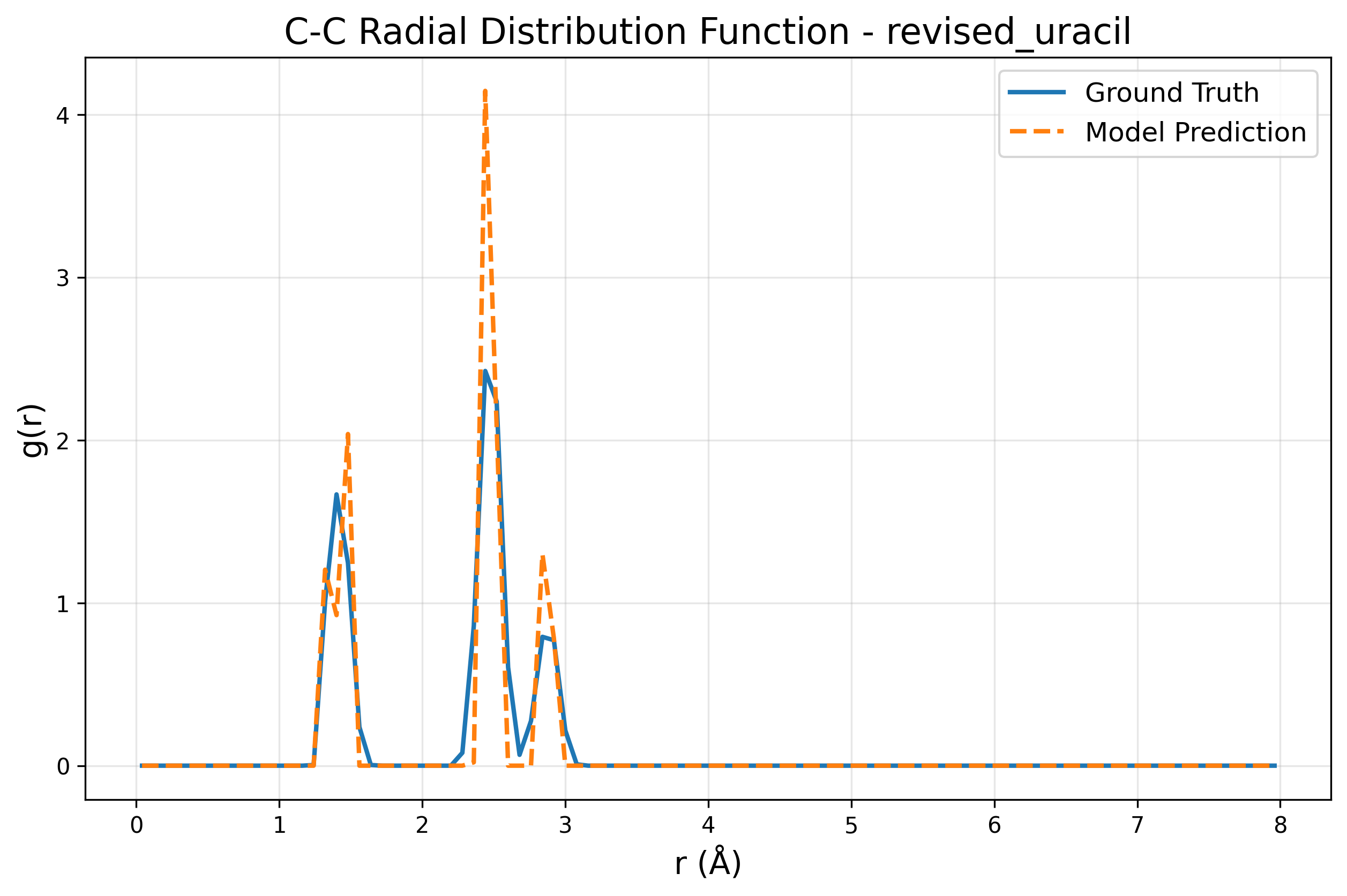}}
    \subfigure[Uracil: C--N]{\includegraphics[width=0.24\linewidth]{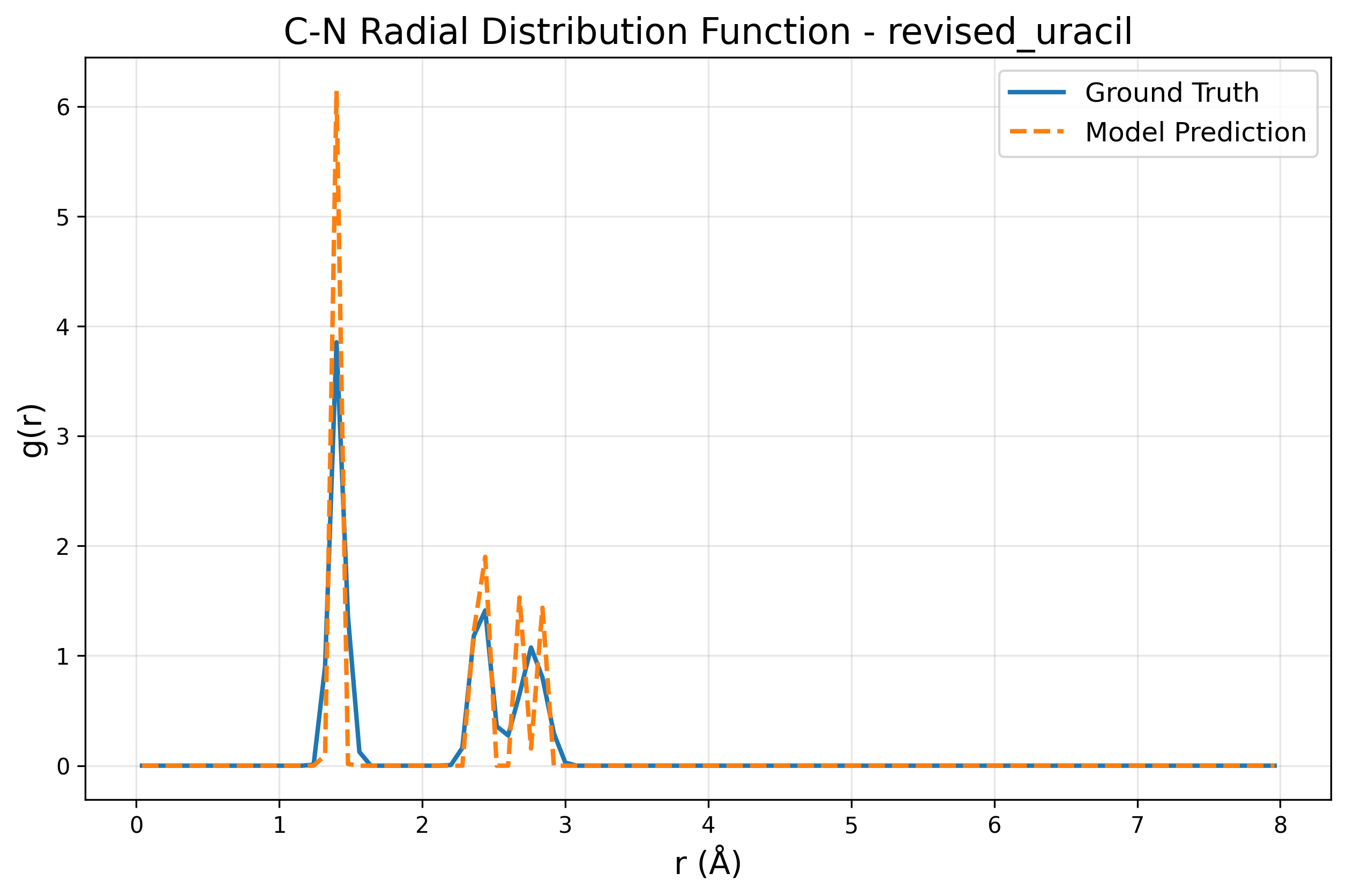}}
    \subfigure[Uracil: C--O]{\includegraphics[width=0.24\linewidth]{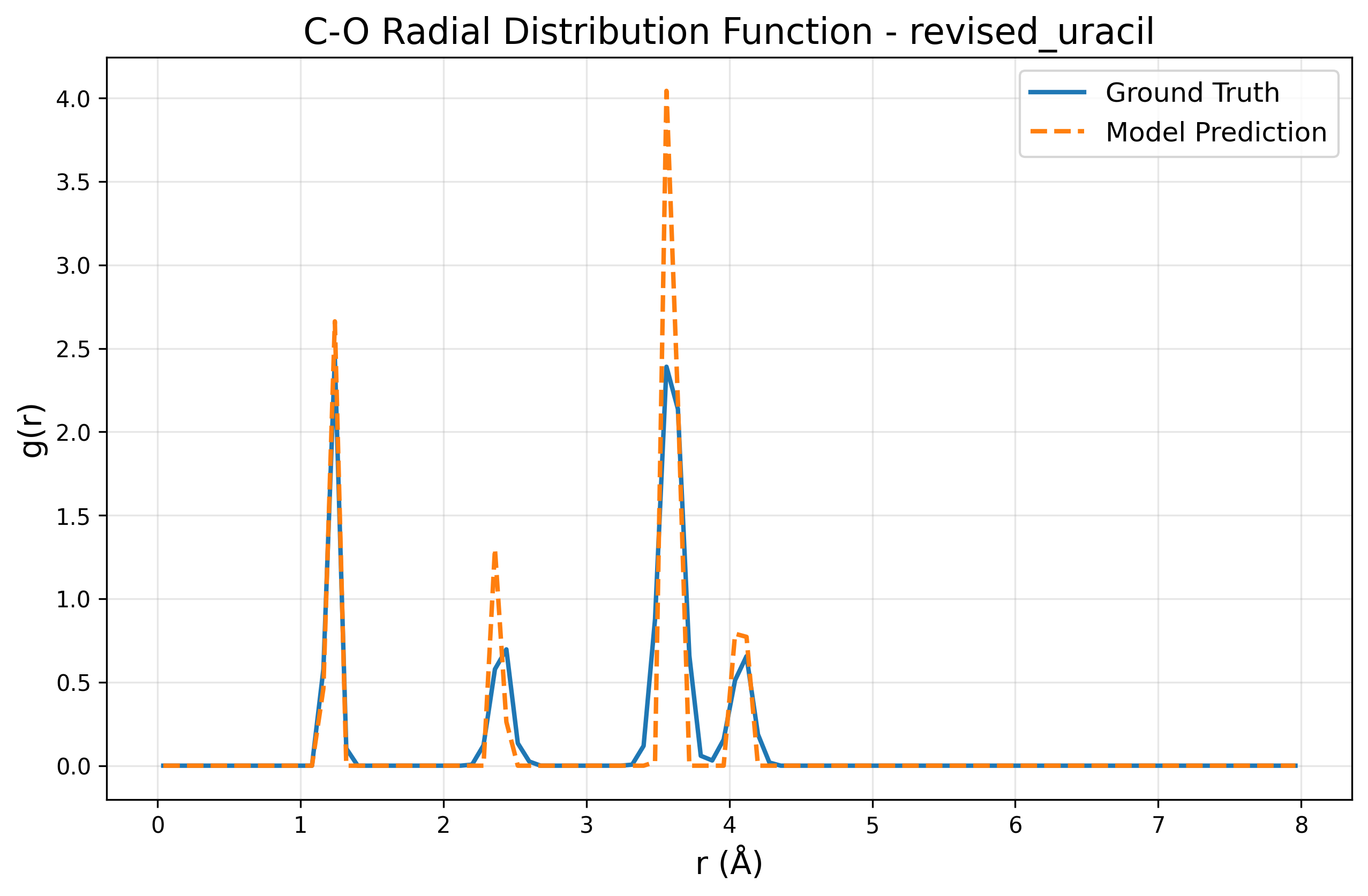}}
    \subfigure[Uracil: N--N]{\includegraphics[width=0.24\linewidth]{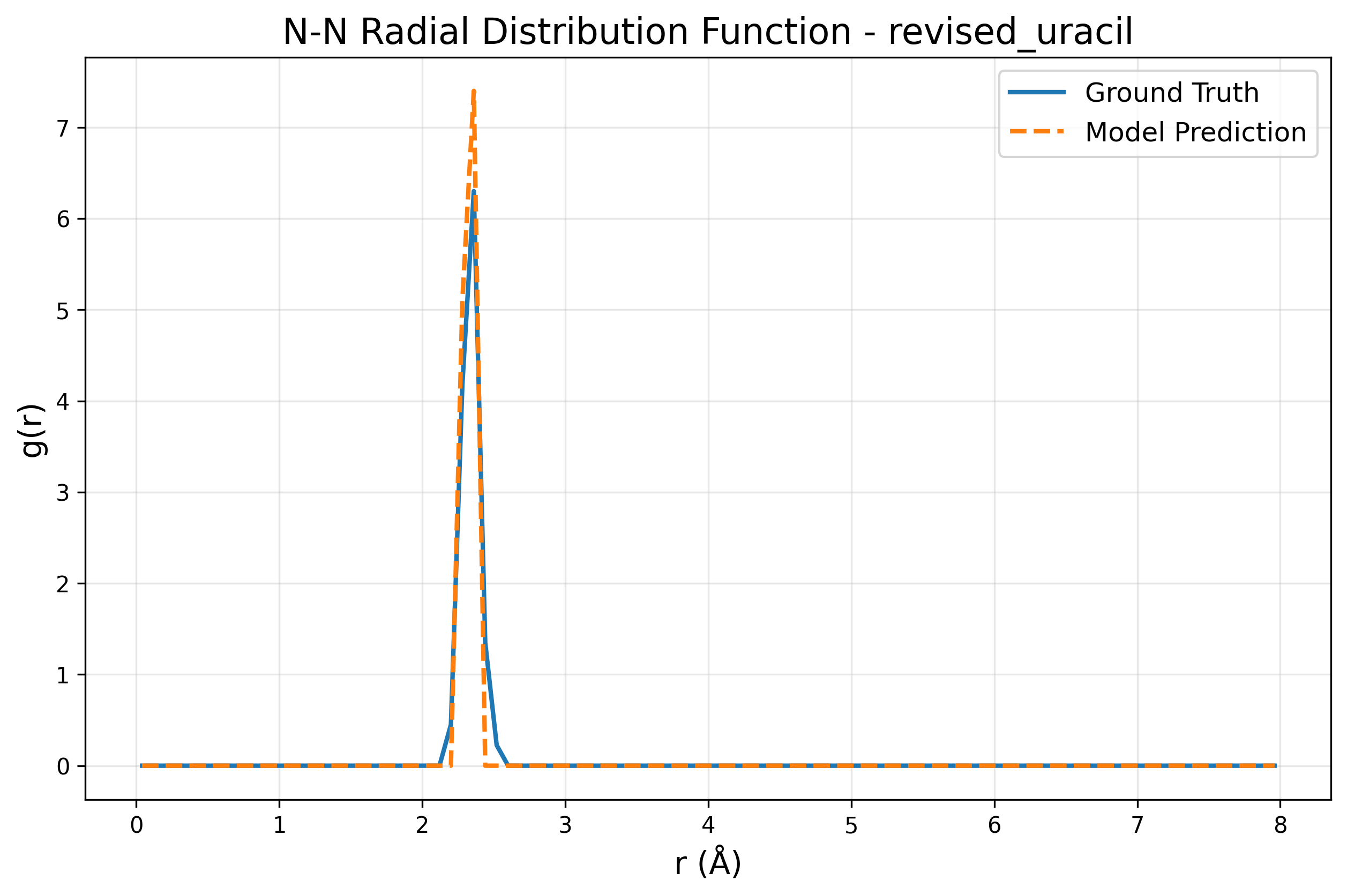}}\\[-0.5em]
    \subfigure[Uracil: N--O]{\includegraphics[width=0.24\linewidth]{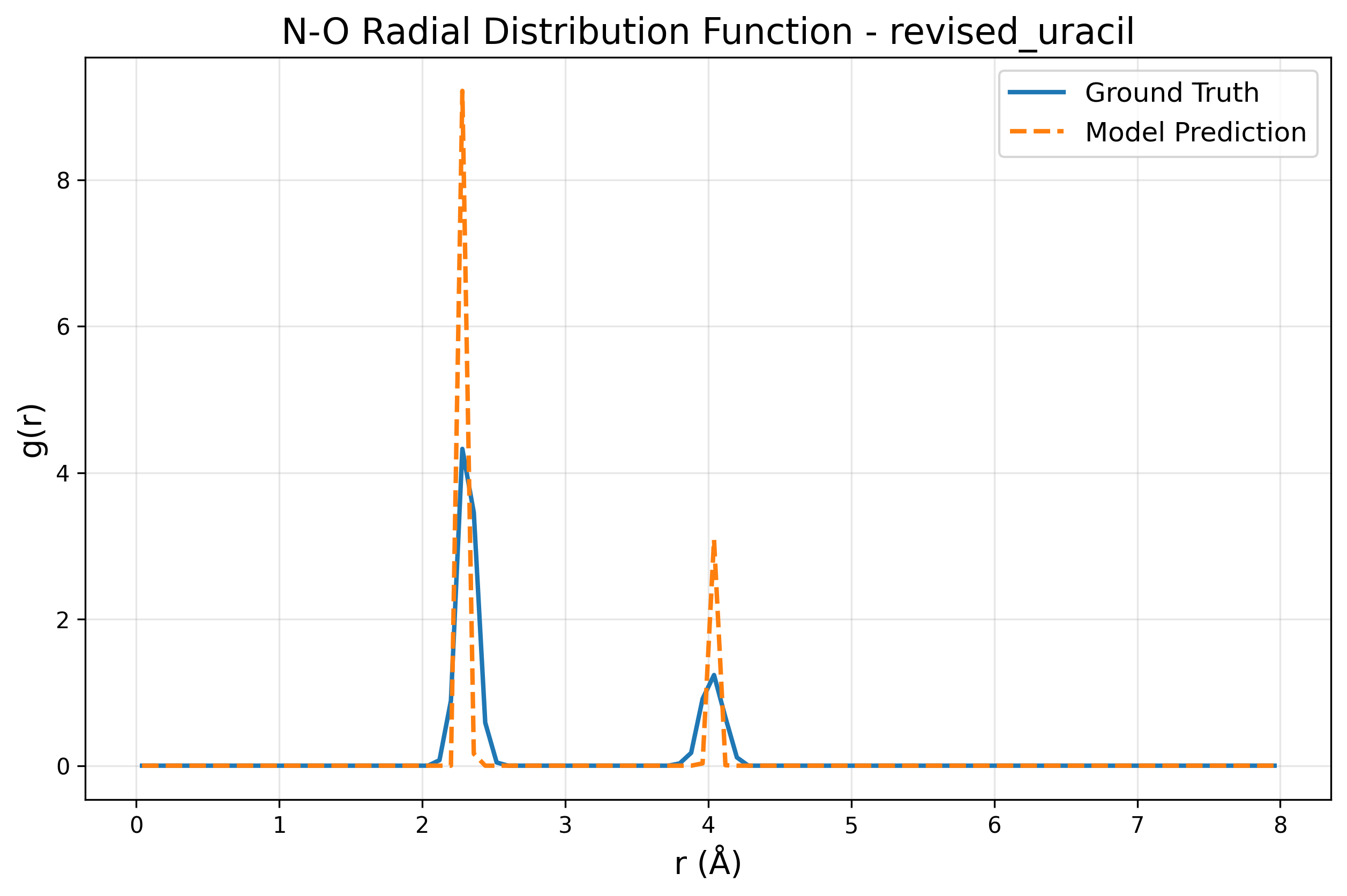}}
    \subfigure[Uracil: O--O]{\includegraphics[width=0.24\linewidth]{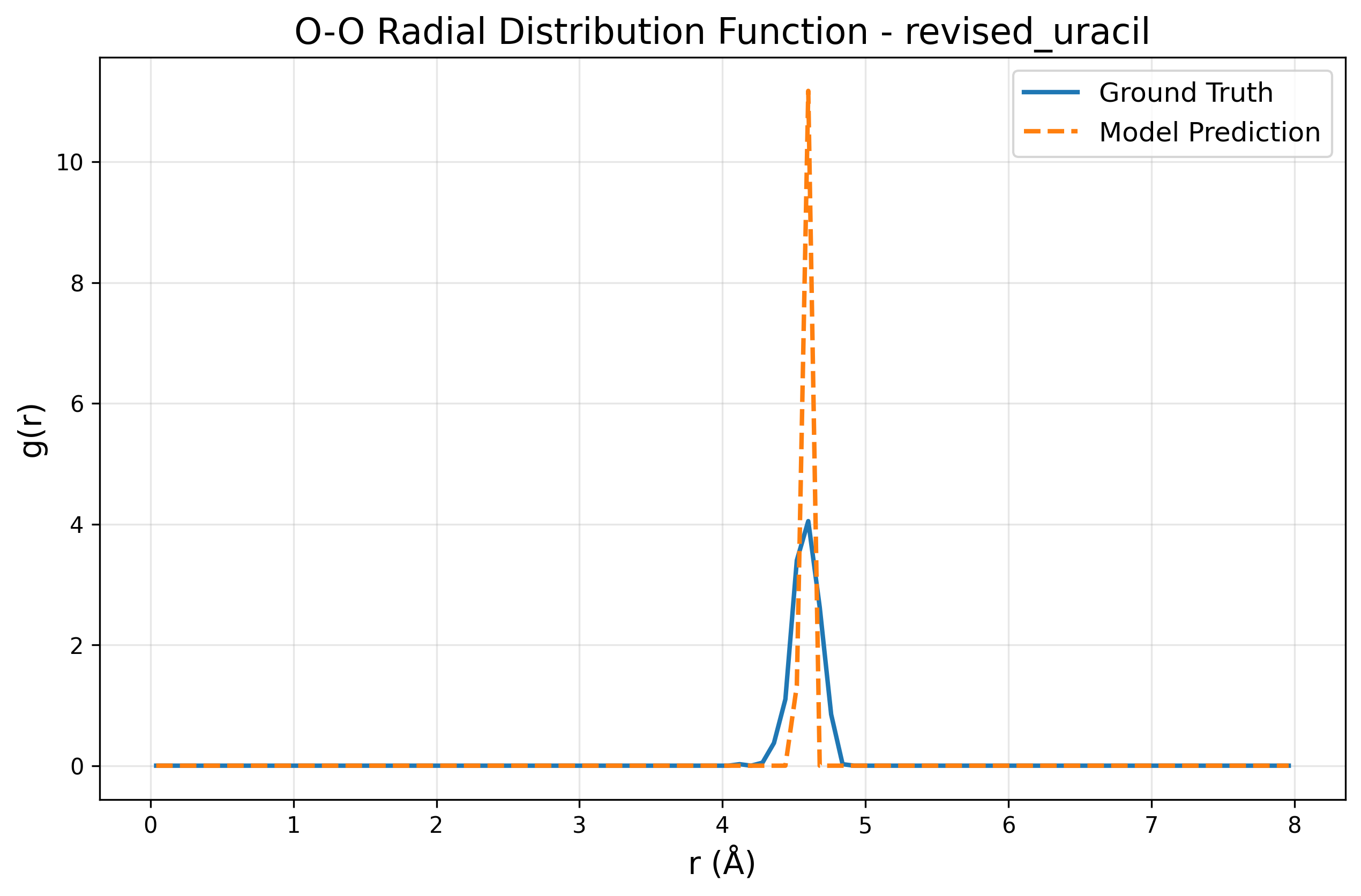}}
    \caption{Element–specific RDFs $g_{\alpha\beta}(r)$ for uracil showing six unique heavy‐atom pairs.  GF--NODE reproduces both the peak positions and intensities of the reference curves.}
    \label{fig:rdf_uracil}
\end{figure*}

\begin{figure*}[ht]
    \centering
    \subfigure[Salicylic: C--C]{\includegraphics[width=0.32\linewidth]{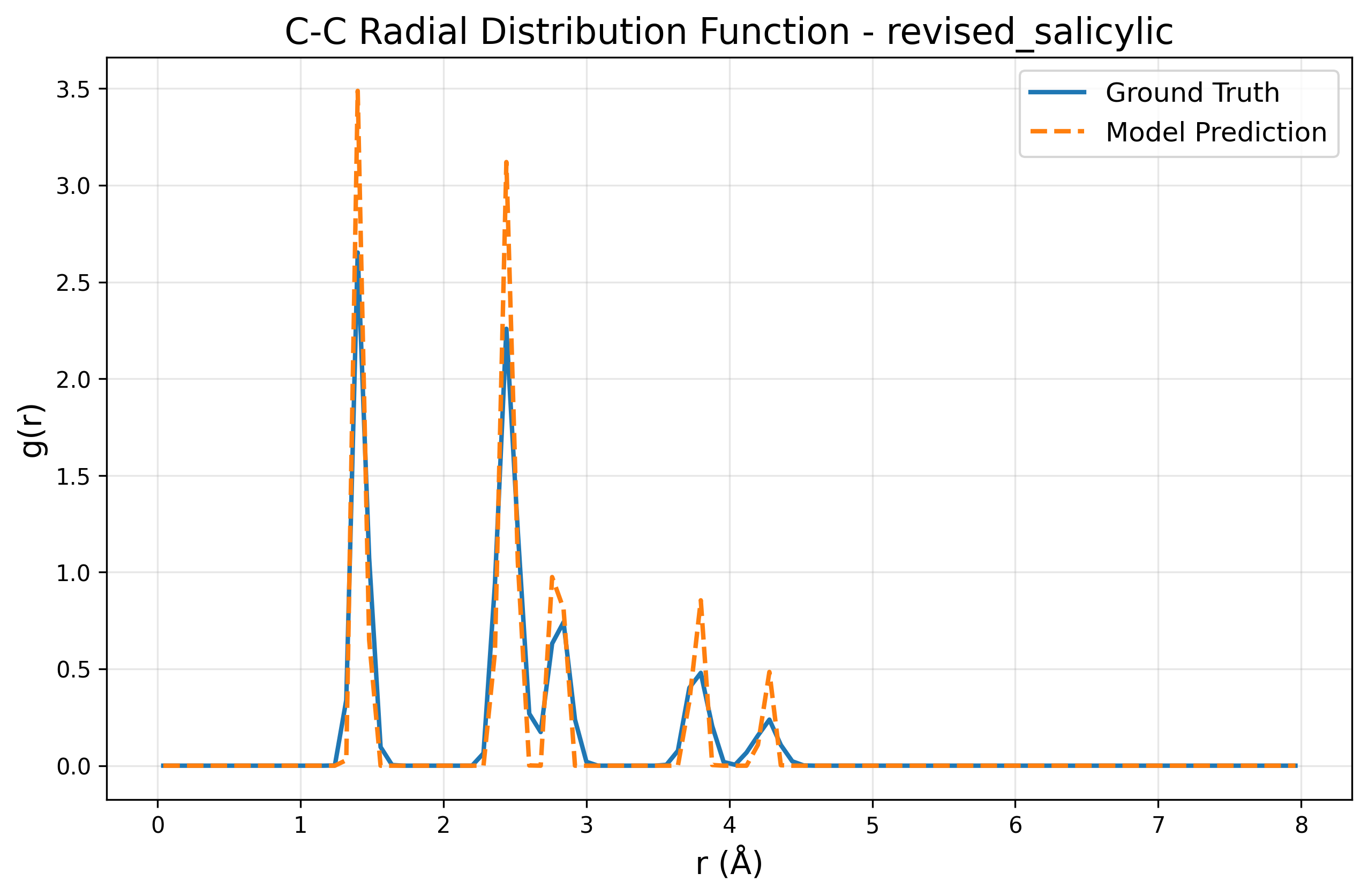}}
    \subfigure[Salicylic: C--O]{\includegraphics[width=0.32\linewidth]{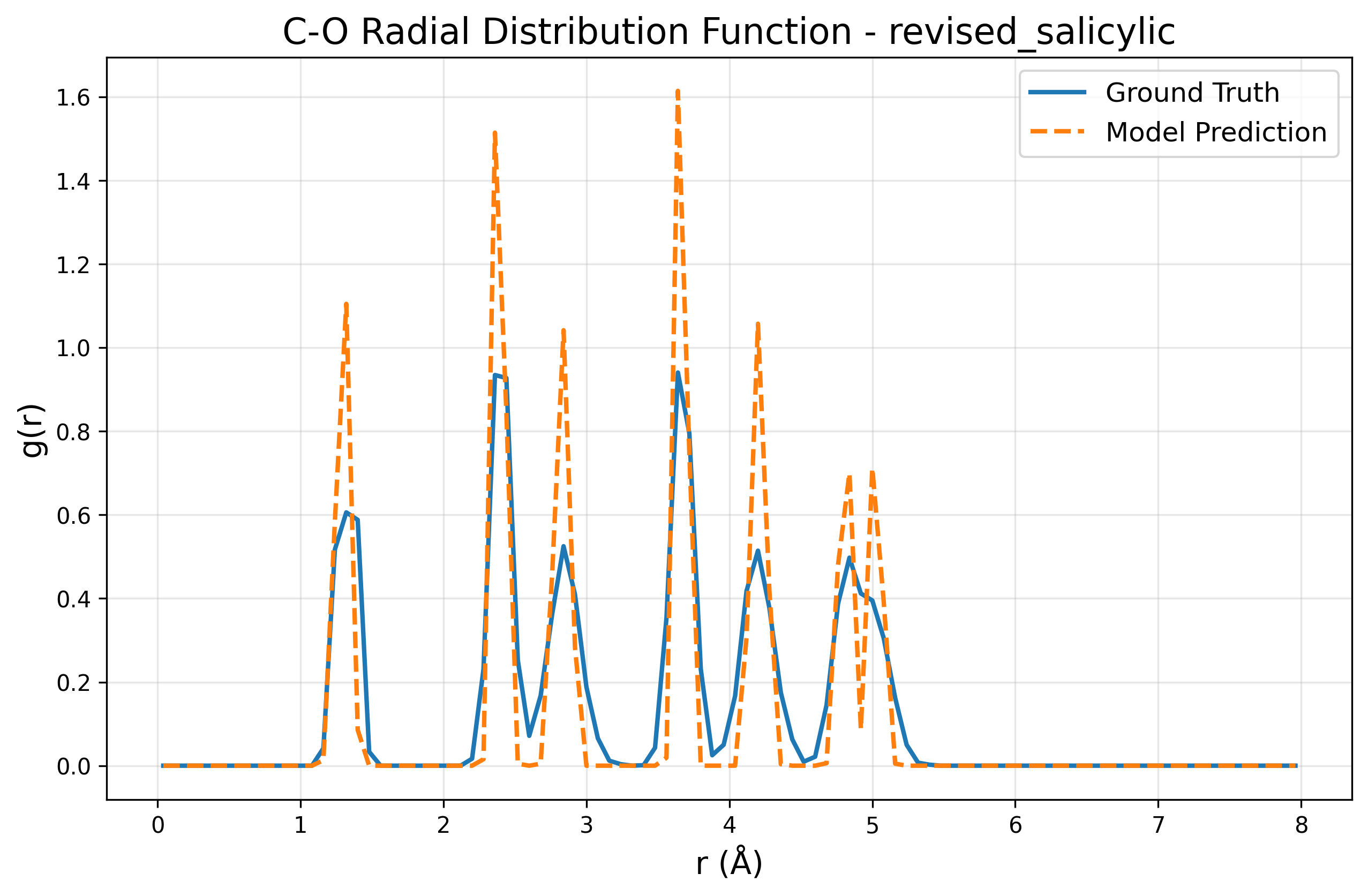}}
    \subfigure[Salicylic: O--O]{\includegraphics[width=0.32\linewidth]{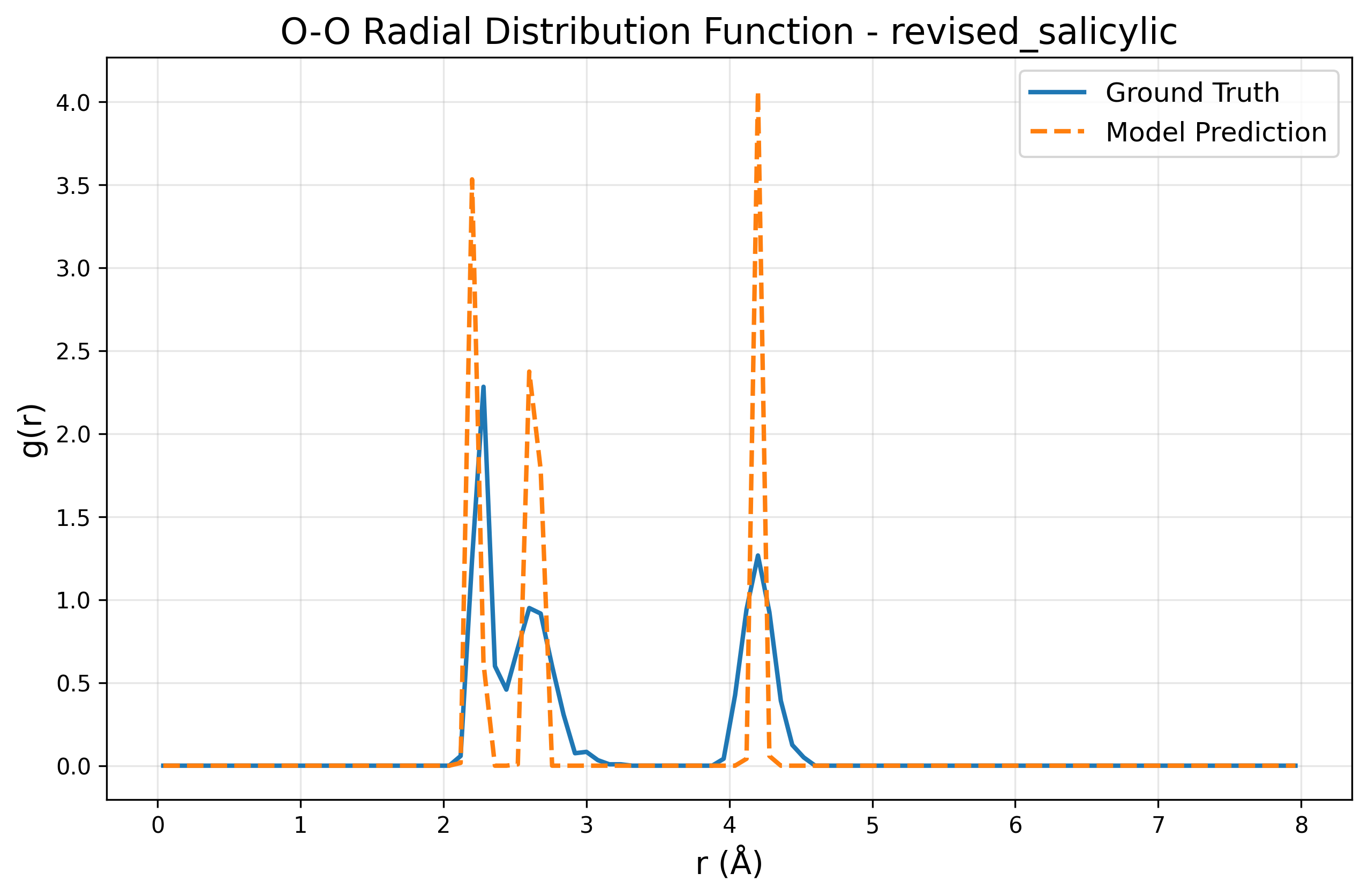}}
    \caption{Element–specific RDFs for salicylic acid.  }
    \label{fig:rdf_salicylic}
\end{figure*}

\begin{figure*}[ht]
    \centering
    \subfigure[Naphthalene: C--C]{\includegraphics[width=0.32\linewidth]{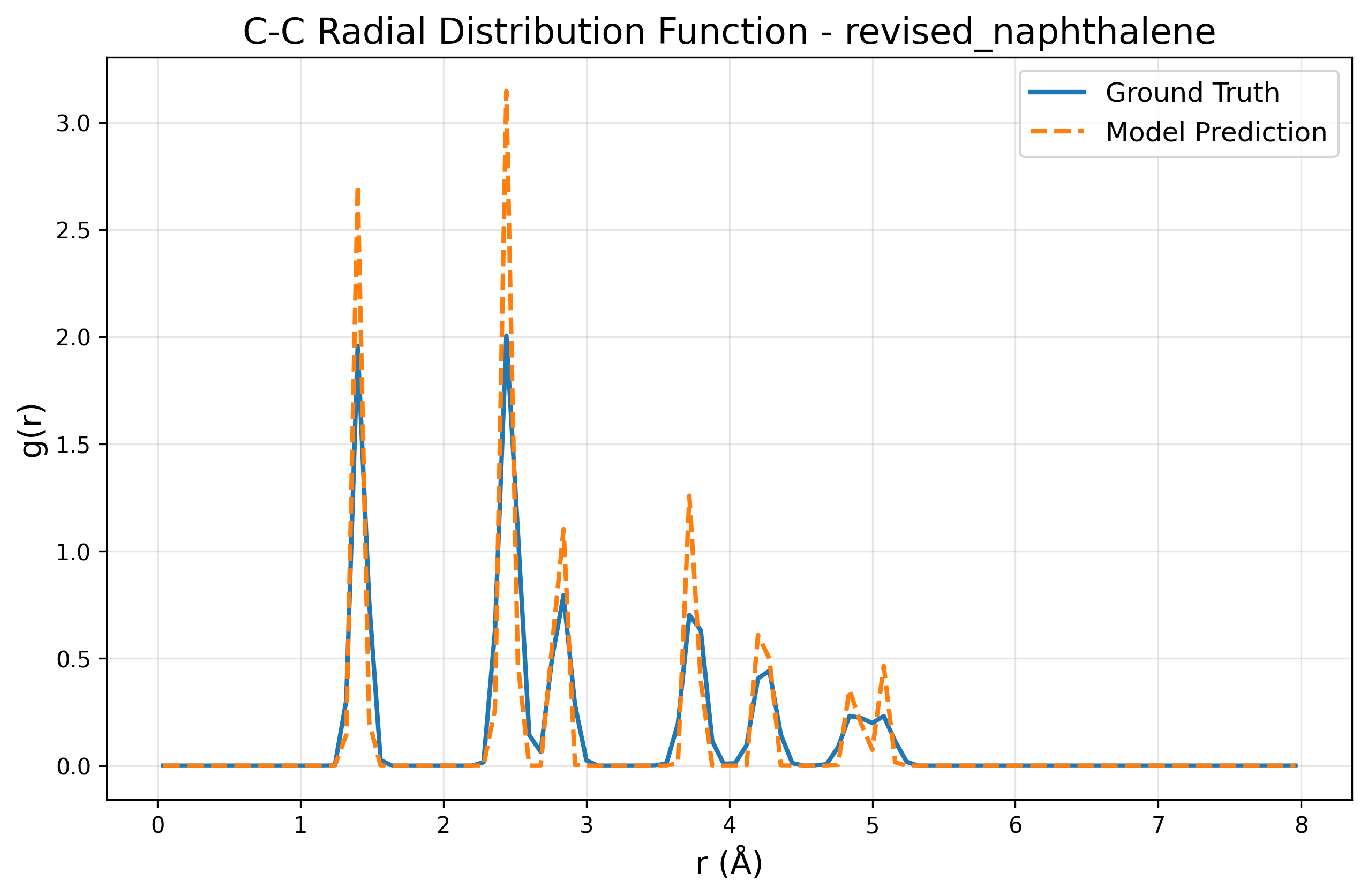}}
    \subfigure[Buckyball catcher: C--C]{\includegraphics[width=0.32\linewidth]{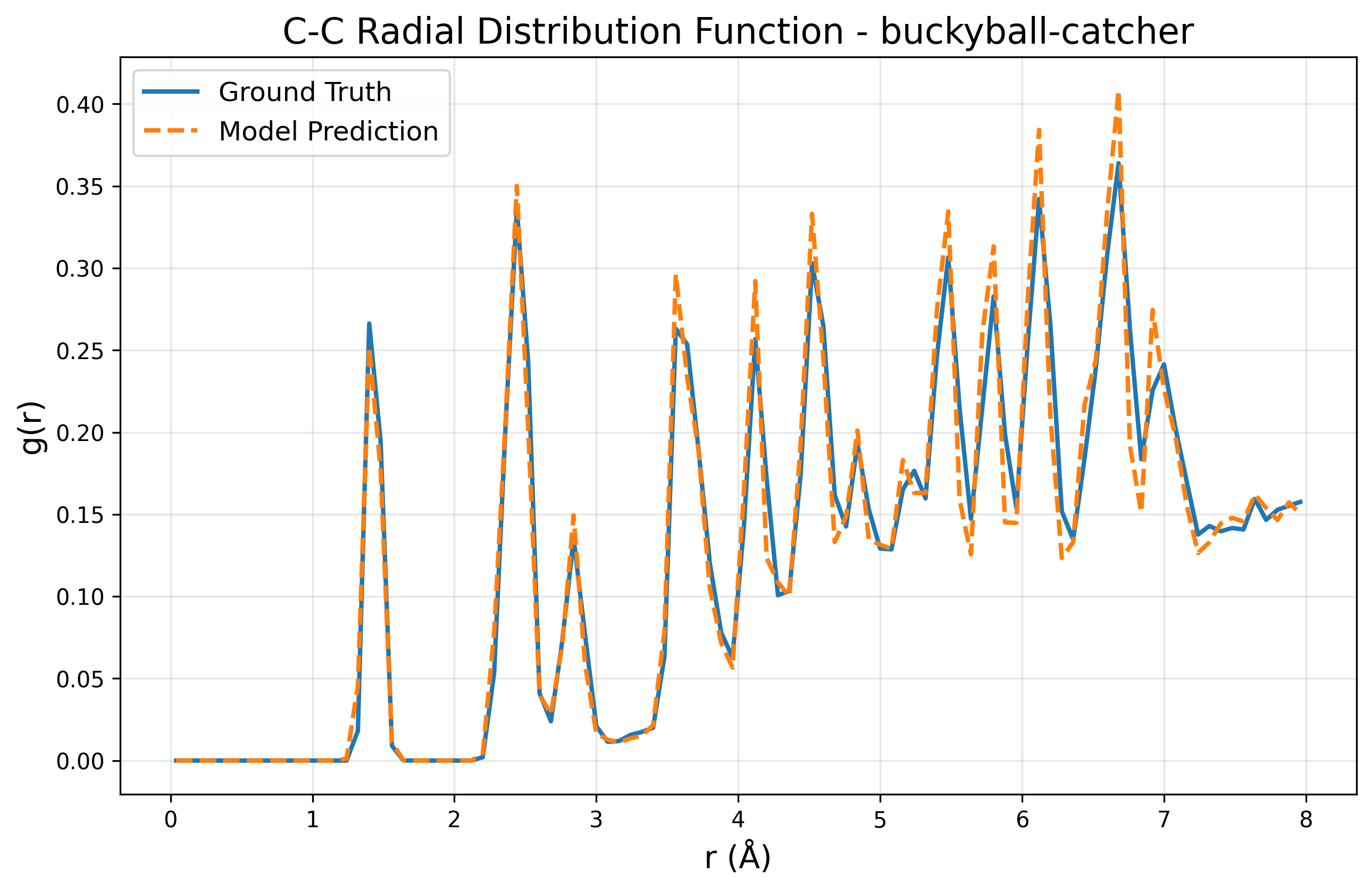}}
    \subfigure[DW NT: C--C]{\includegraphics[width=0.32\linewidth]{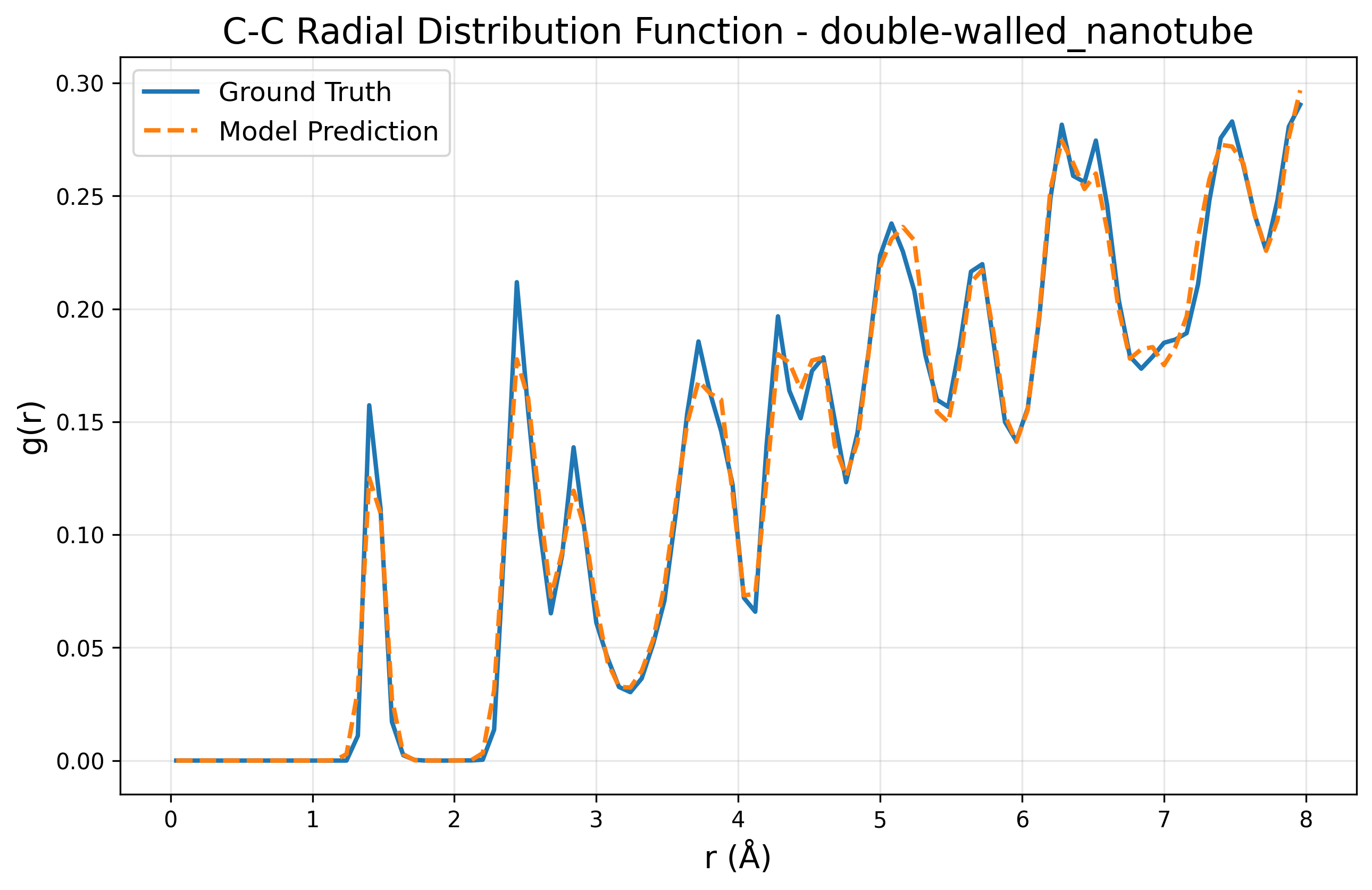}}
    \caption{Carbon--carbon RDFs for three purely carbonaceous systems.  GF--NODE captures the first–shell peak ($\approx$1.4~\AA) and the longer–range oscillations characteristic of aromatic stacking and nanotube wall spacing.}
    \label{fig:rdf_carbons}
\end{figure*} 

\FloatBarrier

\end{document}